%% file: main.tex
\documentclass{article}
\newcommand{\PaperPreprint}{} 
\PassOptionsToPackage{table}{xcolor}
\usepackage{styles/fancyhdr,styles/natbib}
\makeatletter
\@namedef{ver@fancyhdr.sty}{}
\makeatother
\usepackage{styles/iclr2027_conference,times}
\input{styles/math_commands.tex}

\usepackage{amsthm}
\usepackage{amssymb}
\usepackage{algorithm}
\usepackage{algpseudocode}
\usepackage{booktabs}   
\usepackage{graphicx}   
\usepackage{subcaption}
\usepackage{enumitem}
\usepackage{makecell}
\usepackage{multirow}
\usepackage{xcolor}
\usepackage[most]{tcolorbox}
\usepackage{wrapfig}

\usepackage{hyperref}
\usepackage{url}
\usepackage[T1]{fontenc}
\tcbuselibrary{skins,breakable}

\makeatletter
\renewcommand\hyper@natlinkbreak[2]{#1}
\makeatother

\newtheorem{theorem}{Theorem}
\newtheorem{lemma}{Lemma}
\newtheorem{definition}{Definition}
\newtheorem{corollary}{Corollary}

\newtheorem{assumption}{Assumption}

\newcommand{\meanstd}[2]{%
  \ensuremath{#1_{\scriptscriptstyle\pm #2}}}
\newcommand{\bestmeanstd}[2]{%
  \ensuremath{\mathbf{#1}_{\scriptscriptstyle\pm #2}}}

\definecolor{PromptHeader}{HTML}{6C655E}
\definecolor{PromptText}{HTML}{5E5852}
\definecolor{PromptLine}{HTML}{C9D1D9}
\definecolor{PromptMuted}{HTML}{536170}

\newtcolorbox{promptbox}{
  enhanced,
  colback=PromptText!2,
  colframe=PromptLine,
  colbacktitle=PromptHeader,
  coltitle=white,
  boxrule=0.5pt,
  arc=1.5mm,
  left=8pt,
  right=8pt,
  top=6pt,
  bottom=6pt,
  fonttitle=\bfseries,
  title={Prompt template},
  before skip=7pt,
  after skip=7pt
}

\title{Learning Perturbation Robust Policies for LLM Agents with Stable Optimization}

\input{authors}

\input{paper_version}
\begin{document}

\maketitle
\begin{abstract}
Reinforcement learning (RL) has become an effective post-training paradigm for long-horizon large language model (LLM) agents. However, we find that the resulting policies can be sensitive to various policy perturbations, such as hidden-state noise, pruning, and quantization. In this work, we study how to improve perturbation robustness during policy optimization. We first introduce the notion of a perturbation robust policy and analyze conditions under which perturbed policy updates preserve stable monotonic improvement. Based on this analysis, we introduce Stable Perturbation-Robust Policy Optimization (SPrPO), which applies adaptive and sensitivity-aware perturbations during RL training. We evaluate SPrPO on ALFWorld and WebShop and conduct systematic experiments across multiple perturbation types and scales, showing improved perturbation robustness while maintaining stable policy optimization.
\end{abstract}

\input{sections/introduction}
\input{sections/related_work}
\input{sections/background}
\input{sections/theoretical_analysis}

\input{sections/method}

\input{sections/experiments}
\input{sections/conclusion}

\newpage

\input{statements/ai_use_statement}
\input{statements/reproducibility_statement}

\input{main.bbl}
\appendix
\newpage
\input{sections/appendix}

\end{document}

%% file: styles/math_commands.tex
\usepackage{amsmath,amsfonts,bm}

\def\eqref#1{equation~\ref{#1}}

\def\1{\bm{1}}

\DeclareMathAlphabet{\mathsfit}{\encodingdefault}{\sfdefault}{m}{sl}
\SetMathAlphabet{\mathsfit}{bold}{\encodingdefault}{\sfdefault}{bx}{n}



%% file: authors.tex
\author{%
Pengxin Wang, Yuanzhe LI, Yuxin Ren\\
\textbf{Huanrui Yang, Jingdi Chen}\\
Department of Electrical and Computer Engineering\\
University of Arizona
}

%% file: paper_version.tex
\ifdefined\PaperPreprint
  \iclrfinalcopy
  \let\PaperOriginalMakeTitle\maketitle
  \renewcommand{\maketitle}{%
    \PaperOriginalMakeTitle
    \lhead{Preprint}%
    \ifdefined\AuthorMetadataPending
      \begin{center}
        \small\bfseries Draft preprint: author metadata is incomplete.\\
        Complete authors.tex before distributing this version.
      \end{center}
    \fi
  }
\else
  \iclrfinalfalse
\fi

%% file: sections/introduction.tex
\section{Introduction}

RL has become an effective paradigm for post-training LLM agents on long-horizon decision making, where the model generates textual actions in interactive environments such as web navigation and embodied household tasks~\citep{agile,webrl,gigpo}. In practice, the deployed policy is not always the exact policy that was trained: serving costs push practitioners to prune or quantize the model, and inference may run under a different numerical precision. On static, single-turn tasks such as language modeling and question answering, where the model never acts on an environment, such perturbations are largely benign: pruning and post-training quantization compress a model substantially while retaining much of its performance~\citep{slicegpt,llmpruner,optq}. This tolerance does not carry over to RL-trained agents. As shown in Figure~\ref{fig:motivation}, pruning only $20\%$ of the feed-forward network (FFN) channels nearly collapses performance on WebShop~\citep{webshop}. This gap makes the robustness of RL-trained LLM agents to deployment-time perturbations an important concern for their practical use.

\begin{wrapfigure}[19]{r}{0.50\textwidth}
    \centering
    \includegraphics[width=\linewidth]{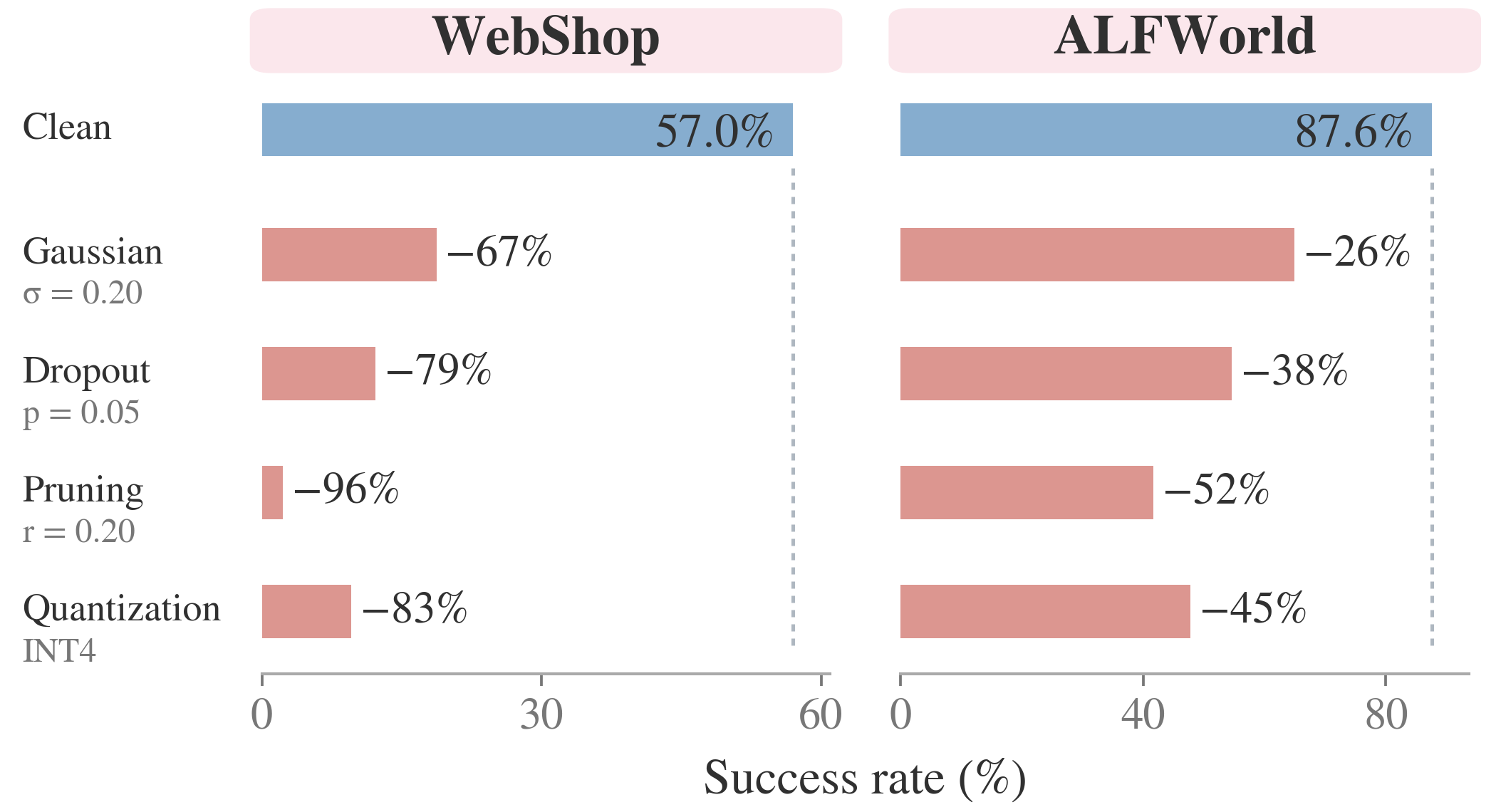}
    \caption{
        LLM agents trained by RL are sensitive to policy perturbations. Even with mild perturbation magnitude, success rates degrade substantially under these perturbations on both WebShop and ALFWorld. $\sigma$ denotes the standard deviation for Gaussian distribution, $p$ denotes the dropout probability and $r$ denotes pruning ratio.
    }
    \label{fig:motivation}
\end{wrapfigure}

A natural approach to improving such robustness is to inject perturbations into the policy during training, an idea widely studied in robust RL on control problems~\citep{RARL, ARRL, flat_reward}. Such methods typically operate on small policy networks trained from scratch, with dense rewards and low-dimensional continuous action spaces, so that injected perturbation has relatively mild effect. In the LLM setting, this idea has recently begun to be explored. NPFT~\citep{tamingweights} injects random noise into quantization-sensitive weights during fine-tuning, flattening the local loss landscape and mitigating the quantization error; related noise injection has also appeared in LLM RL, though there primarily as a source of exploration~\citep{qerl}. For long-horizon LLM agents, robustness to policy perturbation remains largely unstudied, and existing methods usually target one specified type of perturbation, leaving robustness to a general family of perturbations an open and more challenging problem.

In this work, we study how to train LLM agents that remain robust to different types of policy perturbations. The difficulty is specific to the agentic setting: each action must be grounded in the environment and respect strict format constraints, and a single wrong action early in a long-horizon task can derail the whole trajectory. Therefore, perturbation injection requires careful control to preserve stability during training, and a proper perturbation scheme should avoid introducing excessive variation in policy performance. Specifically, we keep perturbations relatively mild along sensitive dimensions, since different dimensions can exhibit markedly different sensitivities to perturbations. Building on this principle, we propose \textbf{Stable Perturbation-Robust Policy Optimization (SPrPO)}, a perturbation-based method for RL training that improves the robustness of LLM agents while preserving training stability. Our main contributions are as follows:
\begin{itemize}[leftmargin=1.2em]
\item We are the first to study perturbation robustness for RL-trained LLM agents and formulate the problem under a general distribution of policy perturbations.

\item Theoretically, we extend the monotonic policy improvement analysis of TRPO~\citep{TRPO} to the perturbation setting and characterize a sufficient condition for stable policy improvement. We further establish that limit points of the resulting updates are perturbation-robust.

\item Guided by the theoretical analysis, we derive a practical condition for controlling perturbations during policy optimization and develop SPrPO.

\item We test SPrPO on ALFWorld and WebShop under different kinds of policy perturbations, such as Gaussian noise, dropout, structured and unstructured pruning, and quantization. Across these settings, SPrPO exhibits more stable optimization during training and better preserves task performance under perturbation.
\end{itemize}

%% file: sections/related_work.tex
\section{Related Work}
We briefly review prior work on robust training, with more detailed discussion in Appendix~\ref{app:more_related_works}.

\paragraph{Robust Policy in Reinforcement Learning.}
Originating from control theory, the RL community has developed the notion of robust policies as maintaining reliable performance under disturbances~\citep{robust_rl}. A broad line of work pursues this goal by deliberately introducing perturbations during training, either adversarially to improve worst-case performance~\citep{RARL,ARRL,shapo} or stochastically to improve expected performance~\citep{policy_smoothing,param_noise}. Stochastic perturbations avoid explicit inner optimization and provide a simpler training mechanism. Our method takes the same idea of injecting perturbations during training, but focuses on perturbations to the policy itself rather than noise in actions or transition dynamics, which is more appropriate for LLM agents.

\paragraph{Perturbation-based Training for LLMs.}
Introducing perturbations during language-model training has been widely explored across language understanding, instruction finetuning~\citep{neftune}, and reinforcement learning. Prior work constructs adversarial perturbations in input representations~\citep{freelb,ptp} or through a separate low-rank parameter branch~\citep{bilora}. Recent RL methods also use stochastic perturbations to promote exploration and diversify reasoning trajectories~\citep{noisyrollout,ncgrpo}. Furthermore, perturbation-based training shows improvement on robustness to quantization, through adaptation to quantization errors~\citep{tamingweights} or adaptive control of quantization-induced noise during RL~\citep{qerl}. Our work extends perturbation-based training to long-horizon LLM agents, where small policy deviations can accumulate through sequential interactions and affect future decisions. Therefore, we explicitly account for training stability, and adopt a more prudent perturbation design in practice.

%% file: sections/background.tex
\section{Background and Problem Formulation}
\label{sec:background}
In this section, we first define the notion of perturbation robustness in Subsec.~\ref{subsec:perturbation_robustness} and then review the classical TRPO lower bound and monotonic improvement theory in Subsec.~\ref{subsec:trpo_bound}.

\subsection{Problem Setup and Perturbation-Robust Policy}
\label{subsec:perturbation_robustness}
Consider a discounted Markov decision process (MDP) $\mathcal M=(\mathcal S,\mathcal A,P,r,\rho_0,\gamma)$, where $\mathcal S$ denotes the state space, $\mathcal A$ is the action space, $P(\cdot\mid s,a)$ is the transition kernel, $r:\mathcal S\times\mathcal A\to\mathbb R$ is a bounded reward function, $\rho_0$ is the initial-state distribution, and $\gamma\in(0,1)$ is the discount factor. In our setting, the agent is an LLM parameterized by $\theta$ and governed by a policy $\pi_\theta(\cdot\mid s)$. At time step $t$, the agent observes a state $s_t\in\mathcal S$, which contains the task context and interaction history available to the agent, and autoregressively generates a textual action $a_t\in\mathcal A$ according to $\pi_\theta(\cdot\mid s_t)$. The environment then returns a reward $r_t=r(s_t,a_t)$ and transitions to the next state $s_{t+1}\sim P(\cdot\mid s_t,a_t)$. The policy $\pi_\theta$ and transition kernel $P$ induce a trajectory $\tau=(s_0,a_0,s_1,a_1,\ldots)$, and the expected discounted return of $\pi_\theta$ is defined as
\begin{equation}
J(\pi_\theta) := \mathbb{E}_{\tau\sim\pi_\theta}
\left[ \sum_{t=0}^{\infty} \gamma^t r_t \right].
\label{eq:rl-objective}
\end{equation}

Let $\Xi$ denote the space of perturbations and $\mathcal P_\xi$ the perturbation operator associated with $\xi\in\Xi$. Applying $\mathcal P_\xi$ to a policy $\pi_\theta$ induces the perturbed policy $\pi_\theta^\xi := \mathcal P_\xi(\pi_\theta)$. In practice, such perturbations can be applied to policy parameters or hidden representations, including Gaussian noise, random dropping, pruning, and quantization. Let $\rho(\xi)$ measure the magnitude of a perturbation $\xi$, and let $\Xi_\delta := \{\xi\in\Xi : \rho(\xi)\le\delta\}$ denote the set of perturbations within radius $\delta$. We define perturbation robust policy as follows.

\begin{definition}[Perturbation robustness]
\label{def:perturbation_robustness}
Let $\nu=\nu_\delta$ be a perturbation distribution supported on $\Xi_\delta$, and denote the expected perturbed return by
$\bar J_{\nu_\delta}(\theta)
:=
\mathbb E_{\xi\sim\nu_\delta}[J(\pi_\theta^\xi)]$.
For $\epsilon\ge0$ and $\beta\in[0,1]$, a policy $\pi_\theta$ is
$(\delta,\epsilon,\beta)$-probabilistic perturbation robust under
$\nu_\delta$ if
\begin{equation}
\Pr_{\xi\sim\nu_\delta}
\left( J(\pi_\theta^\xi) \ge \bar J_{\nu_\delta}(\theta) - \epsilon \right)
\ge 1-\beta.
\label{eq:robustness}
\end{equation}
Moreover, it is $(\delta,\epsilon)$-uniform perturbation robust if for any $\xi\in\Xi_\delta$, we have
$
J(\pi_\theta^\xi)
\ge
\bar J_{\nu_\delta}(\theta)-\epsilon.
$
\end{definition}
Uniform perturbation robustness is stronger, but can be overly restrictive in practice. Instead, probabilistic robustness provides a more practical criterion: under perturbations of magnitude at most $\delta$, we allow a performance degradation larger than $\epsilon$ only with a small probability $\beta$, which is the objective we seek to achieve in this work.

\subsection{TRPO Lower Bound and Monotonic Improvement Property}
\label{subsec:trpo_bound}

Consider one step policy update from $\theta$ to $\theta'$, whose improvement is measured by $\Delta J:=J(\pi_{\theta'})-J(\pi_\theta)$. Classical trust region policy improvement analysis \citep{TrustRegionLearning, TRPO} studies conditions that guarantee positive improvement, i.e., $\Delta J>0$. The following lemma, as a central result in TRPO analysis, provides a lower bound on the policy improvement $\Delta J$.

\begin{lemma}[TRPO lower bound {\cite[Theorem~1]{TRPO}}]
\label{lem:trpo_improvement}
For a policy $\pi$, let
$d_\pi(s)=(1-\gamma)\sum_{t\ge0}\gamma^t\Pr_\pi(s_t=s)$ denote its normalized discounted state visitation distribution, and let $D_{\mathrm{TV}}(p,q)=\frac12\sum_a|p(a)-q(a)|$. For any two policies $\pi,\pi'$ satisfying $\sup_{s,a}|A_\pi(s,a)|\le A_{\max}$, define $\alpha=\sup_s D_{\mathrm{TV}}(\pi(\cdot\mid s),\pi'(\cdot\mid s))$. Then
\begin{equation}
\Delta J(\pi',\pi)
\ge
\frac{1}{1-\gamma}
\mathbb E_{s\sim d_\pi,\,a\sim\pi'(\cdot\mid s)}
[A_\pi(s,a)]
-
\frac{4\gamma A_{\max}}{(1-\gamma)^2}\alpha^2.
\label{eq:trpo_improvement_bound}
\end{equation}
\end{lemma}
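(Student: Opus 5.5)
I will follow the standard Kakade--Langford / Schulman et al.\ argument, which goes through a coupling between $\pi$ and $\pi'$.

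\textbf{Performance difference identity.} The starting point is
\[
J(\pi')-J(\pi)=\frac{1}{1-\gamma}\mathbb E_{s\sim d_{\pi'},\,a\sim\pi'(\cdot\mid s)}[A_\pi(s,a)].
\]
To prove it, telescope $\sum_t\gamma^t\bigl(r_t+\gamma V_\pi(s_{t+1})-V_\pi(s_t)\bigr)$ along trajectories of $\pi'$. Taking expectations gives $J(\pi')-J(\pi)=\mathbb E_{\tau\sim\pi'}\bigl[\sum_t\gamma^tA_\pi(s_t,a_t)\bigr]$. Boundedness of $r$ justifies exchanging sum and expectation.

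\textbf{Surrogate and error term.} Define the surrogate
\[
L_\pi(\pi')=\frac{1}{1-\gamma}\mathbb E_{s\sim d_\pi,\,a\sim\pi'}[A_\pi(s,a)].
\]
It is the exact gain with $d_{\pi'}$ replaced by $d_\pi$. Write $\bar A(s)=\mathbb E_{a\sim\pi'(\cdot\mid s)}A_\pi(s,a)$. Then
\[
\Delta J-L_\pi(\pi')=\sum_t\gamma^t\Bigl(\mathbb E_{s_t\sim\pi'}\bar A(s_t)-\mathbb E_{s_t\sim\pi}\bar A(s_t)\Bigr).
\]
The remaining work is to bound this difference.

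\textbf{Coupling.} Since $\mathbb E_{a\sim\pi}A_\pi(s,a)=0$, we have
\[
\bar A(s)=\mathbb E_{(a,a')}\bigl[A_\pi(s,a')-A_\pi(s,a)\bigr]
\]
under a maximal coupling of $\pi(\cdot\mid s)$ and $\pi'(\cdot\mid s)$. This coupling disagrees with probability at most $\alpha$. Hence $|\bar A(s)|\le 2\alpha A_{\max}$.

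Now couple the two trajectories with the same coupling at every step and shared environment randomness. Let $n_t$ be the number of disagreements before time $t$. On the event $n_t=0$ the states $s_t$ coincide, so the difference of expectations at time $t$ only picks up the event $n_t>0$. That event has probability at most $1-(1-\alpha)^t$. The contribution at time $t$ is therefore bounded by
\[
2\cdot 2\alpha A_{\max}\bigl(1-(1-\alpha)^t\bigr)\le 4\alpha A_{\max}\cdot t\alpha .
\]

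\textbf{Summing.} Summing over $t$ and using $\sum_t\gamma^t t=\gamma/(1-\gamma)^2$ gives
\[
|\Delta J-L_\pi(\pi')|\le \frac{4\gamma A_{\max}\alpha^2}{(1-\gamma)^2}.
\]
This is exactly \eqref{eq:trpo_improvement_bound}.

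\textbf{Main obstacle.} The delicate step is the coupling argument. The statement only gives a total-variation bound per state, so I must construct joint trajectory couplings in which disagreement events are adapted to the history. I will justify $\Pr(n_t=0)\ge(1-\alpha)^t$ by induction, using that the per-state disagreement probability is at most $\alpha$ uniformly in $s$. In continuous or infinite action spaces, the maximal coupling needs measurability. I would handle this by noting that $D_{\mathrm{TV}}$ as written assumes countable $\mathcal A$, which holds for text actions. Alternatively, this entire chain is \cite[Theorem~1]{TRPO} and can be cited directly, which the paper already does.
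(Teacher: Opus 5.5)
Your proposal is correct and is essentially the argument behind the cited result. The paper gives no proof of its own for this lemma and simply invokes \cite[Theorem~1]{TRPO}, whose proof goes exactly through the performance-difference identity, the per-state coupling bound $|\bar A(s)|\le 2\alpha A_{\max}$, and the trajectory coupling with $\Pr(n_t>0)\le 1-(1-\alpha)^t$; the only cosmetic difference is that you relax $1-(1-\alpha)^t\le t\alpha$ and use $\sum_t t\gamma^t=\gamma/(1-\gamma)^2$, whereas the original sums the geometric series exactly to $4\gamma A_{\max}\alpha^2/\bigl((1-\gamma)(1-\gamma(1-\alpha))\bigr)$ before loosening to the same constant.
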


Lemma~\ref{lem:trpo_improvement} gives the standard trust region interpretation of policy improvement. Given the current policy $\pi_\theta$, denote the first term by $\mathcal L(\theta')$, where $\theta'$ parameterizes a candidate updated policy. This term measures the surrogate improvement relative to $\pi_\theta$, while the second term penalizes excessive policy deviation. Therefore, as long as this deviation is controlled, increasing $\mathcal L(\theta')$ guarantees monotonic improvement in the true return. 

%% file: sections/theoretical_analysis.tex
\section{Theoretical Analysis}

In this section, we extend the classical monotonic improvement analysis to the perturbation setting. Subsec.~\ref{subsec:perturbed_policy_improvement} establishes stable policy improvement under perturbations and derives a practical condition for perturbation allocation. Subsec.~\ref{subsec:stable_convergence} further shows that, under suitable regularity conditions, the resulting policy updates admit perturbation-robust limit points.

\subsection{Policy Improvement under Perturbations}
\label{subsec:perturbed_policy_improvement}
For perturbation $\xi$, let $\pi_\theta^\xi$ and $\pi_{\theta'}^\xi$ denote the perturbed policies before and after the update, respectively. We define the perturbed policy improvement as
\begin{equation}
\Delta J_\xi
=
\Delta J(\pi_{\theta'}^\xi;\pi_\theta^\xi)
:=
J(\pi_{\theta'}^\xi)-J(\pi_\theta^\xi).
\label{eq:pertubed_policy_improve}
\end{equation}
The quantity $\Delta J_\xi$ therefore describes how the benefit of the update varies with perturbation, with $\Delta J_0=\Delta J$ recovering the standard policy improvement. Now we define the expected perturbed surrogate improvement as follows.

\begin{definition}[Surrogate for perturbed policy improvement]
\label{def:perturbed_surrogate}
For the current policy parameter $\theta$ and a candidate updated parameter $\theta'$, we define the policy improvement surrogate under a perturbation $\xi$ as
\begin{equation}
\mathcal L_\xi(\theta')
:=
\frac{1}{1-\gamma}
\mathbb E_{\substack{
s\sim d_{\pi_\theta^\xi}\\
a\sim\pi_\theta^\xi(\cdot\mid s)}}
\left[
\frac{\pi_{\theta'}^\xi(a\mid s)}
{\pi_\theta^\xi(a\mid s)}
A_{\pi_\theta^\xi}(s,a)
\right].
\label{eq:perturbed_surrogate_single}
\end{equation}
Then $\mathcal L_\nu(\theta')
:=
\mathbb E_{\xi\sim\nu}[\mathcal L_\xi(\theta')]$ denotes the expected policy improvement surrogate, and the corresponding policy deviation is $\alpha_\xi(\theta')
:=
\sup_s
D_{\mathrm{TV}}\!\left(
\pi_\theta^\xi(\cdot\mid s),
\pi_{\theta'}^\xi(\cdot\mid s)
\right)$.
\end{definition}

We emphasize that $\mathcal L_\nu(\theta')$ is estimated under the state visitation distribution of the perturbed current policy $\pi_\theta^\xi$, so optimizing it promotes policy improvement in expectation over perturbations. However, a positive expected improvement does not preclude degradation under a non-negligible subset of perturbations, as the realized improvement can vary substantially across perturbation realizations. Therefore, we seek a stronger stability guarantee: under the perturbation distribution, the updated policy should improve with high probability.

\begin{theorem}[Stable policy improvement]
\label{thm:general_perturbation_improvement}
Suppose that $\sup_{s,a}|A_{\pi_\theta^\xi}(s,a)|\le A_{\max}$ for $\nu$-almost every $\xi$, and that $\operatorname{Var}_\nu[\Delta J_\xi]\le v_\nu<\infty$. Define the expected improvement lower bound
\begin{equation}
m_\nu
:=
\mathcal L_\nu(\theta')
-
\frac{4\gamma A_{\max}}{(1-\gamma)^2}
\mathbb E_{\xi\sim\nu}[\alpha_\xi^2].
\label{eq:mean_improvement_margin}
\end{equation}
Then, for any $\beta\in(0,1)$,
\begin{equation}
\Pr_{\xi\sim\nu}\!\left(
\Delta J_\xi
\ge
m_\nu
-
\sqrt{\frac{1-\beta}{\beta}\,v_\nu}
\right)
\ge
1-\beta.
\label{eq:general_probabilistic_improvement}
\end{equation}
Consequently, if $m_\nu\ge\sqrt{(1-\beta)v_\nu/\beta}$, then $\Pr_{\xi\sim\nu}(\Delta J_\xi\ge0)\ge1-\beta$.
\end{theorem}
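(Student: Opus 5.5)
The plan has two steps: bound the mean of $\Delta J_\xi$ from below by applying Lemma~\ref{lem:trpo_improvement} perturbation by perturbation, then control the lower tail with a one-sided Chebyshev (Cantelli) inequality. The constant $\sqrt{(1-\beta)/\beta}$ is exactly what Cantelli's inequality produces, which is why I pick it over two-sided Chebyshev.

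\textbf{Step 1 (mean bound).} Fix $\xi$ in the full-measure set where the advantage bound holds. Apply Lemma~\ref{lem:trpo_improvement} with $\pi=\pi_\theta^\xi$ and $\pi'=\pi_{\theta'}^\xi$. By importance sampling,
\[
\mathbb E_{s\sim d_{\pi_\theta^\xi},\,a\sim\pi_{\theta'}^\xi}\bigl[A_{\pi_\theta^\xi}(s,a)\bigr]
=
\mathbb E_{s\sim d_{\pi_\theta^\xi},\,a\sim\pi_\theta^\xi}\Bigl[\tfrac{\pi_{\theta'}^\xi(a\mid s)}{\pi_\theta^\xi(a\mid s)}A_{\pi_\theta^\xi}(s,a)\Bigr],
\]
so the first term of the lemma is exactly $\mathcal L_\xi(\theta')$. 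This requires absolute continuity of $\pi_{\theta'}^\xi$ with respect to $\pi_\theta^\xi$, which I will assume implicitly, as in Definition~\ref{def:perturbed_surrogate}. The lemma then gives
\[
\Delta J_\xi \ge \mathcal L_\xi(\theta') - \frac{4\gamma A_{\max}}{(1-\gamma)^2}\alpha_\xi^2 .
\]
Taking $\mathbb E_{\xi\sim\nu}$ of both sides gives $\mathbb E_\nu[\Delta J_\xi]\ge m_\nu$. Integrability is not an issue: returns are bounded because $r$ is bounded, and $\alpha_\xi\in[0,1]$. Measurability of $\xi\mapsto\Delta J_\xi$ and $\xi\mapsto\mathcal L_\xi$ is taken as given.

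\textbf{Step 2 (tail bound).} Let $\mu=\mathbb E_\nu[\Delta J_\xi]$ and $\sigma^2=\operatorname{Var}_\nu[\Delta J_\xi]\le v_\nu$. If $v_\nu=0$, then $\Delta J_\xi=\mu\ge m_\nu$ almost surely and there is nothing to prove, so assume $v_\nu>0$. Cantelli's inequality states that for $t>0$,
\[
\Pr(\Delta J_\xi\le \mu-t)\le \frac{\sigma^2}{\sigma^2+t^2}\le \frac{v_\nu}{v_\nu+t^2}.
\]
Choose $t=\sqrt{(1-\beta)v_\nu/\beta}$, so that $v_\nu/(v_\nu+t^2)=\beta$. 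Then $\Pr(\Delta J_\xi>\mu-t)\ge 1-\beta$. Since $\mu\ge m_\nu$, the event $\{\Delta J_\xi>\mu-t\}$ is contained in $\{\Delta J_\xi\ge m_\nu-t\}$, which yields \eqref{eq:general_probabilistic_improvement}. For the consequence: if $m_\nu\ge t$, then $m_\nu-t\ge 0$, so the event $\{\Delta J_\xi\ge m_\nu - t\}$ is contained in $\{\Delta J_\xi\ge 0\}$, and the probability bound carries over.

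\textbf{Main obstacle.} The only delicate point is Step 1: confirming that the lemma's first term matches $\mathcal L_\xi$ for each perturbed pair, and that the constant $A_{\max}$ applies uniformly in $\xi$ so it can be pulled outside the expectation. The first is the importance-sampling identity above. The second is exactly what the almost-every-$\xi$ hypothesis supplies. Step 2 is a direct application of Cantelli's inequality.
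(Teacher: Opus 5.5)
Your proposal is correct and matches the paper's proof. Both apply Lemma~\ref{lem:trpo_improvement} for each fixed $\xi$, average over $\nu$ to get $\mathbb E_\nu[\Delta J_\xi]\ge m_\nu$, and then apply Cantelli's inequality with $t=\sqrt{(1-\beta)v_\nu/\beta}$, using $\mathbb E_\nu[\Delta J_\xi]\ge m_\nu$ for the final event inclusion. Two details you make explicit are left implicit in the paper: the importance-sampling identification of the lemma's first term with $\mathcal L_\xi(\theta')$, and the separate treatment of the degenerate case $v_\nu=0$, where Cantelli with $t>0$ does not apply.
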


\begin{figure*}[t]
\centering
\includegraphics[width=\textwidth]{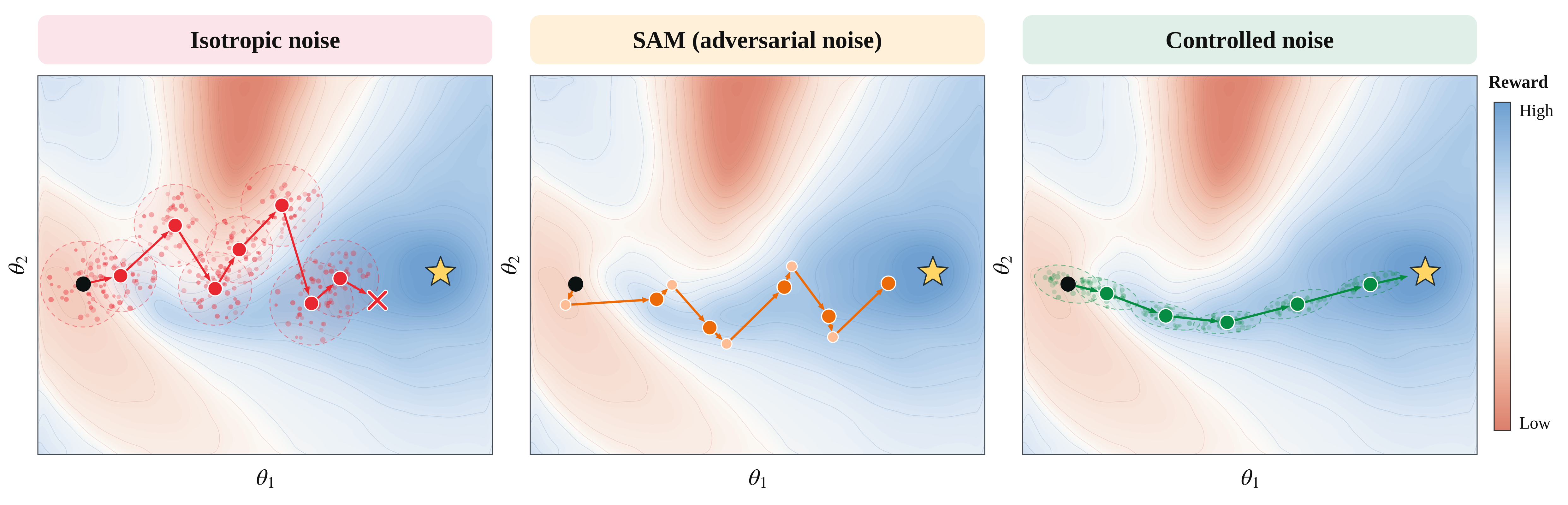}
\caption{Illustrative example: different perturbation strategies can lead to distinct training dynamics during policy optimization. Isotropic noise (left) applies the same perturbation scale across directions, whereas SAM-style perturbations (middle) adversarially push the policy along the sensitive direction. Both can disrupt the optimization trajectory and undermine training stability. Controlled noise (right) instead limits perturbation magnitude along sensitive directions to reduce the induced variance in policy improvement and promote a more stable training process.}
\label{fig:optimization}
\end{figure*}

Theorem~\ref{thm:general_perturbation_improvement} shows that stable policy improvement depends not only on the expected improvement, but also on the variance induced by perturbations. As is illustrated in Figure~\ref{fig:optimization}, perturbations along sensitive directions can divert the optimization trajectory toward low-reward regions, motivating a practical way to better control variance during training. However, the perturbation distribution encountered at deployment is generally unknown, making the improvement variance difficult to characterize directly. Motivated by a common modeling principle in distributionally robust optimization, where tractable nominal distributions are used as reference for uncertain and potentially non-Gaussian disturbances~\citep{dro1,dro2}, we use Gaussian perturbations as a tractable proxy during training. Under this choice, the variance admits an explicit sensitivity-based control through dimension-dependent perturbation scales, leading to the following corollary.

\begin{corollary}[Stable policy improvement under Gaussian perturbations]
\label{cor:gaussian_channel_improvement}
Under the setting of Theorem~\ref{thm:general_perturbation_improvement}, let $\nu=\mathcal N(0,\Sigma)$ with $\Sigma=\operatorname{diag}(\sigma_1^2,\ldots,\sigma_d^2)$. Suppose that $\mathcal L_\xi(\theta')$ is continuously differentiable with respect to $\xi$. For each perturbation dimension $i$, define the sensitivity as 
$S_i(\theta')
:=
\mathbb E_{\xi\sim\nu}
[
|\partial \mathcal L_\xi(\theta')/\partial \xi_i|^2
]
<\infty$, and the supremum policy drift as
$\bar\alpha:=\sup_\xi \alpha_\xi$. If
$\mathcal L_\nu(\theta')
-
\frac{4\gamma A_{\max}}{(1-\gamma)^2}\bar\alpha^2>0$
and the following stability condition holds:
\begin{equation}
\sum_{i=1}^d
\sigma_i^2 S_i(\theta')
\le
\frac{\beta}{1-\beta}
\left(
\mathcal L_\nu(\theta')
-
\frac{4\gamma A_{\max}}{(1-\gamma)^2}\bar\alpha^2
\right)^2,
\label{eq:gaussian_stable_improvement_condition}
\end{equation}
then
$\Pr_{\xi\sim\nu}(\Delta J_\xi\ge0)\ge1-\beta$.
\end{corollary}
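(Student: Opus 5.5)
We reduce the corollary to Theorem~\ref{thm:general_perturbation_improvement}. Two things have to be shown for $\nu=\mathcal N(0,\Sigma)$: the margin $m_\nu$ of \eqref{eq:mean_improvement_margin} is at least the quantity $\mathcal L_\nu(\theta')-\frac{4\gamma A_{\max}}{(1-\gamma)^2}\bar\alpha^2$, and the variance proxy $v_\nu$ can be taken to be $\sum_i\sigma_i^2S_i(\theta')$. The first is immediate. Since $\alpha_\xi\le\bar\alpha$ for every $\xi$, we have $\mathbb E_\nu[\alpha_\xi^2]\le\bar\alpha^2$, hence $m_\nu\ge M:=\mathcal L_\nu(\theta')-\frac{4\gamma A_{\max}}{(1-\gamma)^2}\bar\alpha^2>0$.

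For the variance we use the Gaussian Poincaré inequality. For $\xi\sim\mathcal N(0,\Sigma)$ with diagonal $\Sigma$ and any $C^1$ function $f$ with square-integrable gradient,
\begin{equation*}
\operatorname{Var}_\nu[f(\xi)]\le\sum_{i=1}^d\sigma_i^2\,\mathbb E_\nu\bigl[|\partial_i f(\xi)|^2\bigr].
\end{equation*}
This follows from the standard-Gaussian Poincaré inequality after the change of variables $\xi=\Sigma^{1/2}z$. Applied to $f(\xi)=\mathcal L_\xi(\theta')$, it gives $\operatorname{Var}_\nu[\mathcal L_\xi(\theta')]\le\sum_i\sigma_i^2S_i(\theta')$.

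Applying this to $\Delta J_\xi$ requires relating $\Delta J_\xi$ to the surrogate $\mathcal L_\xi(\theta')$, and this is the main obstacle. Lemma~\ref{lem:trpo_improvement}, applied to each perturbed pair $(\pi_\theta^\xi,\pi_{\theta'}^\xi)$, gives only the one-sided bound
\begin{equation*}
\Delta J_\xi\ge\mathcal L_\xi(\theta')-\tfrac{4\gamma A_{\max}}{(1-\gamma)^2}\alpha_\xi^2\ge\mathcal L_\xi(\theta')-\tfrac{4\gamma A_{\max}}{(1-\gamma)^2}\bar\alpha^2=:G_\xi .
\end{equation*}
It therefore suffices to control the lower tail of $\Delta J_\xi$, not its variance. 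Here $G_\xi$ is a deterministic shift of $\mathcal L_\xi(\theta')$, so $\mathbb E_\nu[G_\xi]=M$ and $\operatorname{Var}_\nu[G_\xi]\le\sum_i\sigma_i^2S_i(\theta')=:V$.

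We then rerun the Cantelli step of Theorem~\ref{thm:general_perturbation_improvement} on $G_\xi$ in place of $\Delta J_\xi$:
\begin{equation*}
\Pr\bigl(G_\xi<0\bigr)=\Pr\bigl(G_\xi-M<-M\bigr)\le\frac{V}{V+M^2}.
\end{equation*}
Condition \eqref{eq:gaussian_stable_improvement_condition} reads $V\le\frac{\beta}{1-\beta}M^2$, which is equivalent to $\frac{V}{V+M^2}\le\beta$. Since $\Delta J_\xi\ge G_\xi$ pointwise, we get $\Pr(\Delta J_\xi\ge0)\ge\Pr(G_\xi\ge0)\ge1-\beta$.

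Equivalently, this is the final clause of Theorem~\ref{thm:general_perturbation_improvement} applied to the minorant $G_\xi$, with $m_\nu=M$ and $v_\nu=V$; the condition $M\ge\sqrt{(1-\beta)V/\beta}$ is exactly \eqref{eq:gaussian_stable_improvement_condition}. The remaining technical checks are mild. We need $V<\infty$, which holds by assumption on $S_i$. We also need the Poincaré inequality to apply to $C^1$ functions with finite $S_i$, which holds by a standard approximation argument. The case $M=0$ is excluded by hypothesis.
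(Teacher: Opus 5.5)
Your proof is correct and matches the paper's argument. Like the paper, you bound $\Delta J_\xi$ below pointwise by $\mathcal L_\xi(\theta')-\frac{4\gamma A_{\max}}{(1-\gamma)^2}\bar\alpha^2$, bound the variance of $\mathcal L_\xi(\theta')$ by the Gaussian Poincar\'e inequality, and apply Cantelli to this minorant rather than to $\Delta J_\xi$. Your opening suggestion that $v_\nu=\sum_i\sigma_i^2S_i(\theta')$ could serve for $\Delta J_\xi$ itself is unjustified, but you rightly drop it for the minorant argument, so it costs nothing.
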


We note that the condition in Corollary~\ref{cor:gaussian_channel_improvement} gives a tractable characterization of Gaussian perturbations that admit stable policy improvement. In particular, the guarantee is jointly determined by the surrogate improvement, the policy drift, and the perturbation-induced variance. This suggests that desirable perturbations should allocate smaller variance to more sensitive dimensions, and can tolerate larger variance when the surrogate improvement is larger.

\subsection{Convergence under Stable Policy Updates}
\label{subsec:stable_convergence}

We next establish convergence under stable policy updates. Given $\epsilon>0$ and $\bar\beta\in(0,1)$, we aim to obtain a policy whose perturbed return falls more than $\epsilon$ below its mean with probability at most $\bar\beta$. Let $C_J:=J_{\max}-J_{\min}$ denote the range of the bounded return. Assume that the initial policy has sufficiently small return variance when excluding a negligible subset.

\begin{assumption}[Proper initial variance]
\label{ass:initial_variance}
For the initial policy $\pi_0\in\Pi$, there exists a measurable subset $\Xi_0\subseteq\Xi_\delta$ with $\nu(\Xi_0)\ge1-\beta_0$ such that $\operatorname{Var}_{\xi\sim\nu(\cdot\mid\Xi_0)}[J(\pi_0^\xi)]\le\sigma_0^2$, where $\beta_0\in[0,1)$ and $\sigma_0\ge0$ satisfy $\beta_0\le\frac12\min\{\bar\beta,\epsilon/C_J\}$ and $\sigma_0<\frac14\epsilon\sqrt{\bar\beta}$.
\end{assumption}

At iteration $k$, let $m_{\nu,k}(\pi')$ denote the expected improvement lower bound for a candidate policy $\pi'$ relative to the current policy $\pi_k$. Following the stability condition in Theorem~\ref{thm:general_perturbation_improvement}, define
\begin{equation}
\mathcal{U}_{\beta}(\pi_k)
:= \left\{ \pi' \in \Pi :\;
m_{\nu,k}(\pi') \ge
\sqrt{ \frac{1-\beta}{\beta}\,
\operatorname{Var}_{\xi\sim\nu}\!\left[ \Delta J\bigl((\pi')^\xi;\pi_k^\xi\bigr) \right] }
\right\}.
\label{eq:safe-set}
\end{equation}
Starting from $\pi_0$, update the policy by selecting the best $\pi_{k+1}\in\mathcal U_\beta(\pi_k)$ at each iteration. The following theorem shows that, as long as the probability of degradation at each step is sufficiently small, stable policy
updates can lead to perturbation robust policies.

\begin{theorem}[Convergence of stable policy optimization]
\label{thm:robust_policy_limit_points}
Suppose that the policy space $\Pi$ is compact and that
$\pi\mapsto J(\pi^\xi)$ is continuous for $\nu$-almost every $\xi$. Under Assumption~\ref{ass:initial_variance}, if $\beta\le\epsilon^2\bar\beta/(100C_J^2+\epsilon^2\bar\beta)$, the policy sequence $\{\pi_k\}_{k\ge0}$ has a nonempty set of limit points. Moreover, every limit point $\bar\pi$ is $(\delta,\epsilon,\bar\beta)$-probabilistic perturbation robust.
\end{theorem}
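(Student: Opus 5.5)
The proof splits into two parts: existence of limit points, and robustness of every limit point.

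\textbf{Existence of limit points.} Compactness of $\Pi$ immediately gives a convergent subsequence of $\{\pi_k\}$, so the set of limit points is nonempty. This part is routine.

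\textbf{Strategy for robustness.} We track how the distribution of perturbed returns $J(\pi_k^\xi)$ evolves along the iterates, and show that the lower-tail mass below the mean never grows beyond $\bar\beta$. Write $X_k(\xi):=J(\pi_k^\xi)$ and $D_k(\xi):=X_{k+1}(\xi)-X_k(\xi)=\Delta J\bigl(\pi_{k+1}^\xi;\pi_k^\xi\bigr)$.

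\textbf{Step 1: per-step consequences of the safe set.} Because $\pi_{k+1}\in\mathcal U_\beta(\pi_k)$, Theorem~\ref{thm:general_perturbation_improvement} gives $\Pr_\nu(D_k\ge0)\ge1-\beta$. The same Chebyshev/Cantelli argument, applied through the defining inequality of $\mathcal U_\beta$, gives $\operatorname{Var}_\nu[D_k]\le \frac{\beta}{1-\beta}m_{\nu,k}^2$. Next, $m_{\nu,k}$ is a lower bound on $\mathbb E_\nu[D_k]$ (expected TRPO bound, Lemma~\ref{lem:trpo_improvement} averaged over $\xi$). Hence $\mathbb E_\nu[D_k]\ge m_{\nu,k}\ge0$, and the mean $\bar J_k:=\mathbb E_\nu X_k$ is nondecreasing and bounded by $J_{\max}$. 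Telescoping then gives $\sum_k m_{\nu,k}\le C_J$, so $\sum_k \operatorname{Var}[D_k]^{1/2}\le\sqrt{\beta/(1-\beta)}\,C_J$. The hypothesis $\beta\le\epsilon^2\bar\beta/(100C_J^2+\epsilon^2\bar\beta)$ is equivalent to $\beta/(1-\beta)\le\epsilon^2\bar\beta/(100C_J^2)$, so this sum is at most $\epsilon\sqrt{\bar\beta}/10$.

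\textbf{Step 2: control of the centered fluctuation.} Let $Y_k:=X_k-\bar J_k$. By Minkowski's inequality in $L^2(\nu(\cdot\mid\Xi_0))$, $\|Y_k\|_{L^2(\Xi_0)}\le\|Y_0\|_{L^2(\Xi_0)}+\sum_j\|D_j-\mathbb E D_j\|_{L^2(\Xi_0)}$. Two technical adjustments are needed here.
\begin{itemize}
\item Conditioning on $\Xi_0$ inflates the increment variances by at most a factor $1/(1-\beta_0)\le 2$.
\item The conditional means on $\Xi_0$ differ from the global means by at most about $\beta_0C_J/(1-\beta_0)\le \epsilon$-scale quantities, because $\beta_0\le\epsilon/(2C_J)$. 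This must be budgeted carefully; I would aim for a shift of at most $\epsilon/2$ by using the smaller constant in the assumption, or re-center at the conditional mean and absorb the difference.
\end{itemize}
Combining these with Assumption~\ref{ass:initial_variance}, the conditional standard deviation of $X_k$ on $\Xi_0$ stays below $\frac14\epsilon\sqrt{\bar\beta}+\sqrt2\cdot\frac{1}{10}\epsilon\sqrt{\bar\beta}<\frac{\epsilon}{2}\sqrt{\bar\beta/2}$, uniformly in $k$.

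\textbf{Step 3: robustness for each iterate.} Chebyshev on $\Xi_0$ gives $\Pr\bigl(X_k<\bar J_k-\epsilon\bigr)\le \beta_0+\Pr_{\Xi_0}\bigl(|X_k-\bar J_k^{\Xi_0}|>\epsilon/2\bigr)\le \bar\beta/2+\bar\beta/2$, with the $\epsilon/2$ slack absorbing the mean shift. Thus each $\pi_k$ is $(\delta,\epsilon,\bar\beta)$-robust with a strict uniform margin, namely a uniform variance bound strictly below threshold.

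\textbf{Step 4: passing to limit points.} Take $\pi_{k_j}\to\bar\pi$. By continuity of $\pi\mapsto J(\pi^\xi)$ for $\nu$-a.e.\ $\xi$ together with boundedness, dominated convergence gives $X_{k_j}\to \bar X$ a.e., and also convergence of the means and conditional variances. The uniform variance bound therefore passes to $\bar\pi$, and the Chebyshev argument of Step 3 applied directly to $\bar\pi$ yields $(\delta,\epsilon,\bar\beta)$-robustness. Working with the variance bound, rather than the tail probability itself, avoids semicontinuity issues at the threshold.

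\textbf{Main obstacle.} I expect the hardest step to be Step 2's constant bookkeeping. Specifically, the constants from the $\beta_0$ exclusion, the conditional-versus-global mean shift, and the summed increment deviations must all fit within the budget set by the factor $100$ and the $\frac14$. The estimate $\sum_k\sqrt{\operatorname{Var}D_k}\lesssim\sqrt{\beta/(1-\beta)}\,C_J$, obtained by telescoping the means, is the key idea that makes the total fluctuation finite.
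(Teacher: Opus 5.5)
Your outline follows the paper's proof: telescope $\bar J_k$ to get $\sum_k\sqrt{\operatorname{Var}_\nu[D_k]}\le C_J\sqrt{\beta/(1-\beta)}\le\epsilon\sqrt{\bar\beta}/10$, use the triangle inequality for standard deviations under $\nu_0:=\nu(\cdot\mid\Xi_0)$, absorb a mean shift into $\epsilon/2$, then apply Chebyshev/Cantelli and dominated convergence along a subsequence. However, the constant bookkeeping you flagged as the main obstacle does not close as written.

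The inequality at the end of your Step 2 is false: $\tfrac14+\tfrac{\sqrt2}{10}\approx0.391$, whereas $\tfrac{1}{2\sqrt2}\approx0.354$. Feeding your bound into Step 3 gives $\Pr_\nu(X_k<\bar J_k-\epsilon)\le\beta_0+4(0.391)^2\bar\beta\approx\beta_0+0.61\bar\beta$. This is about $1.11\bar\beta$ when $\beta_0$ is near $\bar\beta/2$ and $\sigma_0$ is near $\epsilon\sqrt{\bar\beta}/4$, both of which Assumption~\ref{ass:initial_variance} permits. Replacing the factor $2$ by the exact $1/(1-\beta_0)$ only rescues small $\bar\beta$ (roughly $\bar\beta\le0.13$ in that worst case).

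Two further slips would also break the budget:
\begin{itemize}
\item Your Minkowski display centers $Y_k$ at the global mean. Then $\|Y_0\|_{L^2(\nu_0)}^2=\sigma_0^2+(\mathbb E_{\nu_0}X_0-\bar J_0)^2$ can be as large as $\sigma_0^2+\epsilon^2/4$, so centering at the conditional mean is required, not optional.
\item The mean shift is $|\mathbb E_\nu X-\mathbb E_{\nu_0}X|=\nu(\Xi_0^c)\,|\mathbb E[X\mid\Xi_0^c]-\mathbb E_{\nu_0}X|\le\beta_0C_J\le\epsilon/2$. Your looser $\beta_0C_J/(1-\beta_0)$ does not fit inside the $\epsilon/2$ slack.
\end{itemize}

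The missing idea is to keep the mixture weight on the good event: bound $\Pr_\nu(E)\le\beta_0+(1-\beta_0)\Pr_{\nu_0}(E)$ rather than $\beta_0+\Pr_{\nu_0}(E)$. That weight exactly cancels the variance inflation caused by conditioning, because
$(1-\beta_0)\operatorname{Var}_{\nu_0}[D_k]\le\nu(\Xi_0)\,\mathbb E_{\nu_0}[(D_k-\mathbb E_\nu D_k)^2]\le\operatorname{Var}_\nu[D_k]$.
Hence $\sqrt{1-\beta_0}\,\sqrt{\operatorname{Var}_{\nu_0}[X_k]}\le\sigma_0+C_J\sqrt{\beta/(1-\beta)}<(\tfrac14+\tfrac1{10})\epsilon\sqrt{\bar\beta}$ uniformly in $k$, and this bound passes to $\bar\pi$. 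Then
$\Pr_\nu(J(\bar\pi^\xi)<J^\star-\epsilon)\le\beta_0+\tfrac{4}{\epsilon^2}(1-\beta_0)\operatorname{Var}_{\nu_0}[J(\bar\pi^\xi)]\le\tfrac{\bar\beta}{2}+4(0.35)^2\bar\beta=0.99\bar\beta$.
This is exactly how the paper tunes the constants $\tfrac14$ and $\tfrac1{10}$ (equivalently, the $100$ in the condition on $\beta$). With these corrections, your argument coincides with the paper's.
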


%% file: sections/method.tex
\section{Practical Algorithm for Stable Perturbation-Robust Policy Optimization}
\label{sec:practical_algorithm}
\begin{algorithm}[b]
\caption{Stable Perturbation-Robust Policy Optimization (SPrPO)}
\label{alg:prpo}
\small
\begin{algorithmic}[1]
\Require Initial policy $\theta_1$, iterations $K$, base perturbation
$\rho$, sensitivity exponent $\alpha$
\State Initialize $\widehat S_{1,\ell c}\gets 1$ and $\widehat m_1\gets 1$
\For{$k=1,\ldots,K$}
    \State $\displaystyle
    \sigma_{k,\ell c}
    \gets
    \rho\,\widehat m_k\,
    \widehat S_{k,\ell c}^{-\alpha/2}$
    \State Sample
    $\xi_{k,\ell c}\sim\mathcal N(0,\sigma_{k,\ell c}^2)$
    \State Collect
    $\mathcal D_k\sim\pi_{\theta_k}^{\xi_k}$
    and estimate advantages $A_i$ for $i\in\mathcal D_k$
    \State Update the policy:
    $\displaystyle
    \theta_{k+1}
    \gets
    \theta_k
    +
    \eta_k
    \nabla_\theta
    \mathcal L_k(\theta_k;\xi_k)$
    \State $\displaystyle
    \widehat S_{k+1,\ell c}
    \gets
    \left|
    \frac{\partial
    \mathcal L_k(\theta_{k+1};\xi_k)}
    {\partial \xi_{k,\ell c}}
    \right|^2, \quad \displaystyle
    \widehat m_{k+1}
    \gets
    \left\|
    \nabla_\theta
    \mathcal L_k(\theta_{k+1};\xi_k)
    \right\|_2^2$
\EndFor
\State \Return $\theta_{K+1}$
\end{algorithmic}
\end{algorithm}
Building on the preceding analysis, we develop a practical realization of the stability condition for LLM policy optimization. Specifically, we apply controlled Gaussian perturbations in the hidden space and estimate the sensitivity and improvement statistics from tractable rollout samples, leading to \textbf{Stable Perturbation-Robust Policy Optimization (SPrPO)}, as summarized in Algorithm~\ref{alg:prpo}.

\paragraph{Multiplicative Gaussian perturbations to FFN channels.} In modern Transformer architectures, feed-forward networks (FFNs) account for a large fraction of the model parameters, operate independently on each token representation, and have been shown to store rich textual and semantic patterns~\citep{Transformer, FFN_importance, qwen25}. In the Qwen2.5 architecture, each FFN consists of an up-projection, a gated SiLU activation, and a down projection. To be specific, given an input $x_\ell$ in layer $\ell$, it is computed as
\[
h_\ell
=
\operatorname{SiLU}(W_{\mathrm{gate},\ell}x_\ell)
\odot
W_{\mathrm{up},\ell}x_\ell,
\qquad
y_\ell
=
W_{\mathrm{down},\ell}h_\ell,
\]
Instead of modifying the full parameter space, we perturb the intermediate FFN channel $h_{\ell}$ with multiplicative Gaussian noise $\xi_{\ell}$, which has been shown to be an effective mechanism for improving robustness~\citep{multiplicative_perturbation}. Let $\xi_{\ell c}\sim\mathcal N(0,\sigma_{\ell c}^{2})$ denote the Gaussian perturbation for channel $c$, i.e., coordinate $c$ of $h_\ell$, the perturbed FFN forward pass is $\widetilde h_{\ell c}=(1+\xi_{\ell c})h_{\ell c}$. Since $\xi_{\ell}$ is applied after the up projection and activation, it is equivalent to perturbing the corresponding columns of down projection matrix $W_{\mathrm{down},\ell}$, while offering a simpler implementation and more stable training.

\paragraph{Adaptive and sensitivity-aware Gaussian perturbations.} Motivated by the stability condition in Corollary~\ref{cor:gaussian_channel_improvement}, we adopt
non-isotropic Gaussian perturbations, i.e., $\sigma_{\ell c}$ are non-identical across different channels. However, unlike supervised fine-tuning, where loss-related statistics can be evaluated on a fixed training set~\citep{gradient_sensitivity, tamingweights}, the exact channel sensitivities and expected improvement in reinforcement learning are difficult to evaluate because the data distribution changes with the policy. Therefore in practice, we estimate the quantities by reusing the rollout batch from the current policy update. At iteration $k$, we collect trajectories $\mathcal D_k\sim\pi_{\theta_k}^{\xi_k}$ and estimate advantages $A_i$ for
samples $i\in\mathcal D_k$, and optimize the policy using perturbed loss surrogate with PPO clipping
\begin{equation*}
\mathcal{L}_k(\theta;\mathcal{\xi}_{k})
=
\frac{1}{|\mathcal D_k|}
\sum_{i\in\mathcal D_k}
\min\left(
r_i(\theta) A_i,\,
\operatorname{clip}
\bigl(r_i(\theta),1-\epsilon,1+\epsilon\bigr)A_i
\right),
\label{eq:practical_ppo}
\end{equation*}
where $r_i(\theta)
=
\pi_{\theta}^{\xi_k}(a_i\mid s_i)/
\pi_{\theta_k}^{\xi_k}(a_i\mid s_i)$. After obtaining $\theta_{k+1}$, we replay $\mathcal D_k$ for one additional
forward and backward pass without extra rollouts, and estimate the channel sensitivity and expected improvement as

\begin{equation*}
\widehat S_{k+1,\ell c}
=
\left|
\frac{\partial
\mathcal L_k(\theta_{k+1};\xi_k)}
{\partial \xi_{k,\ell c}}
\right|^2,
\qquad
\widehat m_{k+1}
=
\left\|
\nabla_\theta
\mathcal L_k(\theta_{k+1};\xi_k)
\right\|_2^2.
\end{equation*}
Here, $\widehat S_{k+1,\ell c}$ measures the local sensitivity of channel $(\ell,c)$. For the expected improvement $\widehat m_{k+1}$, we use the squared PPO gradient norm as a practical proxy~\citep{gradient_norm_ada, gradient_norm_flat}, since under a small gradient ascent step, the first-order increase is proportional to the squared gradient norm. Finally, we choose standard deviation of channel perturbation to increase with the estimated improvement and decrease with channel sensitivity, i.e.,
$\sigma_{k+1,\ell c}
=
\rho\,\widehat m_{k+1}\,
\widehat S_{k+1,\ell c}^{-\alpha/2}$,
which follows the allocation principle implied by the stability condition~\eqref{eq:gaussian_stable_improvement_condition}. Notably, the hyperparameter $\rho>0$ controls the base perturbation scale, while $\alpha\ge0$ controls the strength of sensitivity adaptation: $\alpha=0$ reduces to sensitivity-independent noise, whereas $\alpha=1$ corresponds to inverse-square-root sensitivity scaling.

%% file: sections/experiments.tex
\section{Experiments}
\label{sec:experiments}
In this section, we empirically validate the effectiveness of SPrPO. Subsec.~\ref{sec:main_results} evaluates robustness under diverse policy perturbations, Subsec.~\ref{subsec:reward_landscape} visualizes the reward landscape, and Subsec.~\ref{sec:ablation} studies the effects of the main algorithmic components and hyperparameters. Full details of experiment settings are provided in Appendix~\ref{app:exp_details}.

\subsection{Experiment Setup}
\label{sec:experimental_setup}

\paragraph{Environments.} We train and evaluate the LLM agent on two commonly used long-horizon benchmarks, ALFWorld~\citep{alfworld} and WebShop~\citep{webshop}. ALFWorld is a text-based embodied environment, where an agent takes several steps to complete household tasks. WebShop is an interactive web shopping environment in which an agent searches for and purchases products according to the given instructions. In both environments, the agent receives textual observations and executes sequences of structured actions under strict environment parsers, with ALFWorld allowing up to 50 steps per episode and WebShop up to 15 steps.

\paragraph{Baselines.} For all experiments, we use Qwen2.5-1.5B-Instruct~\citep{qwen25} as the base model and GiGPO~\citep{gigpo} as the base training algorithm, which is a critic-free RL method designed to better handle sparse trajectory-level rewards. We refer to the version without training-time perturbations as \textbf{Vanilla}. To our knowledge, we are the first to explicitly study policy perturbations during RL training for long-horizon LLM agents, while prior robust RL methods for continuous control do not directly transfer to this setting. We therefore consider two representative perturbation baselines: \textbf{Multiplicative Gaussian}, which applies same Gaussian variance across all channels, and \textbf{SAM}~\citep{SAM}, which uses adversarial perturbations along the PPO loss increasing direction. Both are applied to FFN channel space in a multiplicative manner as our method, providing controlled comparisons between different perturbation strategies.

\paragraph{Evaluation protocol under perturbations.}
We evaluate agent performance by task success rate under different policy perturbations. For WebShop, we evaluate on a set of 500 shopping tasks. For ALFWorld, we evaluate on all 274 validation tasks, including 140 in-distribution (ID) tasks and 134 out-of-distribution (OOD) tasks. We consider several representative perturbations that differ in both structure and granularity:
\begin{itemize}[leftmargin=*,itemsep=2pt,topsep=2pt]

\item \textbf{Channel-wise perturbations.} These perturbations operate at the level of FFN hidden channels, which is equivalent to modifying entire rows or columns of the corresponding weight matrices. \emph{Gaussian perturbation} applies multiplicative noise $\mathcal{N}(0,\sigma^2)$ in the same manner as during training. \emph{Dropout}~\citep{dropout} randomly removes FFN channels with probability $p$. Similarly, \emph{structured pruning} removes entire FFN channels based on their Fisher importance~\citep{fisher_pruning}. Specifically, we use the same calibration set for all evaluated methods for fair comparison and estimate $\mathbb{E}[(h_{\ell,c}\,\partial \mathcal{L}/\partial h_{\ell,c})^2]$ as the importance score.

\item \textbf{Weight space perturbations.}
Unlike channel-wise perturbations, these perturbations operate at the level of individual weights, with a different pattern from the channel-wise perturbations used during training. We use SparseGPT~\citep{sparseGPT} for \emph{unstructured pruning}, which uses second-order information to identify and compensate pruned weights, and apply it only to the FFN linear weights. For \emph{quantization}, we use round-to-nearest (RTN) weight quantization~\citep{rtn_quantization} over all transformer linear weights at different bit widths.

\end{itemize}

\subsection{Main Results: Robustness to Diverse Policy Perturbations}
\label{sec:main_results}
\input{images/main_results/figure.tex}
\input{images/main_results/tables/main_table.tex}

We evaluate the robustness of the policies under diverse perturbations. Figure~\ref{fig:main-robustness} reports the full robustness curves as the perturbation strength increases, while Table~\ref{tab:main-robustness} summarizes the results under relatively strong perturbation settings for direct comparison. Overall, policies trained by our method are generally more robust than all baselines. At sufficiently extreme perturbation levels, however, all methods can eventually collapse. For example, under INT4 quantization on WebShop, success rates fall to the single digits, where differences between policies become less informative and performance is close to the failure regime.

\subsection{Reward Landscape Visualization}
\label{subsec:reward_landscape}
\input{images/analysis/reward_surface/reward_surface.tex}

To further examine local sensitivity to perturbations, we visualize reward surfaces, following prior work on reward landscape analysis~\citep{cliff_diving,flat_reward}. For each trained policy, we sample two fixed multiplicative Gaussian perturbation directions, $\xi_1$ and $\xi_2$, and vary their magnitudes $w_1$ and $w_2$ over a two-dimensional grid. Each point corresponds to the policy under perturbation $w_1\xi_1 + w_2\xi_2$, with the $z$-axis representing the success rate, as shown in Figure~\ref{fig:reward-surface}.

\subsection{Ablation and Hyperparameter Study}
\label{sec:ablation}

\begin{figure*}[htbp]
\centering
\includegraphics[width=0.5\textwidth,trim=0 0 9.8in 0,clip]{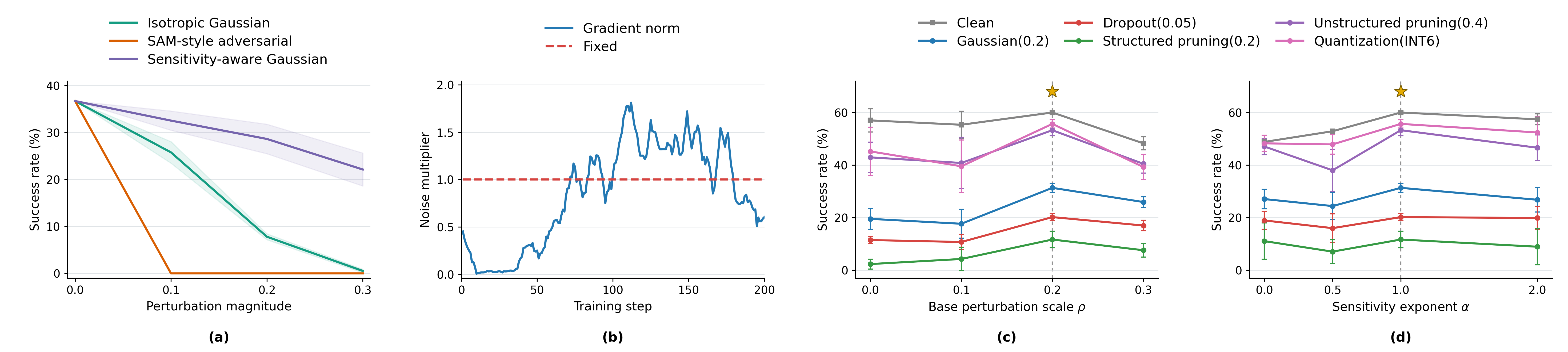}%
\includegraphics[width=0.5\textwidth]{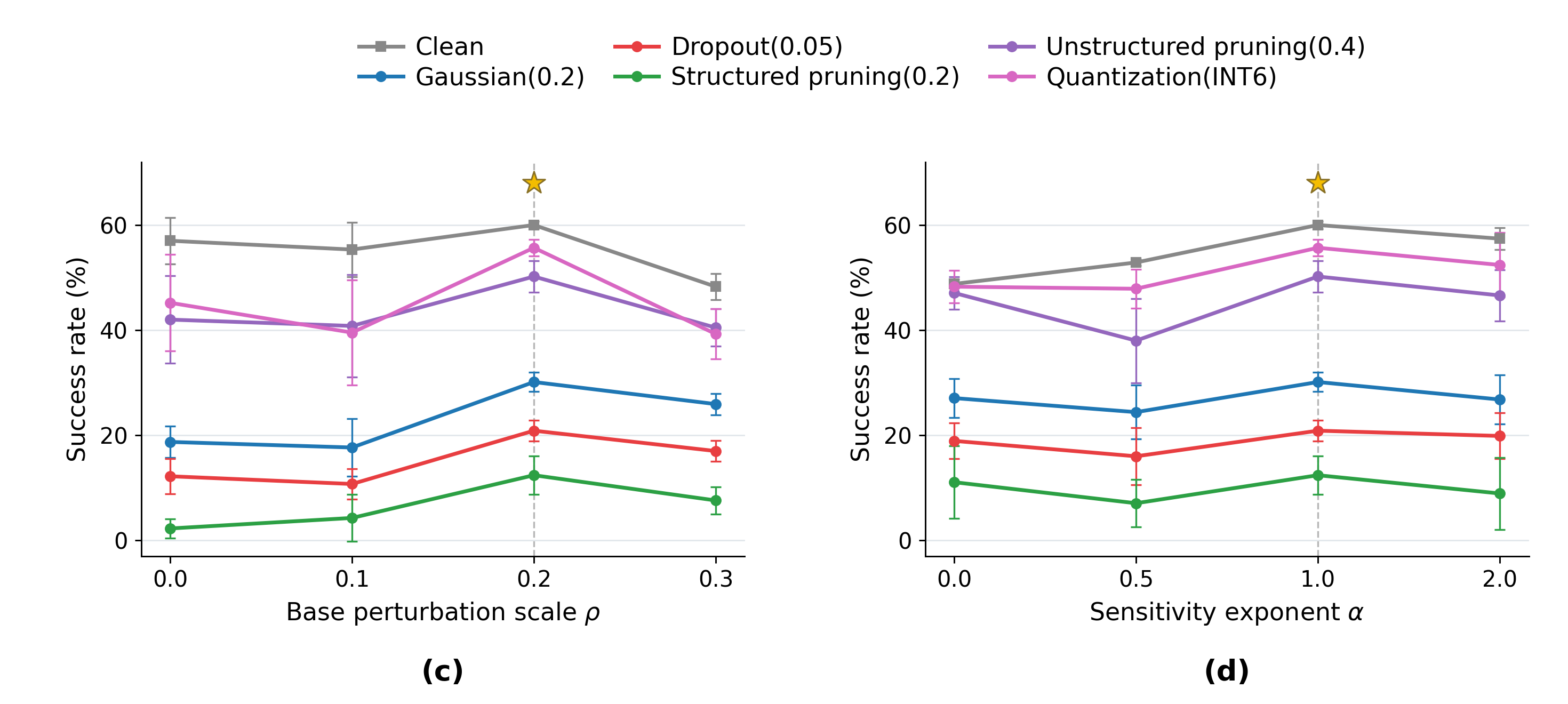}
\caption{Ablation and hyperparameter study.
(a) Comparison of perturbation strategies during training.
(b) Effect of gradient-norm scaling.
(c,d) Effects of perturbation scale $\rho$ and sensitivity exponent $\alpha$.
Starred and $\rho=0$ results reuse the main-experiment estimates in Table~\ref{tab:main-robustness}.}
\label{fig:ablation}
\end{figure*}

Figure~\ref{fig:ablation}(a,b) examines the two main design choices of SPrPO. Under the same perturbation magnitude, SAM-style perturbations collapse task performance, and isotropic Gaussian noise also causes substantial degradation. In contrast, reducing perturbations on sensitive channels preserves more original performance. Since SAM is substantially more aggressive at the same magnitude, we further reduce its perturbation scale to obtain a stable training baseline. Gradient-norm scaling increases noise strength as training progresses; removing it reduces the clean success rate by $7.87$ percentage points (Table~\ref{tab:more-components}).

Figure~\ref{fig:ablation}(c,d) shows the effect of hyperparameters $\rho$ and $\alpha$. A small $\rho$ provides insufficient perturbation exposure to improve robustness, whereas an overly large $\rho$ corrupts the training signal and causes the model to underfit. For $\alpha$, a small value does not provide enough protection, leaving sensitive channels over-perturbed, and further increasing $\alpha$ also reduces the robustness gains, since sensitive channels become barely perturbed during training.

%% file: images/main_results/figure.tex
\begingroup
\newlength{\MainPanelWidth}
\newlength{\MainPanelGap}
\begin{figure*}[t]
  \centering
  \setlength{\MainPanelWidth}{0.2455\textwidth}
  \setlength{\MainPanelGap}{0.006\textwidth}
  \captionsetup[subfigure]{font=footnotesize,skip=1pt,justification=centering}
  \begin{tabular}{@{}c@{\hspace{\MainPanelGap}}c@{\hspace{\MainPanelGap}}c@{\hspace{\MainPanelGap}}c@{}}
      \begin{subfigure}[t]{\MainPanelWidth}
    \includegraphics[width=\linewidth]{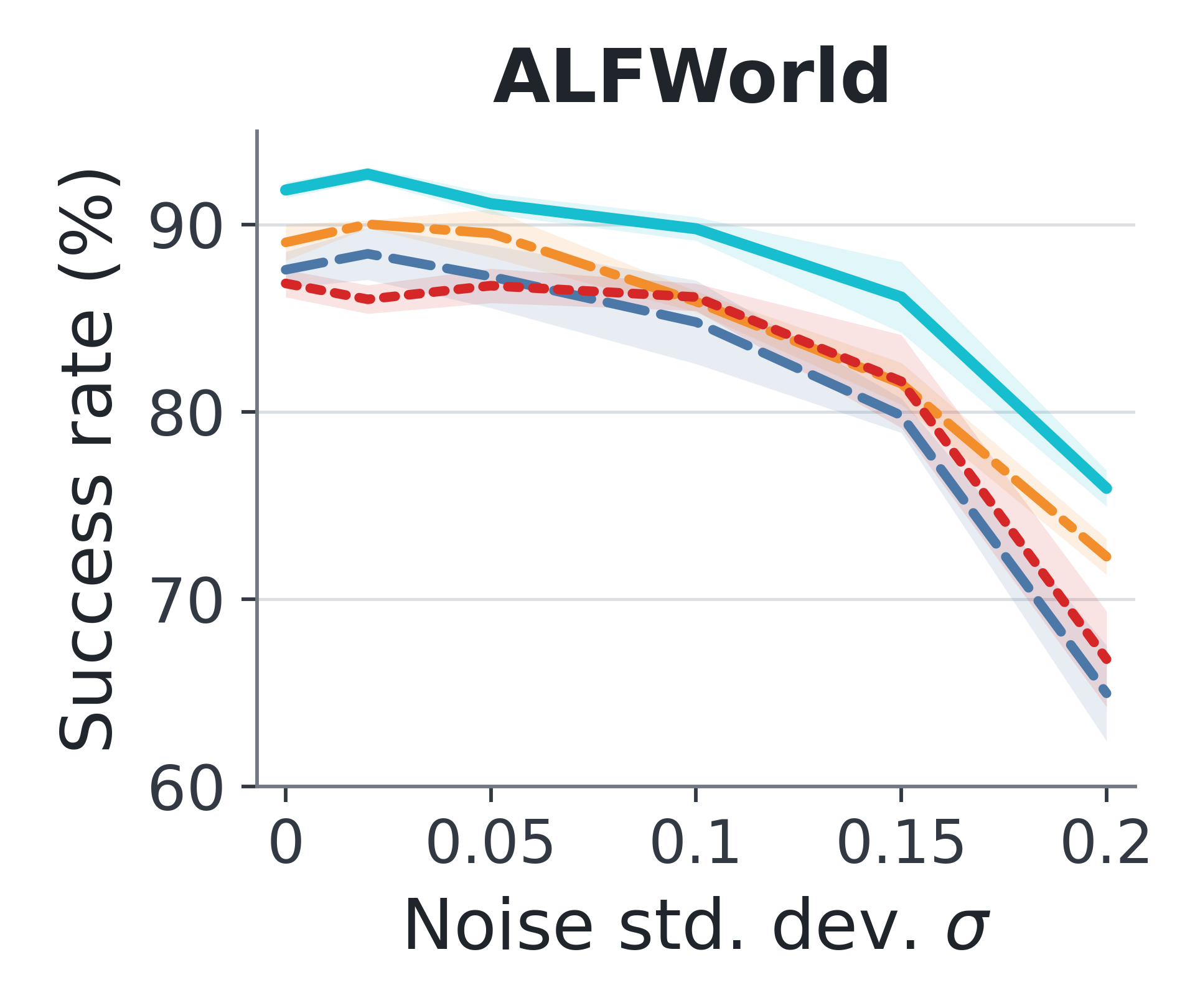}
    \caption{Gaussian noise}\label{fig:main-alfworld-gaussian}
  \end{subfigure}%
 &
      \begin{subfigure}[t]{\MainPanelWidth}
    \includegraphics[width=\linewidth]{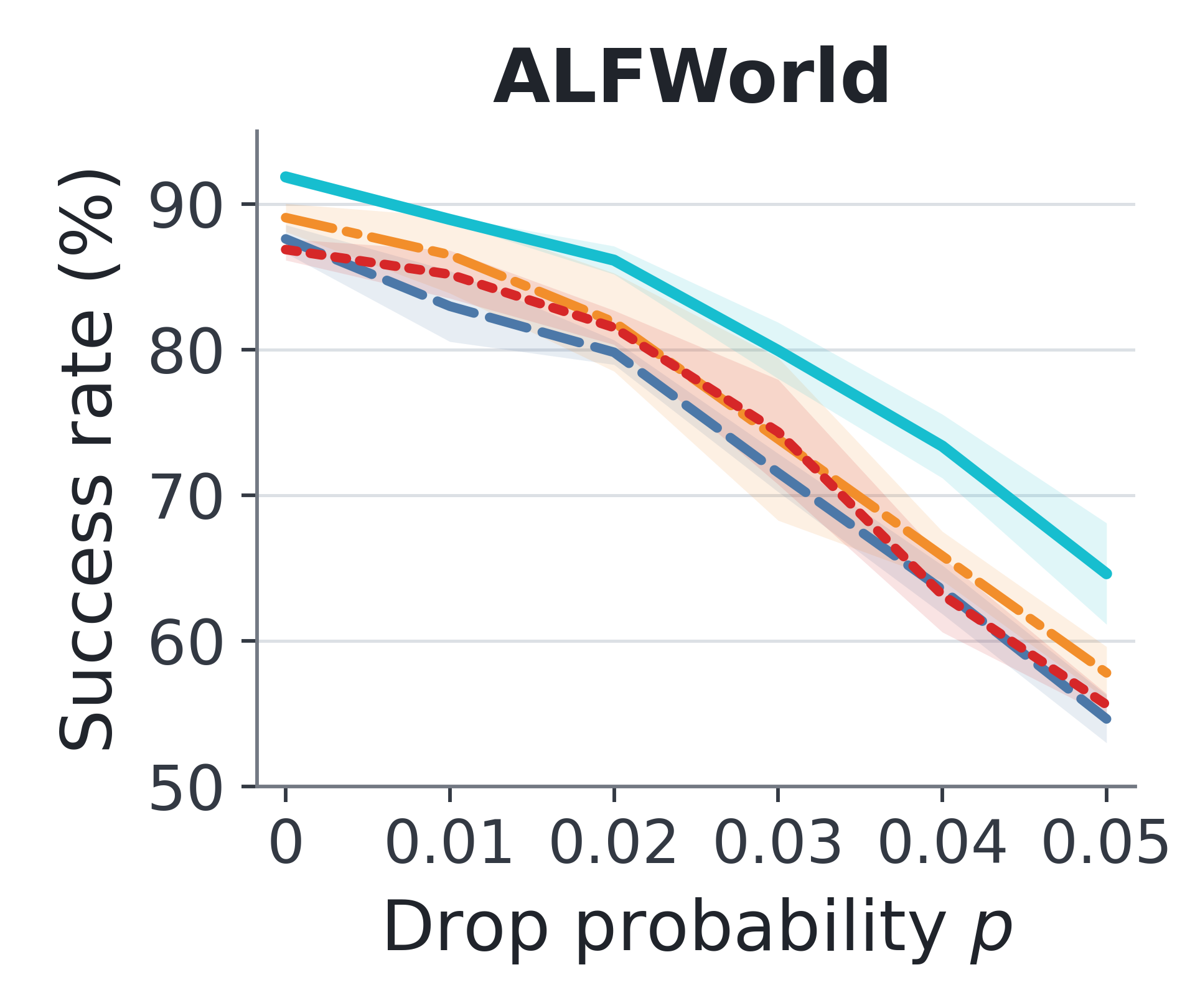}
    \caption{Channel dropout}\label{fig:main-alfworld-dropout}
  \end{subfigure}%
 &
      \begin{subfigure}[t]{\MainPanelWidth}
    \includegraphics[width=\linewidth]{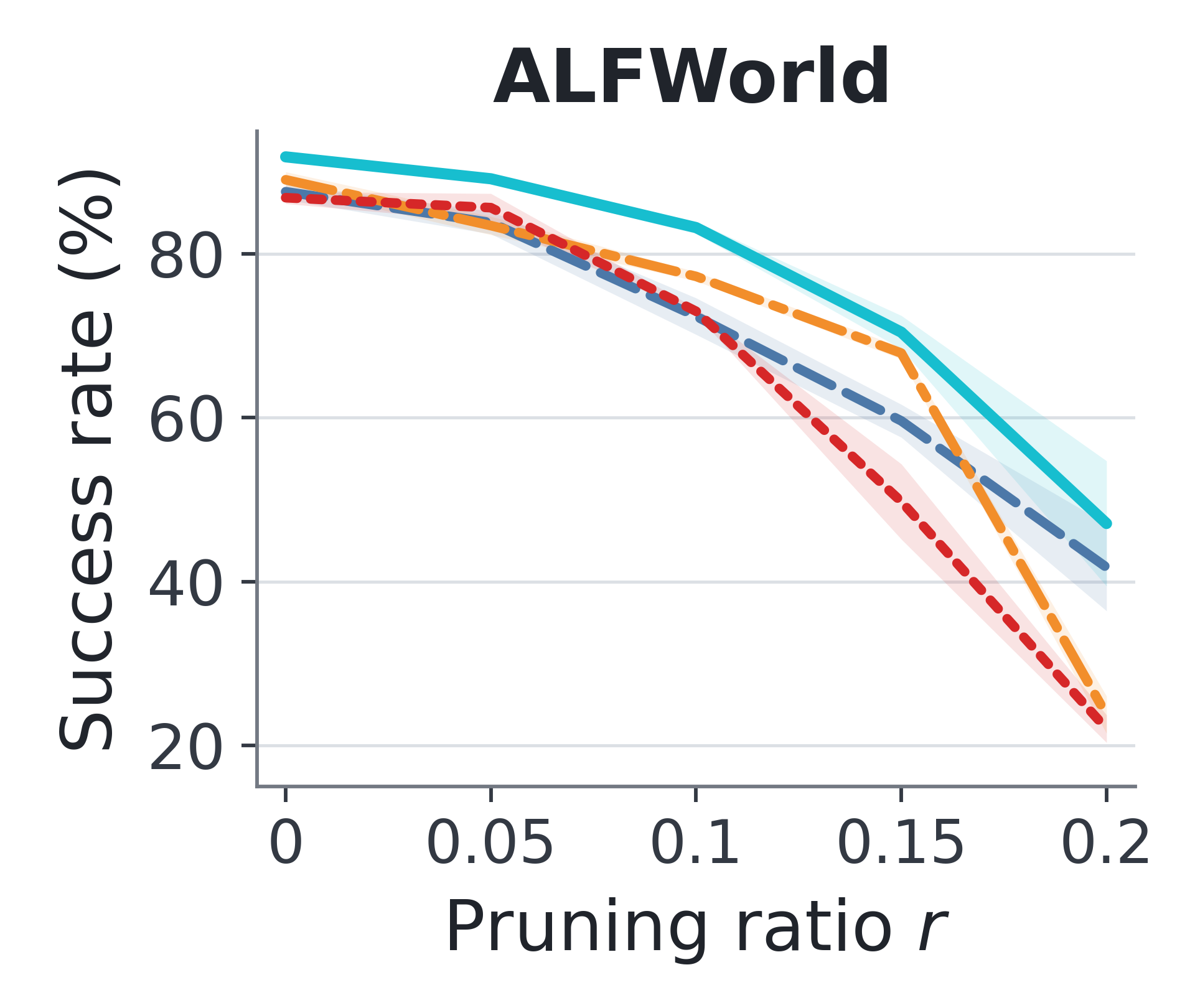}
    \caption{Structured FFN pruning}\label{fig:main-alfworld-pruning}
  \end{subfigure}%
 &
      \begin{subfigure}[t]{\MainPanelWidth}
    \includegraphics[width=\linewidth]{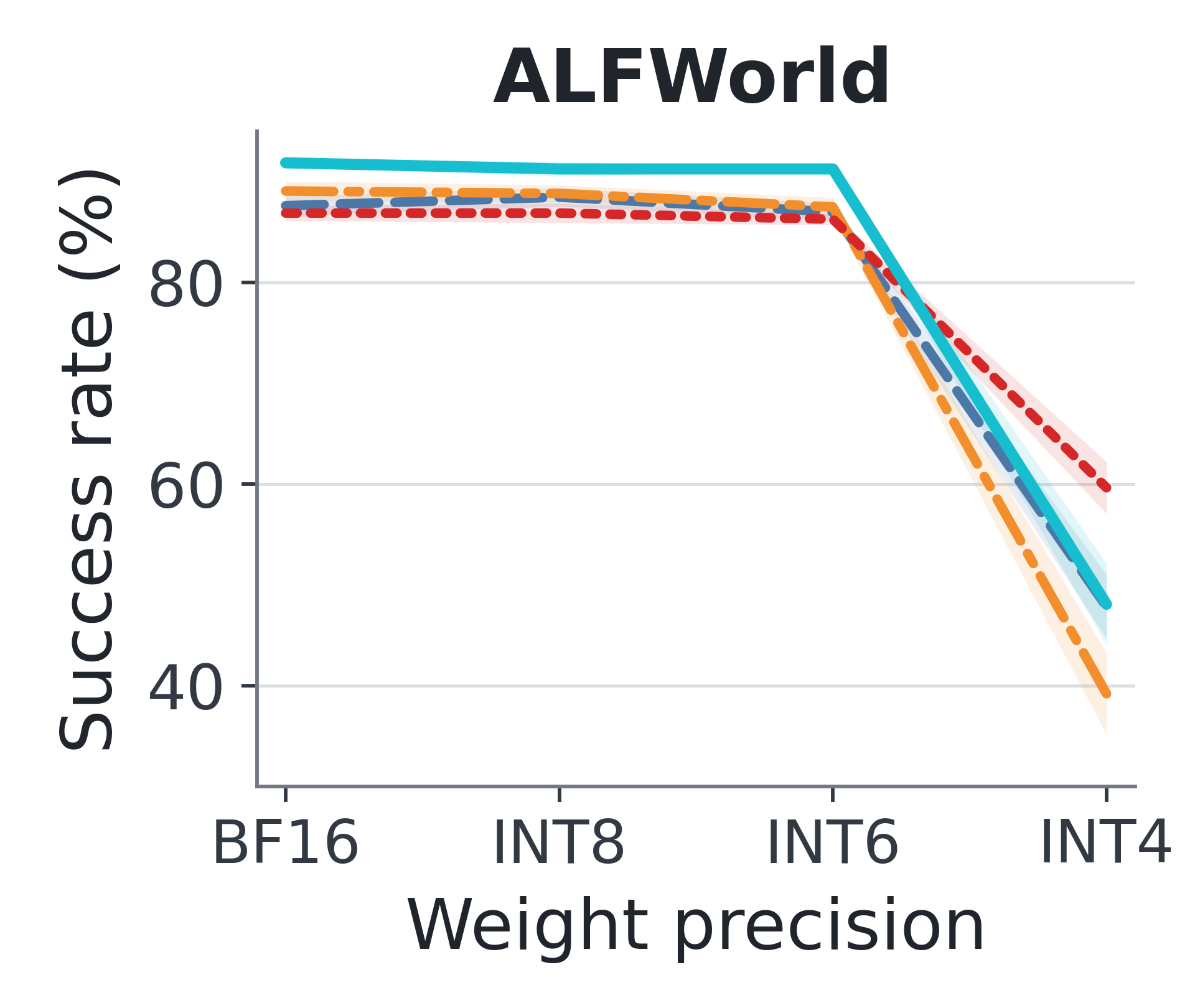}
    \caption{Weight quantization}\label{fig:main-alfworld-quantization}
  \end{subfigure}%
 \\[1pt]
      \begin{subfigure}[t]{\MainPanelWidth}
    \includegraphics[width=\linewidth]{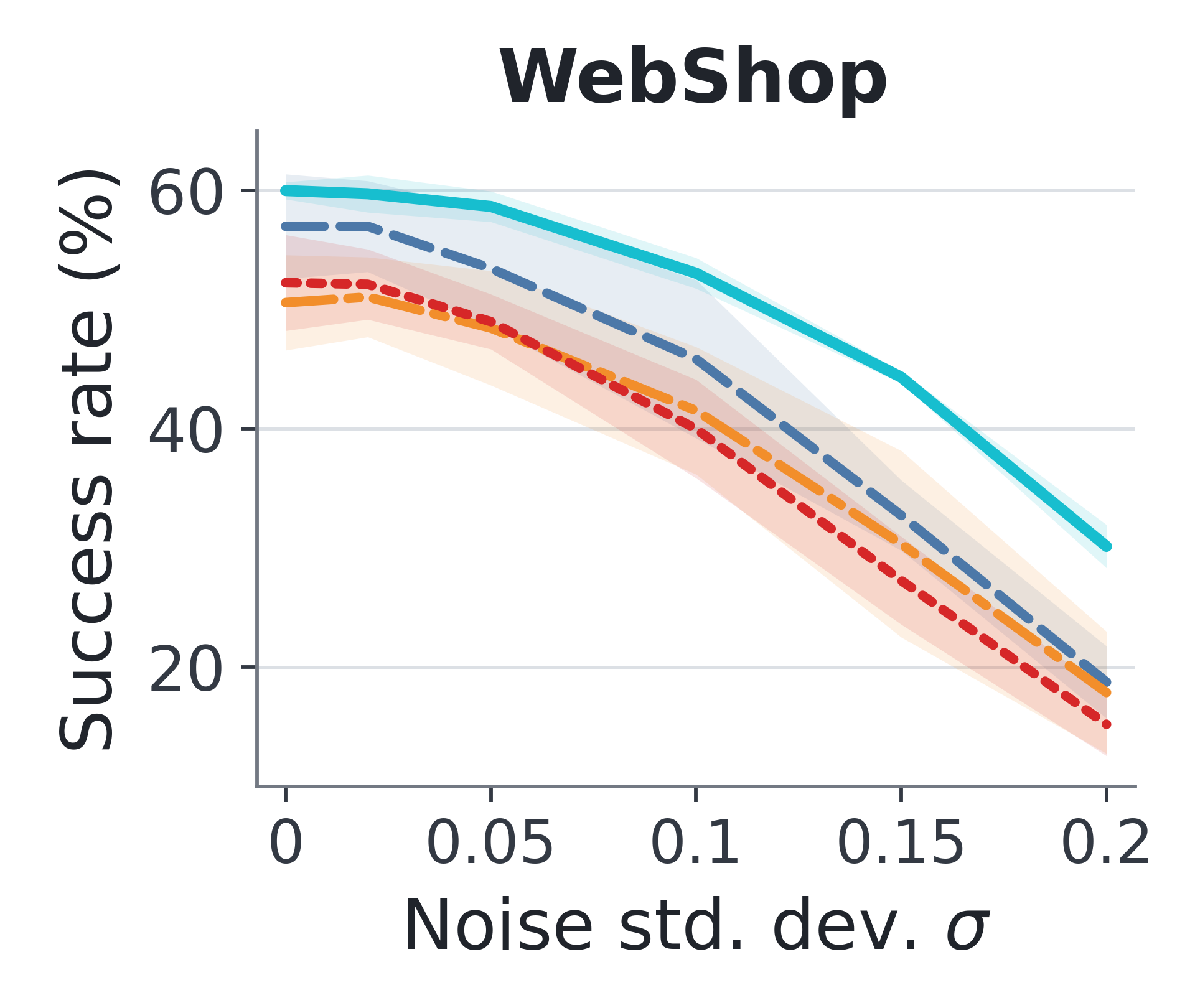}
    \caption{Gaussian noise}\label{fig:main-webshop-gaussian}
  \end{subfigure}%
 &
      \begin{subfigure}[t]{\MainPanelWidth}
    \includegraphics[width=\linewidth]{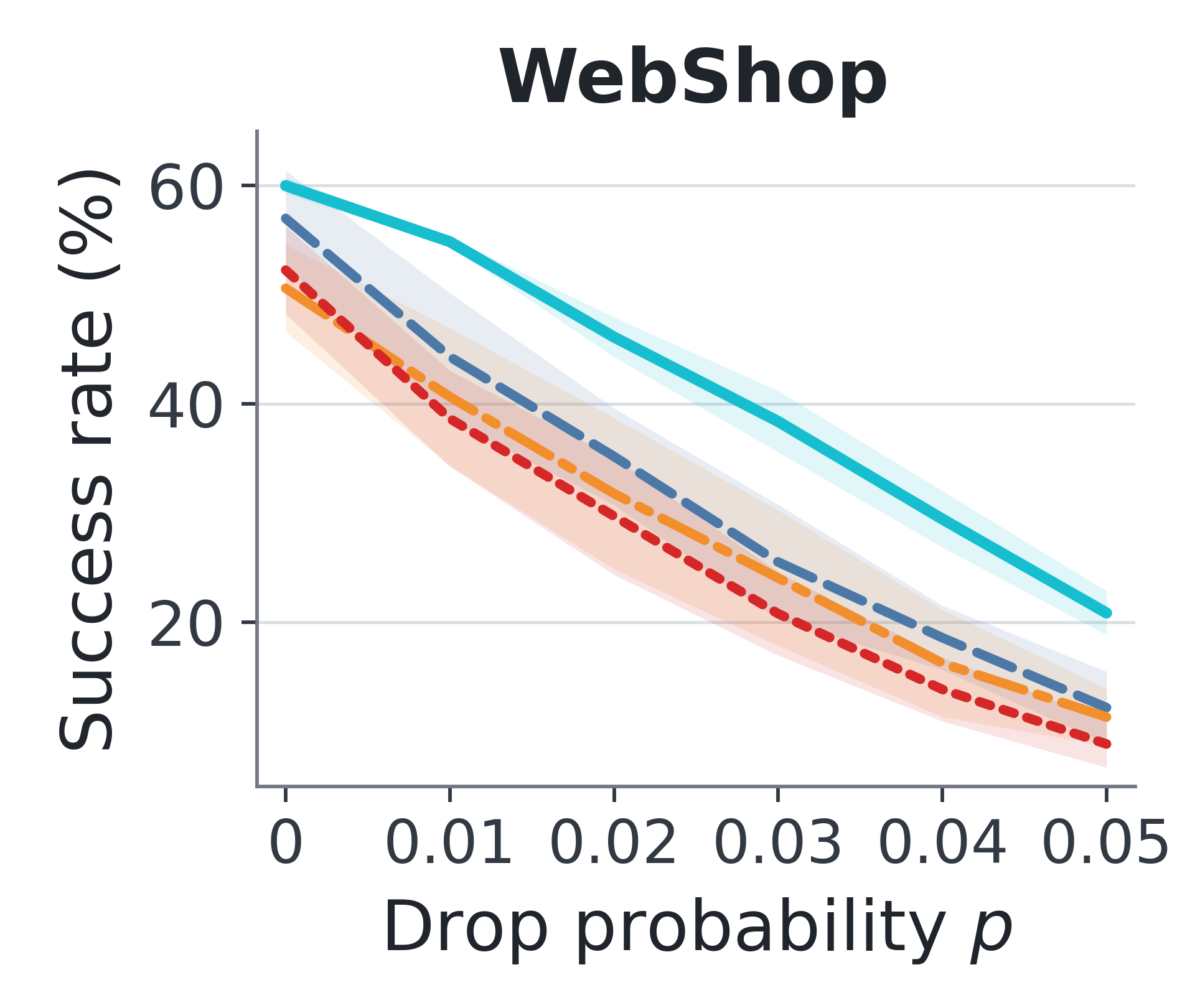}
    \caption{Channel dropout}\label{fig:main-webshop-dropout}
  \end{subfigure}%
 &
      \begin{subfigure}[t]{\MainPanelWidth}
    \includegraphics[width=\linewidth]{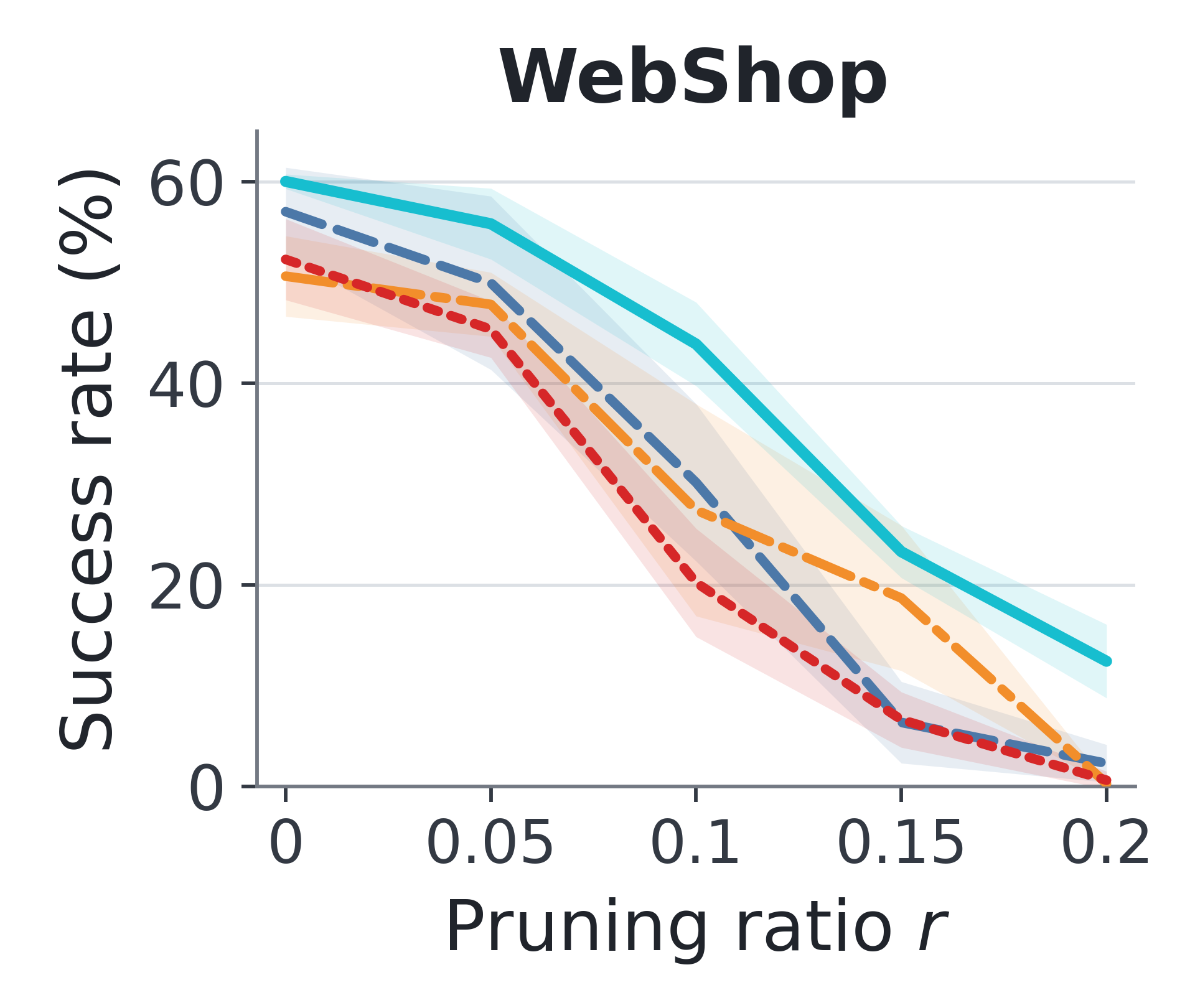}
    \caption{Structured FFN pruning}\label{fig:main-webshop-pruning}
  \end{subfigure}%
 &
      \begin{subfigure}[t]{\MainPanelWidth}
    \includegraphics[width=\linewidth]{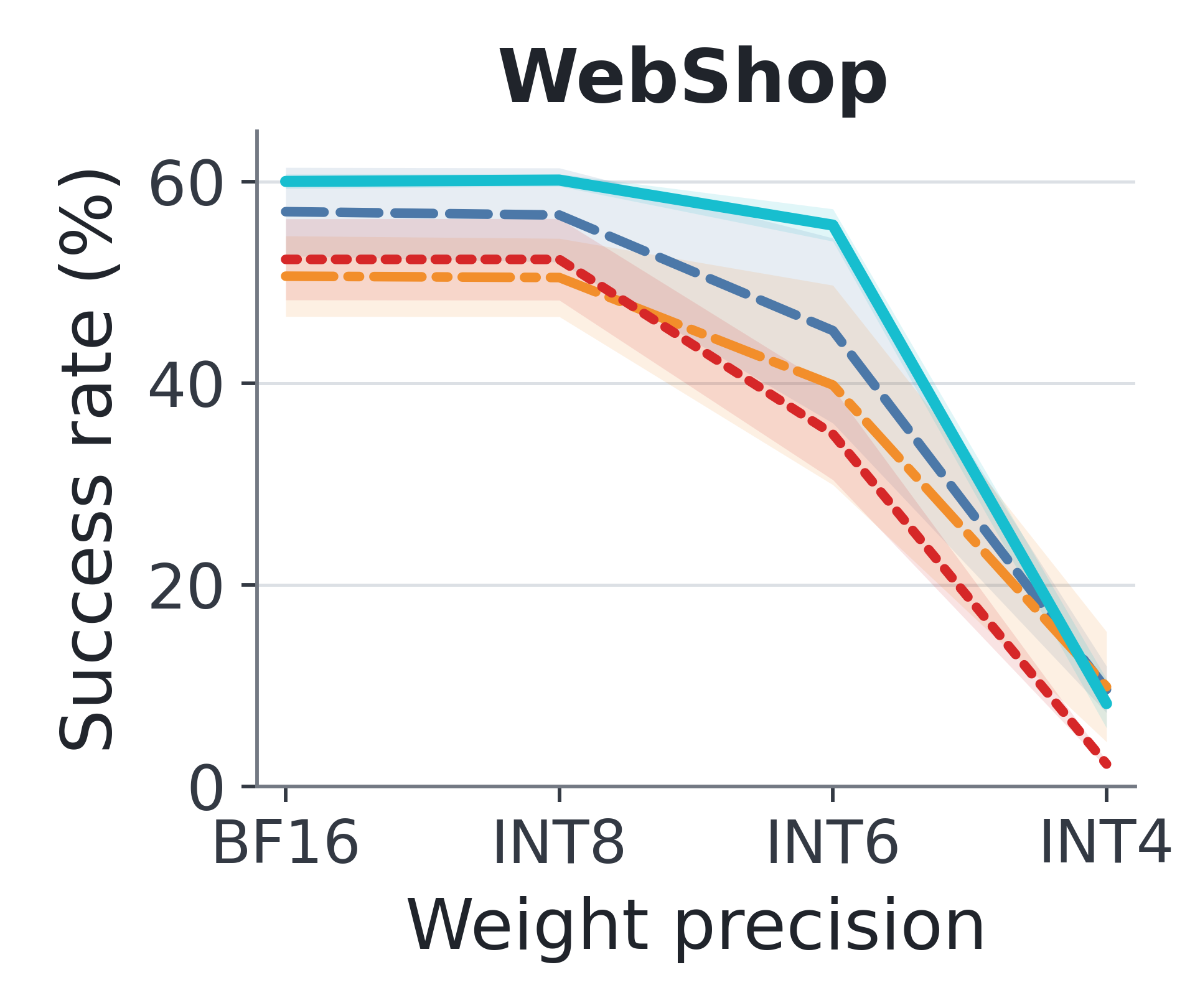}
    \caption{Weight quantization}\label{fig:main-webshop-quantization}
  \end{subfigure}%
 \\[1pt]
      \begin{subfigure}[t]{\MainPanelWidth}
    \includegraphics[width=\linewidth]{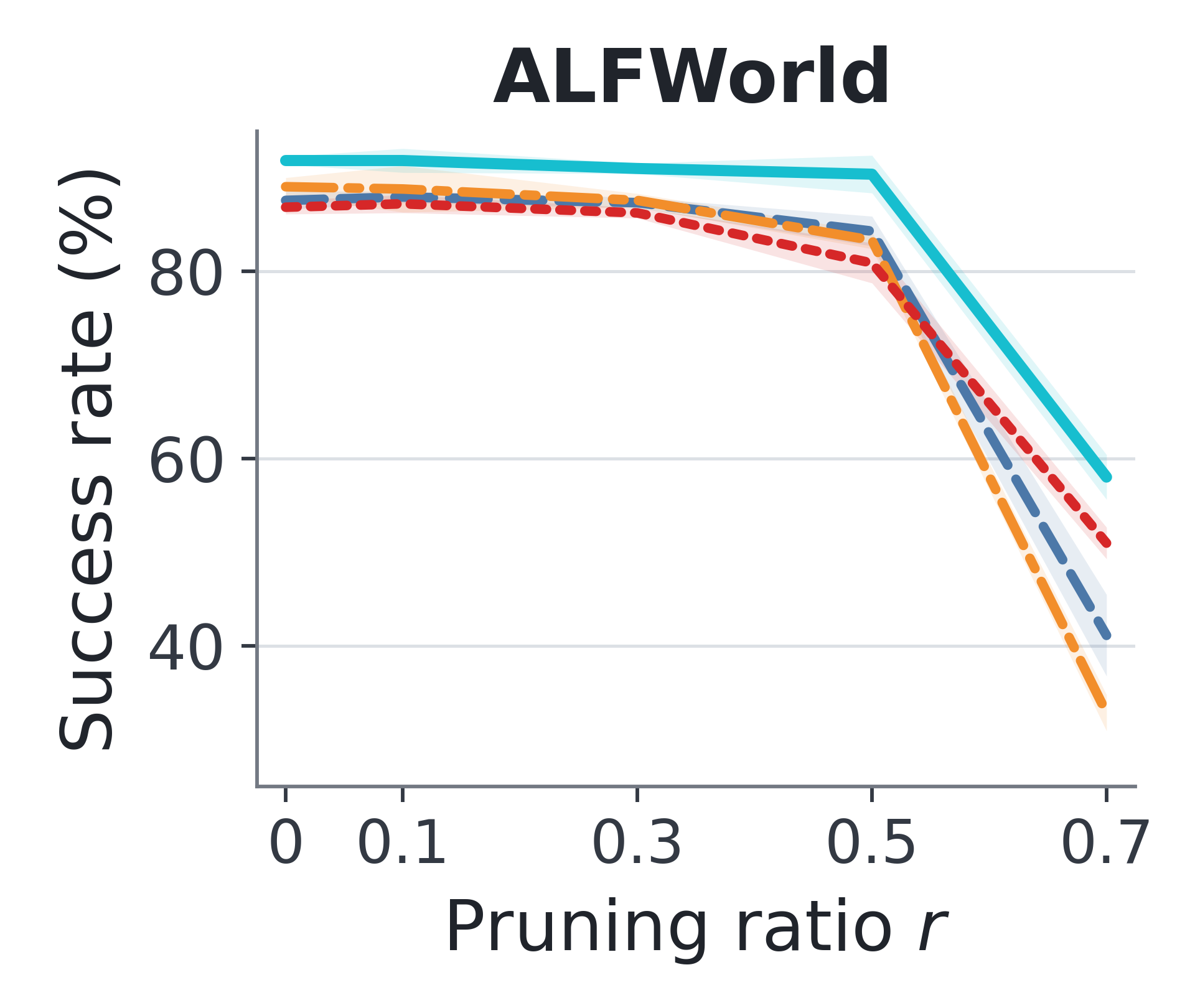}
    \caption{Unstructured FFN pruning}\label{fig:main-alfworld-unstructured-pruning}
  \end{subfigure}%
 &
      \begin{subfigure}[t]{\MainPanelWidth}
    \includegraphics[width=\linewidth]{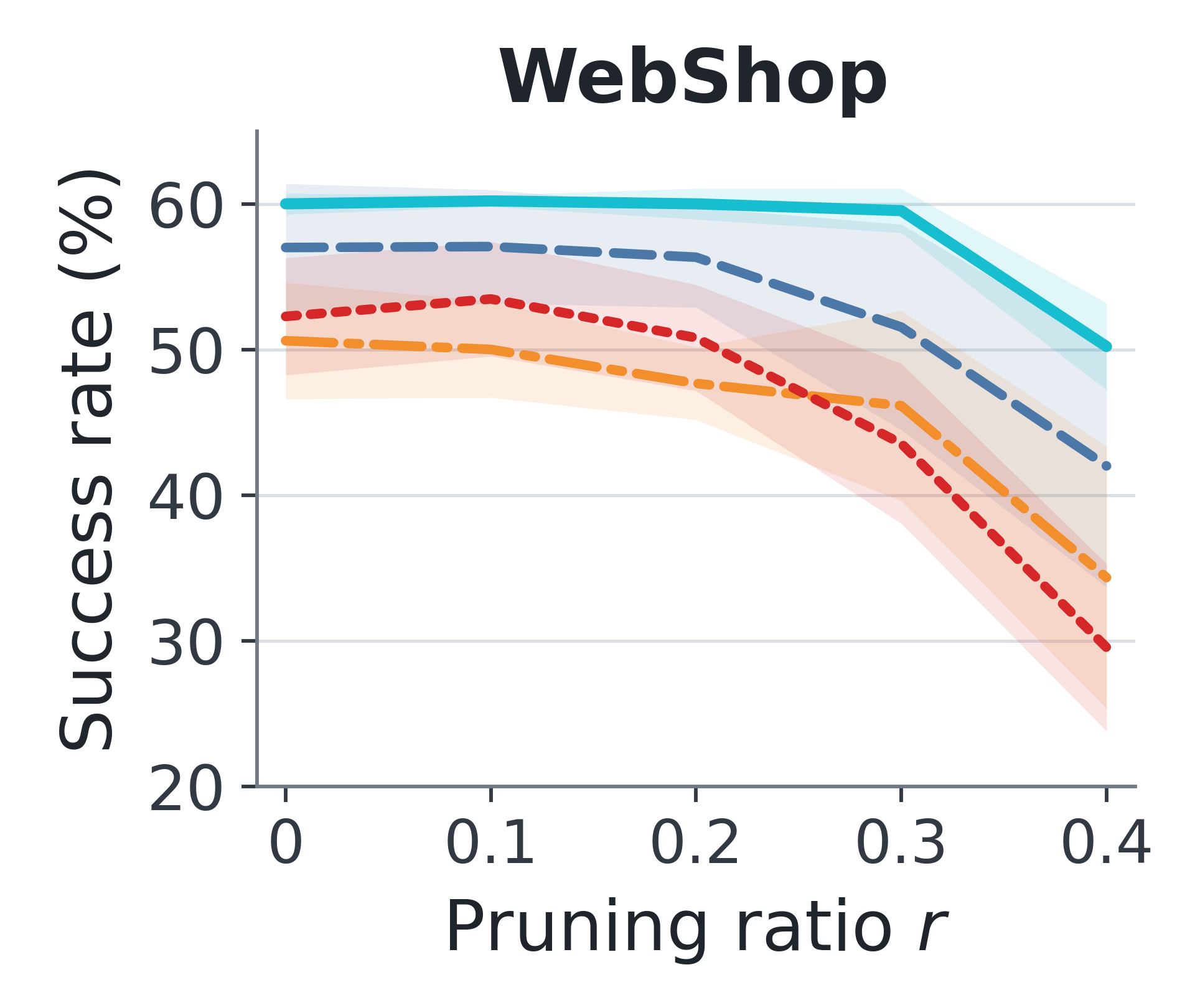}
    \caption{Unstructured FFN pruning}\label{fig:main-webshop-unstructured-pruning}
  \end{subfigure}%
 &
    \multicolumn{2}{@{}c@{}}{%
      \begin{minipage}[t]{\dimexpr2\MainPanelWidth+\MainPanelGap\relax}
        \includegraphics[width=\linewidth]{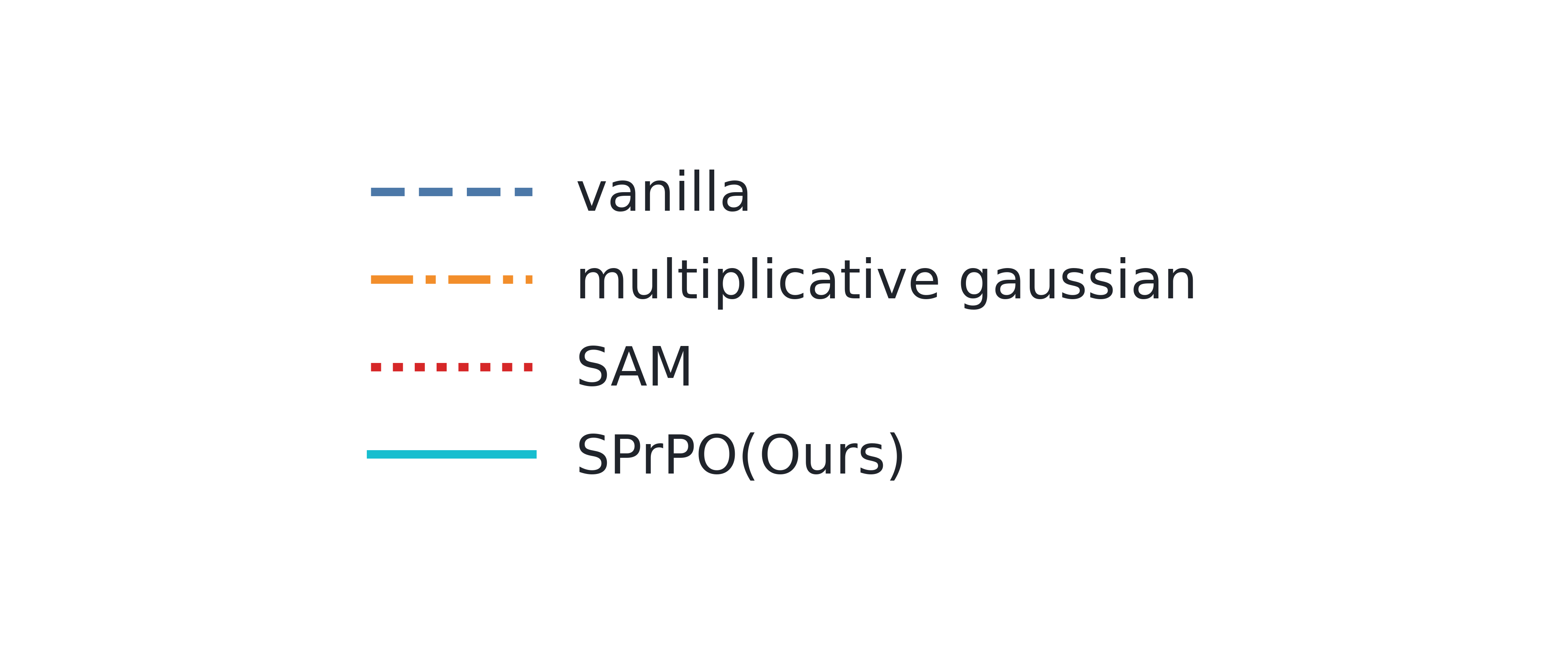}
      \end{minipage}%
    }
  \end{tabular}
  \caption{
  Evaluation of perturbation robustness on ALFWorld and WebShop across increasing perturbation magnitudes. The x-axis denotes perturbation strength, measured by Gaussian standard deviation $\sigma$, dropout probability $p$, pruning ratio $r$, and quantization bit-width.
  SPrPO generally exhibits less severe performance degradation and maintains higher success rates.
  }
  \label{fig:main-robustness}
\end{figure*}
\endgroup

%% file: images/main_results/tables/main_table.tex
\begingroup
\providecommand{\meanstd}[2]{\ensuremath{#1\,{\scriptstyle\pm\,#2}}}
\providecommand{\bestmeanstd}[2]{\ensuremath{\mathbf{#1}\,{\scriptstyle\pm\,#2}}}
\begin{table*}[t]
\centering
\small
\setlength{\tabcolsep}{6pt}
\renewcommand{\arraystretch}{1.16}
\caption{Evaluation of perturbation robustness on WebShop and ALFWorld. Results are reported as success rate$_{\mathrm{std}}$ (retention) over three seeds, where retention denotes the percentage of performance preserved relative to the corresponding clean policy. SPrPO achieves the strongest performance, maintaining both higher absolute success rates and higher relative retention.}
\label{tab:main-robustness}

\resizebox{\textwidth}{!}{%
\begin{tabular}{@{}lcccccc@{}}
\toprule
\textit{WebShop}
& Clean
& \shortstack{Gaussian\\($\sigma=.20$)}
& \shortstack{Dropout\\($p=.05$)}
& \shortstack{Structured Pruning\\($r=.20$)}
& \shortstack{Unstructured Pruning\\($r=.40$)}
& \shortstack{Quantization\\(INT6)} \\
\midrule
Vanilla
& \meanstd{57.00}{4.39}
& \meanstd{18.73}{3.03}~(32.9)
& \meanstd{12.20}{3.30}~(21.4)
& \meanstd{2.27}{1.86}~(4.0)
& \meanstd{42.00}{8.31}~(73.7)
& \meanstd{45.20}{9.18}~(79.3) \\
Gaussian
& \meanstd{50.60}{3.99}
& \meanstd{17.87}{5.12}~(35.3)
& \meanstd{11.33}{2.60}~(22.4)
& \meanstd{0.27}{0.12}~(0.5)
& \meanstd{34.33}{9.00}~(67.9)
& \meanstd{39.80}{9.90}~(78.7) \\
SAM
& \meanstd{52.27}{4.02}
& \meanstd{15.20}{2.65}~(29.1)
& \meanstd{8.87}{2.14}~(17.0)
& \meanstd{0.60}{0.87}~(1.1)
& \meanstd{29.53}{5.74}~(56.5)
& \meanstd{34.93}{4.54}~(66.8) \\
\rowcolor{blue!8}
\textbf{SPrPO (Ours)}
& \bestmeanstd{60.00}{0.72}
& \bestmeanstd{30.13}{1.81}~(50.2)
& \bestmeanstd{20.87}{2.00}~(34.8)
& \bestmeanstd{12.40}{3.65}~(20.7)
& \bestmeanstd{50.20}{2.99}~(83.7)
& \bestmeanstd{55.67}{1.60}~(92.8) \\
\midrule
\textit{ALFWorld}
& Clean
& \shortstack{Gaussian\\($\sigma=.20$)}
& \shortstack{Dropout\\($p=.05$)}
& \shortstack{Structured Pruning\\($r=.20$)}
& \shortstack{Unstructured Pruning\\($r=.70$)}
& \shortstack{Quantization\\(INT6)} \\
\midrule
Vanilla
& \meanstd{87.59}{0.97}
& \meanstd{64.96}{2.55}~(74.2)
& \meanstd{54.62}{1.65}~(62.4)
& \meanstd{41.73}{5.32}~(47.6)
& \meanstd{41.12}{4.35}~(46.9)
& \meanstd{86.98}{0.84}~(99.3) \\
Gaussian
& \meanstd{89.05}{0.97}
& \meanstd{72.26}{0.97}~(81.1)
& \meanstd{57.79}{1.80}~(64.9)
& \meanstd{23.72}{2.28}~(26.6)
& \meanstd{32.85}{1.93}~(36.9)
& \meanstd{87.47}{0.92}~(98.2) \\
SAM
& \meanstd{86.86}{0.73}
& \meanstd{66.79}{2.55}~(76.9)
& \meanstd{55.60}{0.76}~(64.0)
& \meanstd{22.02}{1.69}~(25.4)
& \meanstd{50.97}{1.69}~(58.7)
& \meanstd{86.25}{0.56}~(99.3) \\
\rowcolor{blue!8}
\textbf{SPrPO (Ours)}
& \bestmeanstd{91.85}{0.42}
& \bestmeanstd{75.91}{0.97}~(82.6)
& \bestmeanstd{64.60}{3.48}~(70.3)
& \bestmeanstd{47.08}{7.65}~(51.3)
& \bestmeanstd{58.03}{2.39}~(63.2)
& \bestmeanstd{91.24}{0.00}~(99.3) \\
\bottomrule
\end{tabular}%
}
\end{table*}
\endgroup

%% file: images/analysis/reward_surface/reward_surface.tex
\begingroup
\begin{figure*}[t]
  \centering
  \includegraphics[width=0.495\textwidth]{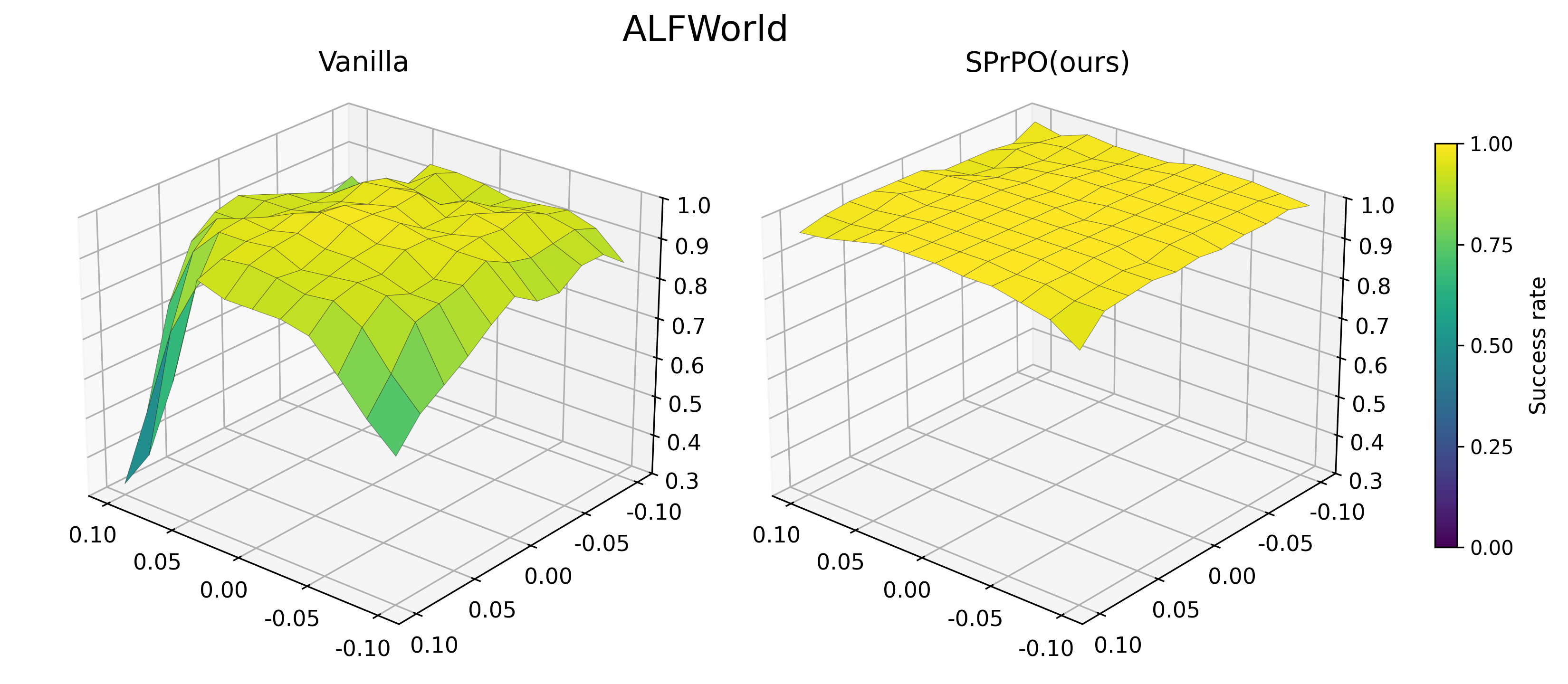}%
  \hfill
  \includegraphics[width=0.495\textwidth]{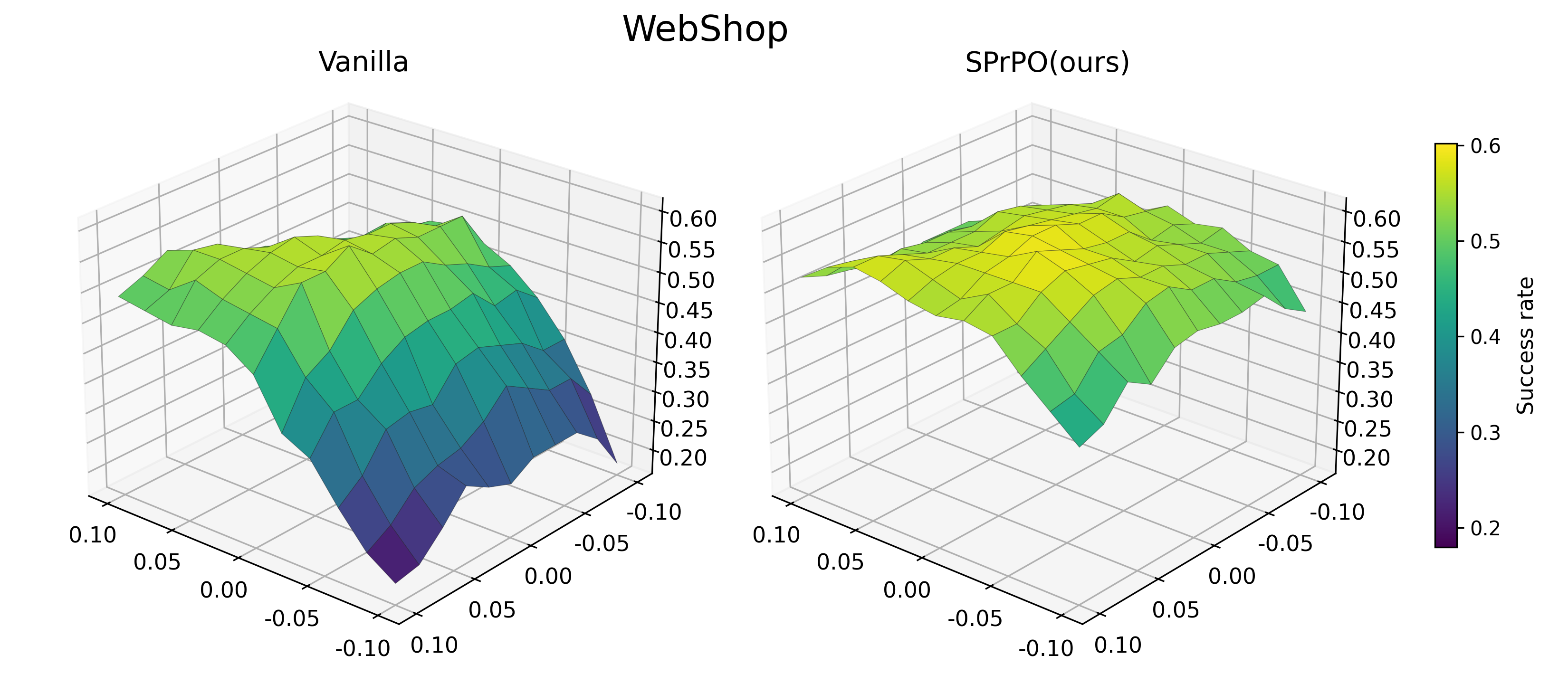}
  \caption{Reward landscape of policies trained with Vanilla and SPrPO on ALFWorld and WebShop. Vanilla exhibits sharp performance cliffs, while SPrPO shows flatter reward surfaces.}
  \label{fig:reward-surface}
\end{figure*}
\endgroup

%% file: sections/conclusion.tex
\section{Conclusion and Future Work}

We study perturbation robustness in RL-trained LLM agents and theoretically characterize policy improvement under perturbations. Building on this analysis, we develop SPrPO and show that it improves robustness across diverse policy perturbations. For future work, further analysis could consider different model structures, agent tasks, and perturbation settings.

%% file: statements/ai_use_statement.tex
\subsection*{AI use statement}
We used generative AI tools to assist with algorithm implementation and language polishing during the preparation of this work. All algorithmic design, theoretical analysis, experimental design, results, and conclusions were developed and verified by the authors. The authors take full responsibility for the content of the paper.

%% file: statements/reproducibility_statement.tex
\subsection*{Reproducibility statement}

Complete proofs of the theoretical results are included in Appendix~\ref{app:proofs}. Appendix~\ref{app:exp_details} provides detailed descriptions of the environments, training procedure, policy perturbations, algorithm implementation, and hyperparameters used in our experiments.

%% file: sections/appendix.tex
\input{appendix/detailed_proofs}

\input{appendix/more_related_works}
\input{appendix/experimental_details}

\input{appendix/more_experiment_results}
\input{appendix/text_examples}

%% file: appendix/detailed_proofs.tex
\section{Detailed Proofs for Theoretical Results}
\label{app:proofs}

\subsection{Proof of Theorem~\ref{thm:general_perturbation_improvement}} \label{app:proof_general_perturbation_improvement}

\begin{proof}
For each fixed perturbation $\xi$, applying Lemma~\ref{lem:trpo_improvement} to $\pi_\theta^\xi$ and $\pi_{\theta'}^\xi$ gives
\begin{equation*}
\Delta J_\xi
\ge
\frac{1}{1-\gamma}
\mathbb E_{\substack{
s\sim d_{\pi_\theta^\xi}\\
a\sim\pi_{\theta'}^\xi(\cdot\mid s)}}
\left[
A_{\pi_\theta^\xi}(s,a)
\right]
-
\frac{4\gamma A_{\max}}{(1-\gamma)^2}\alpha_\xi^2.
\end{equation*}

Taking expectation over $\xi\sim\nu$ yields
\begin{equation*}
\mathbb E_{\xi\sim\nu}[\Delta J_\xi]
\ge
\mathcal L_\nu(\theta')
-
\frac{4\gamma A_{\max}}{(1-\gamma)^2}
\mathbb E_{\xi\sim\nu}[\alpha_\xi^2]
=
m_\nu.
\end{equation*}

Let $\mu_\nu:=\mathbb E_{\xi\sim\nu}[\Delta J_\xi]$. Then $\mu_\nu\ge m_\nu$, and by assumption $\operatorname{Var}_\nu[\Delta J_\xi]\le v_\nu$. By Cantelli's inequality~\citep{Chebyshev}, for any $t>0$,
\begin{equation*}
\Pr_{\xi\sim\nu}
\left(
\Delta J_\xi-\mu_\nu\le -t
\right)
\le
\frac{\operatorname{Var}_\nu[\Delta J_\xi]}
{\operatorname{Var}_\nu[\Delta J_\xi]+t^2}.
\end{equation*}
Since $x/(x+t^2)$ is increasing in $x\ge 0$, choosing $t=\sqrt{(1-\beta)v_\nu/\beta}$ gives
\begin{equation*}
\Pr_{\xi\sim\nu}
\left(
\Delta J_\xi
<
\mu_\nu-\sqrt{\frac{1-\beta}{\beta}v_\nu}
\right)
\le
\frac{v_\nu}
{v_\nu+\frac{1-\beta}{\beta}v_\nu}
=
\beta.
\end{equation*}
Therefore,
\begin{equation*}
\Pr_{\xi\sim\nu}
\left(
\Delta J_\xi
\ge
m_\nu-\sqrt{\frac{1-\beta}{\beta}v_\nu}
\right)
\ge
\Pr_{\xi\sim\nu}
\left(
\Delta J_\xi
\ge
\mu_\nu-\sqrt{\frac{1-\beta}{\beta}v_\nu}
\right)
\ge
1-\beta.
\end{equation*}
\end{proof}

\subsection{Proof of Corollary~\ref{cor:gaussian_channel_improvement}}
\label{app:proof_gaussian_channel_improvement}

\begin{proof}
Let $C:=4\gamma A_{\max}/(1-\gamma)^2$. For each perturbation $\xi$, the TRPO improvement bound gives
\[
\Delta J_\xi
\ge
\mathcal L_\xi(\theta')-C\alpha_\xi^2
\ge
\mathcal L_\xi(\theta')-C\bar\alpha^2.
\]

Since $\xi\sim\mathcal N(0,\Sigma)$ and $\mathcal L_\xi(\theta')$ is continuously differentiable in $\xi$, the Gaussian Poincar\'e inequality~\citep{Gaussian_Poincare} gives the variance bound as follows:
\[
\begin{aligned}
\operatorname{Var}_{\xi\sim\nu}
[\mathcal L_\xi(\theta')]
&\le
\mathbb E_{\xi\sim\nu}
\left[
\nabla_\xi \mathcal L_\xi(\theta')^\top
\Sigma
\nabla_\xi \mathcal L_\xi(\theta')
\right] \\
&=
\sum_{i=1}^d \sigma_i^2 S_i(\theta') \\
&\le
\frac{\beta}{1-\beta}
\left(
\mathcal L_\nu(\theta')-C\bar\alpha^2
\right)^2.
\end{aligned}
\]

Applying Cantelli's inequality to $\mathcal L_\xi(\theta')$ yields
\[
\Pr_{\xi\sim\nu}
\left(
\mathcal L_\xi(\theta')-C\bar\alpha^2
\ge 0
\right)
\ge 1-\beta.
\]
Since $\Delta J_\xi\ge\mathcal L_\xi(\theta')-C\bar\alpha^2$ for every $\xi$, this yields $\Pr_{\xi\sim\nu}(\Delta J_\xi\ge0)\ge1-\beta$.
\end{proof}

\subsection{Proof of Theorem~\ref{thm:robust_policy_limit_points}}
\label{app:proof_robust_policy_limit_points}

\begin{proof}
\noindent\textbf{Step 1: Convergence.}
Since $\Pi$ is compact, every sequence in $\Pi$ admits a convergent
subsequence. Hence, $\{\pi_k\}_{k\ge0}$ has at least one limit point.
By construction, $\pi_{k+1}\in\mathcal U_\beta(\pi_k)$, which means
\[
\bar J_{k+1}-\bar J_k
\ge
m_{\nu,k}(\pi_{k+1})
\ge
\sqrt{
\frac{1-\beta}{\beta}
\operatorname{Var}_{\nu}
\left[
\Delta J(\pi_{k+1}^\xi;\pi_k^\xi)
\right]
}
\ge 0,
\]
where $\bar J_k:=\mathbb E_{\xi\sim\nu}[J(\pi_k^\xi)]$.
Thus, $\{\bar J_k\}_{k\ge0}$ is non-decreasing.
Since $J$ is bounded, $\{\bar J_k\}_{k\ge0}$ is also bounded above,
and therefore converges to a finite limit, denoted by $J^\star$.
Let $\bar\pi$ be any limit point of $\{\pi_k\}_{k\ge0}$, and let
$\pi_{k_j}\to\bar\pi$ be a convergent subsequence.
By continuity of the perturbed return and the dominated convergence
theorem, we have
\[
\mathbb E_{\xi\sim\nu}[J(\bar\pi^\xi)]
=
\lim_{j\to\infty}
\mathbb E_{\xi\sim\nu}[J(\pi_{k_j}^\xi)]
=
\lim_{j\to\infty}\bar J_{k_j}
=
J^\star.
\]

\noindent\textbf{Step 2: Probabilistic robustness.}
Let $\Xi_0\subseteq\Xi_\delta$ be the subset in
Assumption~\ref{ass:initial_variance}, and let
$\nu_0:=\nu(\cdot\mid\Xi_0)$.
Then $\nu(\Xi_0)\ge1-\beta_0$ and
$\sqrt{\operatorname{Var}_{\nu_0}[J(\pi_0^\xi)]}\le\sigma_0$.
For any square-integrable random variable $X$, conditioning on an
event of probability at least $1-\beta_0$ gives
$\operatorname{Var}_{\nu_0}[X]
\le(1-\beta_0)^{-1}\operatorname{Var}_{\nu}[X]$.
Together with the defining condition of $\mathcal U_\beta(\pi_k)$,
this yields
\[
\sqrt{
\operatorname{Var}_{\nu_0}
\left[
\Delta J(\pi_{k+1}^\xi;\pi_k^\xi)
\right]
}
\le
\frac{1}{\sqrt{1-\beta_0}}
\sqrt{\frac{\beta}{1-\beta}}
(\bar J_{k+1}-\bar J_k).
\]

By the triangle inequality for standard deviations,
\begin{align*}
\sqrt{
\operatorname{Var}_{\nu_0}[J(\pi_{k+1}^\xi)]
}
-
\sqrt{
\operatorname{Var}_{\nu_0}[J(\pi_k^\xi)]
}
&\le
\sqrt{
\operatorname{Var}_{\nu_0}
\left[
\Delta J(\pi_{k+1}^\xi;\pi_k^\xi)
\right]
} \\
&\le
\frac{1}{\sqrt{1-\beta_0}}
\sqrt{\frac{\beta}{1-\beta}}
(\bar J_{k+1}-\bar J_k).
\end{align*}

Summing from $0$ to $k-1$ gives
\begin{align*}
\sqrt{
\operatorname{Var}_{\nu_0}[J(\pi_k^\xi)]
}
&\le
\sigma_0
+
\frac{1}{\sqrt{1-\beta_0}}
\sqrt{\frac{\beta}{1-\beta}}
(\bar J_k-\bar J_0) \\
&\le
\sigma_0
+
\frac{C_J}{\sqrt{1-\beta_0}}
\sqrt{\frac{\beta}{1-\beta}}
=: \bar\sigma,
\end{align*}
where $C_J:=J_{\max}-J_{\min}$ denotes the range of the bounded
return.
By continuity and uniform boundedness of the perturbed return,
its first and second moments under $\nu_0$ converge along the
subsequence. Hence,
$\sqrt{\operatorname{Var}_{\nu_0}[J(\bar\pi^\xi)]}\le\bar\sigma$.
Since $\nu_0=\nu(\cdot\mid\Xi_0)$ with
$\nu(\Xi_0)\ge1-\beta_0$,
\[
\left|
\mathbb E_{\nu}[J(\bar\pi^\xi)]
-
\mathbb E_{\nu_0}[J(\bar\pi^\xi)]
\right|
\le
\beta_0 C_J.
\]

By Assumption~\ref{ass:initial_variance},
$\beta_0\le\bar\beta/2$ and
$\epsilon-\beta_0 C_J\ge\epsilon/2>0$.
Using the preceding bound on the conditional mean and applying
Cantelli's inequality, we obtain
\begin{align*}
\Pr_{\xi\sim\nu}
\left(
J(\bar\pi^\xi)<J^\star-\epsilon
\right)
&\le
\beta_0
+
(1-\beta_0)
\Pr_{\xi\sim\nu_0}
\left(
J(\bar\pi^\xi)
<
\mathbb E_{\xi\sim\nu_0}[J(\bar\pi^\xi)]
-
(\epsilon-\beta_0 C_J)
\right) \\
&\le
\beta_0
+
(1-\beta_0)
\frac{
\bar\sigma^2
}{
\bar\sigma^2+
(\epsilon-\beta_0 C_J)^2
}.
\end{align*}

The condition
$\beta\le\epsilon^2\bar\beta/(100C_J^2+\epsilon^2\bar\beta)$
implies
\[
C_J\sqrt{\frac{\beta}{1-\beta}}
\le
\frac{1}{10}\epsilon\sqrt{\bar\beta}.
\]
Together with
$\sigma_0<\epsilon\sqrt{\bar\beta}/4$
from Assumption~\ref{ass:initial_variance}, this gives
\begin{align*}
\beta_0
+
(1-\beta_0)
\frac{\bar\sigma^2}
{\bar\sigma^2+(\epsilon-\beta_0 C_J)^2}
&\le
\beta_0+
\frac{4(1-\beta_0)\bar\sigma^2}{\epsilon^2}
\\
&=
\beta_0+
\frac{4}{\epsilon^2}
\left(
\sqrt{1-\beta_0}\,\sigma_0
+
C_J\sqrt{\frac{\beta}{1-\beta}}
\right)^2
\\
&\le
\frac{\bar\beta}{2}
+
4\left(\frac14+\frac1{10}\right)^2\bar\beta
\\
&=
\frac{99}{100}\bar\beta
\le \bar\beta.
\end{align*}
It follows that
\[
\Pr_{\xi\sim\nu}
\left(
J(\bar\pi^\xi)\ge J^\star-\epsilon
\right)
\ge 1-\bar\beta,
\]
which shows that every limit point $\bar\pi$ is $(\delta,\epsilon,\bar\beta)$-probabilistic perturbation robust under $\nu_\delta$.
\end{proof}

%% file: appendix/more_related_works.tex
\section{Additional Related Work}
\label{app:more_related_works}

\paragraph{Robust Policy in Reinforcement Learning.}
There are several perturbation-based approaches to robust RL, and they have different designs for how perturbations are constructed and where they enter the interaction process. One major line constructs adversarial perturbations, including RARL~\citep{RARL}, Action Robust RL~\citep{ARRL}, and SHAPO~\citep{shapo}. These methods seek perturbations that minimize policy performance and train the policy against them, typically following a min--max objective $\max_{\theta}\min_{\xi\in\Xi} J(\pi_{\theta}^{\xi})$, where the adversarial perturbation may be approximated using gradient information, closely related to SAM-style~\citep{SAM} optimization.

Another line uses stochastic rather than worst-case perturbations, typically optimizing an average-case objective $\max_{\theta}\mathbb{E}_{\xi}[J(\pi_{\theta}^{\xi})]$. For example, \citet{policy_smoothing} inject Gaussian noise into observations, whereas \citet{param_noise} perturb policy parameters. Stochastic perturbations provide a simpler training mechanism, avoiding explicit inner optimization in constructing adversarial perturbations. Our work adopts stochastic perturbations and studies their role in improving the robustness of long-horizon LLM agents.

\paragraph{Perturbation-based Training for LLMs.}
Introducing perturbations during LLM training has been widely explored across different settings for better model robustness, including natural language understanding, instruction finetuning~\citep{neftune}, and reinforcement learning. Several studies consider adversarial perturbations. FreeLB~\citep{freelb} and PTP~\citep{ptp} apply adversarial perturbations in the input representation space in single-turn language tasks. Also, \citet{bilora} parameterize adversarial perturbations with a separate low-rank LoRA branch, which can be both more efficient and stable for LLMs.

For multi-turn or reasoning-based tasks, stochastic perturbations are more commonly used during finetuning for their simplicity and training stability. For example, \citet{noisyrollout} perturb model inputs during rollouts, and \citet{ncgrpo} perturb latent representations with Gaussian noise; both promote LLM exploration and diversify reasoning trajectories, thereby improving reasoning performance. More importantly, recent work shows that perturbation-based training can help models better preserve their performance after quantization. \citet{tamingweights} perturb sensitive components during supervised finetuning to better adapt quantization errors, while \citet{qerl} exploits and adaptively controls quantization-induced noise during RL to balance robustness and training stability. In contrast, we study perturbation-based training in long-horizon agentic interactions, where perturbations can alter subsequent observations over the trajectory. This motivates explicit consideration of optimization stability and more careful control of the injected perturbations.

%% file: appendix/experimental_details.tex
\section{Experiment Details}
\label{app:exp_details}

The code will be made publicly available upon acceptance.

\subsection{Environment Details}
\label{app:env_details}
\paragraph{Basic setup.} We conduct experiments on ALFWorld~\citep{alfworld} and WebShop~\citep{webshop}. ALFWorld is a text-based embodied environment for household tasks, with 3,553 tasks in its training split. We evaluate on all 274 validation tasks, consisting of 140 in-distribution and 134 out-of-distribution tasks. For WebShop, the benchmark contains 12,087 shopping instructions, with 500 tasks reserved for testing. We use the remaining 11,587 tasks as the RL
training pool and evaluate on all 500 test tasks. Episodes terminate upon task completion or when the maximum number of environment steps is reached, which is 50 for ALFWorld and 15 for WebShop.

\paragraph{Observation.}
At each environment step, the policy receives a textual observation containing the task instruction, recent interaction history, the current environment observation, and the currently admissible actions. We use the following common prompt structure for both environments:

\begin{promptbox}
\small
\textbf{Task:} \{task instruction\}

\smallskip
\textbf{Interaction context:} \{recent observations and actions\}

\smallskip
\textbf{Current observation:} \{current observation\}

\smallskip
\textbf{Available actions:} \{admissible actions\}

\smallskip
\textbf{Response:} Output the next action in \texttt{<action>...</action>}.
\end{promptbox}

\paragraph{Action.}
The policy directly generates an environment action at each step. For ALFWorld, the generated content is a structured natural-language command such as \texttt{go to fridge 1}; similarly for WebShop, it follows the environment action format, such as \texttt{search[wireless headphones]} or \texttt{click[Buy Now]}.

We use a no-thinking setting~\citep{no_thinking_in_RL1,no_thinking_in_RL2}, disabling explicit reasoning traces throughout both training and evaluation. This choice is motivated by both optimization and computational considerations. First, hallucinations in intermediate reasoning can propagate to the final decision. Second, RL updates over long reasoning traces can introduce token-level credit-assignment ambiguity, since incorrect intermediate thoughts may still be reinforced by a positive trajectory-level reward. Finally, thinking-free mode is also natural for many of the grounded tasks, where actions such as navigating to or picking up an object often do not require lengthy explicit deliberation. Besides, removing explicit reasoning substantially reduces training cost in our setting, shortening training from roughly seven days to two and yielding about a \textbf{3.5x} speedup.

\paragraph{Reward.}
Both environments provide sparse trajectory-level task rewards following GiGPO~\citep{gigpo}. ALFWorld uses a sparse success-based reward. WebShop instead provides a partial-completion score based on how well the purchased product matches the target attributes. We also apply a mild step-level format penalty of $-0.1$ when the response does not follow the required action format, which helps accelerate policy convergence.

\subsection{Training Details}
\label{app:train_details}

\paragraph{Base model and optimization.}
We fine-tune Qwen2.5-1.5B-Instruct with LoRA~\citep{lora} applied to all attention and FFN linear projections, keeping the base-model weights frozen. All methods use GiGPO~\citep{gigpo} for advantage estimation and optimize the same PPO clipped surrogate objective. All methods share the training hyperparameters in Table~\ref{tab:training-hparams}.

\paragraph{Perturbation implementation.}
Gaussian, SAM, and SPrPO perturb the intermediate representation in each FFN block in the same manner. Let $h_\ell=\operatorname{SiLU}(W_{\mathrm{gate},\ell}x_\ell)\odot W_{\mathrm{up},\ell}x_\ell$
denote the activated FFN representation at layer $\ell$ before the down-projection. We apply a channel-wise multiplicative perturbation
$h_\ell\leftarrow h_\ell\odot(1+\xi_\ell)$ before computing $W_{\mathrm{down},\ell}h_\ell$. The main perturbation hyperparameters are summarized in Table~\ref{tab:perturbation-hparams}. We also experimented with parameter-space noise and dropout during training, but both led to training collapse, consistent with our observation that RL-trained LLM agents are fragile to policy perturbations.

\begin{table}[t]
\centering
\small
\setlength{\tabcolsep}{5pt}
\caption{Shared training hyperparameters used across all methods.}
\label{tab:training-hparams}
\begin{tabular}{@{}lc@{}}
\toprule
Hyperparameter & Value \\
\midrule
Base model & Qwen2.5-1.5B-Instruct \\
LoRA rank / $\alpha_{\mathrm{LoRA}}$ & $64$ / $64$ \\
Optimizer & Adam \\
Learning rate & $3\times10^{-6}$ \\
Training steps & $200$ \\
Tasks per step & $16$ \\
Rollouts per task & $8$ \\
Trajectories per PPO batch & $128$ \\
Discount factor $\gamma$ & $0.95$ \\
PPO clip ratio & $0.2$ \\
Dual-clip coefficient & $3.0$ \\
Entropy coefficient & $0.001$ \\
WebShop prompt / response length & $4096$ / $512$ \\
ALFWorld prompt / response length & $3072$ / $1024$ \\
Validation temperature & $0.4$ \\
\bottomrule
\end{tabular}
\end{table}

\begin{table}[t]
\centering
\small
\setlength{\tabcolsep}{5pt}
\caption{Perturbation-specific training hyperparameters used in the main experiments.}
\label{tab:perturbation-hparams}
\begin{tabular}{@{}lll@{}}
\toprule
Method & Hyperparameter & Value \\
\midrule
Gaussian & Noise scale $\sigma$ & $0.10$ \\
SAM & Perturbation scale $\rho$ & $0.005$ \\
SPrPO & Base perturbation scale $\rho$ & $0.20$ \\
SPrPO & Sensitivity exponent $\alpha$ & $1.0$ \\
\bottomrule
\end{tabular}
\end{table}

\paragraph{Baseline 1: Vanilla.}
Vanilla uses standard GiGPO training without policy perturbations. The
resulting procedure is summarized in Algorithm~\ref{alg:vanilla}.

\begin{algorithm}[htbp]
\caption{Vanilla GiGPO}
\label{alg:vanilla}
\small
\begin{algorithmic}[1]
\Require Initial policy $\theta_1$, iterations $K$
\For{$k=1,\ldots,K$}
    \State Collect $\mathcal D_k\sim\pi_{\theta_k}$
    \State Estimate group-relative advantages $A_i$ for $i\in\mathcal D_k$
    \State Update the policy:
    $\theta_{k+1}\gets\theta_k+\eta_k\nabla_\theta\mathcal L_k(\theta_k)$
\EndFor
\State \Return $\theta_{K+1}$
\end{algorithmic}
\end{algorithm}

\paragraph{Baseline 2: Multiplicative Gaussian.}
The Gaussian baseline uses isotropic multiplicative noise; that is, the perturbation variance is the same across all perturbed FFN channels. At each training iteration, we sample a Gaussian perturbation and keep it fixed throughout the corresponding rollout and policy update.

\begin{algorithm}[htbp]
\caption{Multiplicative Gaussian perturbation training}
\label{alg:gaussian}
\small
\begin{algorithmic}[1]
\Require Initial policy $\theta_1$, iterations $K$, noise scale $\sigma$
\For{$k=1,\ldots,K$}
    \State Sample $z_k$, where $z_{k,\ell c}\sim\mathcal N(0,1)$
    \State Set $\xi_k\gets\sigma z_k$
    \State Collect $\mathcal D_k\sim\pi_{\theta_k}^{\xi_k}$
    and estimate advantages $A_i$
    \State Update the policy:
    $\theta_{k+1}\gets
    \theta_k+\eta_k\nabla_\theta\mathcal L_k(\theta_k;\xi_k)$
\EndFor
\State \Return $\theta_{K+1}$
\end{algorithmic}
\end{algorithm}

\paragraph{Baseline 3: SAM.}
We adapt Sharpness-Aware Minimization~\citep{SAM} to the same FFN-channel perturbation space used by Gaussian and SPrPO. At each training iteration, we perform one additional backward pass on the PPO loss to estimate the gradient with respect to multiplicative channel gain $g_{\ell c}=\left. \partial\mathcal L/\partial u_{\ell c}\right|_{u_{\ell c}=1}$. We normalize this gradient to construct an adversarial perturbation, then replay the same training batch under this perturbation to update the policy. SAM is extremely sensitive in our experiment: using $\rho=0.1$ leads to performance collapse. We therefore use a much smaller scale of $\rho=5\times10^{-3}$ to obtain reasonable performance.

\begin{algorithm}[htbp]
\caption{FFN-channel SAM perturbation training}
\label{alg:sam}
\small
\begin{algorithmic}[1]
\Require Initial policy $\theta_1$, iterations $K$, perturbation scale $\rho$
\For{$k=1,\ldots,K$}
    \State Collect $\mathcal D_k\sim\pi_{\theta_k}$
    and estimate advantages $A_i$
    \State Estimate channel gradients:
    $g_{k,\ell c}\gets
    \left.
    \dfrac{\partial\mathcal L_k(\theta_k;\mathbf u-\mathbf 1)}
    {\partial u_{\ell c}}
    \right|_{\mathbf u=\mathbf 1}$
    \State $\xi_k\gets-\rho\,g_k/\|g_k\|_2$
    \State Replay $\mathcal D_k$ under $\xi_k$ and update the policy:
    $\theta_{k+1}\gets
    \theta_k+\eta_k\nabla_\theta\mathcal L_k(\theta_k;\xi_k)$
\EndFor
\State \Return $\theta_{K+1}$
\end{algorithmic}
\end{algorithm}

\paragraph{Our method: SPrPO.}
We follow Algorithm~\ref{alg:prpo} in the main text. SPrPO estimates a
channel-wise sensitivity statistic from the gradient of the training
surrogate with respect to the multiplicative FFN perturbation. The resulting noise scale increases with the overall perturbation magnitude $\rho$ and decreases with channel sensitivity according to exponent $\alpha$, such that more sensitive channels receive smaller perturbations. After each policy update, we replay the current training batch at $\theta_{k+1}$ for one additional forward/backward pass to estimate the sensitivity statistics.

\paragraph{Training overhead.}
Table~\ref{tab:training-overhead} reports the observed training cost of SPrPO relative to Vanilla. Experiments were run with two NVIDIA RTX PRO 6000 Blackwell Max-Q GPUs on a host equipped with AMD EPYC
9354 CPU. SPrPO introduces no additional environment rollout, and its main computational overhead comes from one additional forward/backward pass that replays the current training batch after the policy update to estimate channel sensitivity.

\begin{table}[htbp]
\centering
\small
\setlength{\tabcolsep}{5pt}
\caption{Observed training cost of Vanilla and SPrPO on Qwen2.5-1.5B-Instruct. Times are averaged seconds per training step and exclude periodic validation and checkpoint saving.}
\label{tab:training-overhead}
\begin{tabular}{@{}llrrr@{}}
\toprule
Environment & Method & Rollout & Update & Total \\
\midrule
ALFWorld & Vanilla & 143.8 & 198.2 & 343.6 \\
ALFWorld & SPrPO & 171.5 & 400.1 & 572.9 \\
\midrule
WebShop & Vanilla & 50.1 & 60.0 & 110.6 \\
WebShop & SPrPO & 58.2 & 138.7 & 197.4 \\
\bottomrule
\end{tabular}
\end{table}

\subsection{Evaluation Details}
\label{app:eval_details}

\paragraph{General protocol.}
All evaluations use vLLM~\citep{vllm} inference, and use temperature $0.4$ for inference, following GiGPO~\citep{gigpo}. WebShop is evaluated on all $500$ goals, while ALFWorld uses the official validation split consisting of $140$ ID and $134$ OOD gamefiles. All perturbations below are applied only during evaluation to a fixed trained policy.

\paragraph{Channel-wise Gaussian perturbation.}
We use the same multiplicative channel-wise perturbation as during training. For the FFN activation $h_{\ell c}$ before the down-projection, we apply $\widetilde h_{\ell c}=(1+\xi_{\ell c})h_{\ell c}$, where $\xi_{\ell c}\sim\mathcal N(0,\sigma^2)$. We sweep $\sigma\in\{0.02,0.05,0.10,0.15,0.20\}$ on both WebShop and ALFWorld.

\paragraph{Channel dropout.}
We apply a Bernoulli mask to randomly remove FFN channels, independently dropping each channel with probability $p$. We sweep $p\in\{0.01,0.02,0.03,0.04,0.05\}$ on both WebShop and ALFWorld.

\paragraph{Structured pruning (Fisher importance).}
We prune FFN intermediate channels according to a diagonal-Fisher~\citep{fisher_pruning} importance score. For channel $c$ in layer $\ell$, we estimate $\widehat F_{\ell c}=\mathbb E_{\mathrm{calib}}[(h_{\ell c}\,\partial\mathcal L/\partial h_{\ell c})^2]$ using a fixed calibration set of $128$ agent-interaction sequences collected from a Vanilla-trained policy on the corresponding benchmark. All compared methods share this calibration set, while the Fisher scores are computed separately for each model using its own activations and gradients. Channels are ranked by $\widehat F_{\ell c}$, and the lowest-scoring fraction $r$ is removed by masking the corresponding rows of $W_{\mathrm{gate},\ell}$ and $W_{\mathrm{up},\ell}$ and columns of $W_{\mathrm{down},\ell}$. We sweep $r\in\{0.05,0.10,0.15,0.20\}$ on both WebShop and ALFWorld.

\paragraph{Unstructured pruning (SparseGPT).}
We use SparseGPT~\citep{sparseGPT} to prune a fraction $r$ of weights in each FFN projection matrix $W_{\mathrm{gate},\ell}$, $W_{\mathrm{up},\ell}$, and $W_{\mathrm{down},\ell}$, while leaving all attention weights dense. SparseGPT is applied sequentially across transformer layers using its Hessian-based weight selection and compensation procedure, with block size $128$ and Hessian damping $\lambda=0.01$. Calibration uses $128$ sequences drawn from the same rollout-derived calibration pool used for structured pruning. We sweep
$r\in\{0.10,0.20,0.30,0.40\}$ on WebShop and $r\in\{0.10,0.30,0.50,0.70\}$ on ALFWorld.

\paragraph{Quantization (round-to-nearest).}
We apply round-to-nearest (RTN) quantization to all linear weights in the transformer layers, including both attention and FFN projections. Weights are partitioned into groups of $128$, with each group sharing a quantization scale. For a group $\mathcal G$ and bit width $b$, we use $s_{\mathcal G}=\max_{w\in\mathcal G}|w|/(2^{b-1}-1)$. Each weight is divided by this scale, rounded to the nearest integer code, and clipped to the representable range. We evaluate INT8, INT6, and INT4 quantization.

%% file: appendix/more_experiment_results.tex
\section{More Experiment Results}
\label{app:more_experiment_results}

\paragraph{Main tables.}
We report the robustness results across all evaluated perturbation types and strengths. Results are mean $\pm$ standard deviation over three seeds, with boldface denoting the highest mean, including ties.

\begin{table*}[!htbp]
\centering
\small
\setlength{\tabcolsep}{4pt}
\renewcommand{\arraystretch}{1.10}
\caption{ALFWorld: overall success rate (\%).}
\label{tab:more-alfworld-combined-success-rate}
\resizebox{\linewidth}{!}{%
\begin{tabular}{@{}llcccc@{}}
\toprule
Perturbation & Strength & Vanilla & Gaussian & SAM & \textbf{SPrPO} \\
\midrule
Clean & BF16 & \ensuremath{87.59\,{\scriptstyle\pm\,0.97}} & \ensuremath{89.05\,{\scriptstyle\pm\,0.97}} & \ensuremath{86.86\,{\scriptstyle\pm\,0.73}} & \ensuremath{\mathbf{91.85}\,{\scriptstyle\pm\,0.42}} \\
\midrule
Gaussian & $\sigma=0.02$ & \ensuremath{88.44\,{\scriptstyle\pm\,1.38}} & \ensuremath{90.02\,{\scriptstyle\pm\,0.21}} & \ensuremath{86.01\,{\scriptstyle\pm\,0.76}} & \ensuremath{\mathbf{92.70}\,{\scriptstyle\pm\,0.36}} \\
 & $\sigma=0.05$ & \ensuremath{87.23\,{\scriptstyle\pm\,1.67}} & \ensuremath{89.54\,{\scriptstyle\pm\,1.28}} & \ensuremath{86.74\,{\scriptstyle\pm\,0.92}} & \ensuremath{\mathbf{91.12}\,{\scriptstyle\pm\,0.56}} \\
 & $\sigma=0.10$ & \ensuremath{84.79\,{\scriptstyle\pm\,2.23}} & \ensuremath{85.89\,{\scriptstyle\pm\,0.56}} & \ensuremath{86.13\,{\scriptstyle\pm\,0.73}} & \ensuremath{\mathbf{89.78}\,{\scriptstyle\pm\,0.63}} \\
 & $\sigma=0.15$ & \ensuremath{79.81\,{\scriptstyle\pm\,0.92}} & \ensuremath{81.51\,{\scriptstyle\pm\,1.11}} & \ensuremath{81.63\,{\scriptstyle\pm\,2.48}} & \ensuremath{\mathbf{86.13}\,{\scriptstyle\pm\,1.90}} \\
 & $\sigma=0.20$ & \ensuremath{64.96\,{\scriptstyle\pm\,2.55}} & \ensuremath{72.26\,{\scriptstyle\pm\,0.97}} & \ensuremath{66.79\,{\scriptstyle\pm\,2.55}} & \ensuremath{\mathbf{75.91}\,{\scriptstyle\pm\,0.97}} \\
\midrule
Dropout & $p=0.01$ & \ensuremath{82.97\,{\scriptstyle\pm\,2.43}} & \ensuremath{86.50\,{\scriptstyle\pm\,2.63}} & \ensuremath{85.16\,{\scriptstyle\pm\,1.65}} & \ensuremath{\mathbf{88.93}\,{\scriptstyle\pm\,0.21}} \\
 & $p=0.02$ & \ensuremath{79.81\,{\scriptstyle\pm\,0.84}} & \ensuremath{81.87\,{\scriptstyle\pm\,3.37}} & \ensuremath{81.51\,{\scriptstyle\pm\,1.17}} & \ensuremath{\mathbf{86.13}\,{\scriptstyle\pm\,0.97}} \\
 & $p=0.03$ & \ensuremath{71.53\,{\scriptstyle\pm\,1.32}} & \ensuremath{73.84\,{\scriptstyle\pm\,5.59}} & \ensuremath{74.33\,{\scriptstyle\pm\,3.60}} & \ensuremath{\mathbf{79.93}\,{\scriptstyle\pm\,1.93}} \\
 & $p=0.04$ & \ensuremath{63.50\,{\scriptstyle\pm\,1.67}} & \ensuremath{65.82\,{\scriptstyle\pm\,1.69}} & \ensuremath{63.14\,{\scriptstyle\pm\,2.55}} & \ensuremath{\mathbf{73.36}\,{\scriptstyle\pm\,2.19}} \\
 & $p=0.05$ & \ensuremath{54.62\,{\scriptstyle\pm\,1.65}} & \ensuremath{57.79\,{\scriptstyle\pm\,1.80}} & \ensuremath{55.60\,{\scriptstyle\pm\,0.76}} & \ensuremath{\mathbf{64.60}\,{\scriptstyle\pm\,3.48}} \\
\midrule
Structured pruning & $r=0.05$ & \ensuremath{83.82\,{\scriptstyle\pm\,1.38}} & \ensuremath{83.45\,{\scriptstyle\pm\,1.11}} & \ensuremath{85.64\,{\scriptstyle\pm\,1.72}} & \ensuremath{\mathbf{89.17}\,{\scriptstyle\pm\,0.56}} \\
 & $r=0.10$ & \ensuremath{72.38\,{\scriptstyle\pm\,2.23}} & \ensuremath{77.25\,{\scriptstyle\pm\,0.56}} & \ensuremath{72.99\,{\scriptstyle\pm\,0.36}} & \ensuremath{\mathbf{83.21}\,{\scriptstyle\pm\,0.73}} \\
 & $r=0.15$ & \ensuremath{59.61\,{\scriptstyle\pm\,2.01}} & \ensuremath{67.88\,{\scriptstyle\pm\,0.73}} & \ensuremath{49.76\,{\scriptstyle\pm\,4.59}} & \ensuremath{\mathbf{70.44}\,{\scriptstyle\pm\,2.03}} \\
 & $r=0.20$ & \ensuremath{41.73\,{\scriptstyle\pm\,5.32}} & \ensuremath{23.72\,{\scriptstyle\pm\,2.28}} & \ensuremath{22.02\,{\scriptstyle\pm\,1.69}} & \ensuremath{\mathbf{47.08}\,{\scriptstyle\pm\,7.65}} \\
\midrule
Unstructured pruning & $r=0.10$ & \ensuremath{87.96\,{\scriptstyle\pm\,0.63}} & \ensuremath{88.81\,{\scriptstyle\pm\,2.48}} & \ensuremath{87.23\,{\scriptstyle\pm\,0.97}} & \ensuremath{\mathbf{91.85}\,{\scriptstyle\pm\,1.28}} \\
 & $r=0.30$ & \ensuremath{87.35\,{\scriptstyle\pm\,0.56}} & \ensuremath{87.59\,{\scriptstyle\pm\,0.73}} & \ensuremath{86.25\,{\scriptstyle\pm\,0.56}} & \ensuremath{\mathbf{91.00}\,{\scriptstyle\pm\,0.56}} \\
 & $r=0.50$ & \ensuremath{84.31\,{\scriptstyle\pm\,1.59}} & \ensuremath{83.33\,{\scriptstyle\pm\,0.92}} & \ensuremath{80.90\,{\scriptstyle\pm\,2.14}} & \ensuremath{\mathbf{90.39}\,{\scriptstyle\pm\,2.01}} \\
 & $r=0.70$ & \ensuremath{41.12\,{\scriptstyle\pm\,4.35}} & \ensuremath{32.85\,{\scriptstyle\pm\,1.93}} & \ensuremath{50.97\,{\scriptstyle\pm\,1.69}} & \ensuremath{\mathbf{58.03}\,{\scriptstyle\pm\,2.39}} \\
\midrule
Quantization & INT8 & \ensuremath{88.44\,{\scriptstyle\pm\,0.92}} & \ensuremath{88.81\,{\scriptstyle\pm\,0.84}} & \ensuremath{86.86\,{\scriptstyle\pm\,0.97}} & \ensuremath{\mathbf{91.24}\,{\scriptstyle\pm\,0.73}} \\
 & INT6 & \ensuremath{86.98\,{\scriptstyle\pm\,0.84}} & \ensuremath{87.47\,{\scriptstyle\pm\,0.92}} & \ensuremath{86.25\,{\scriptstyle\pm\,0.56}} & \ensuremath{\mathbf{91.24}\,{\scriptstyle\pm\,0.00}} \\
 & INT4 & \ensuremath{47.81\,{\scriptstyle\pm\,3.18}} & \ensuremath{39.17\,{\scriptstyle\pm\,4.04}} & \ensuremath{\mathbf{59.61}\,{\scriptstyle\pm\,2.56}} & \ensuremath{48.05\,{\scriptstyle\pm\,4.02}} \\
\bottomrule
\end{tabular}%
}
\end{table*}

\begin{table*}[!htbp]
\centering
\small
\setlength{\tabcolsep}{4pt}
\renewcommand{\arraystretch}{1.10}
\caption{ALFWorld: ID and OOD success rates (\%).}
\label{tab:more-alfworld-id-ood-success-rate}
\resizebox{\linewidth}{!}{%
\begin{tabular}{@{}llcccccccc@{}}
\toprule
\multicolumn{2}{c}{} & \multicolumn{4}{c}{ID} & \multicolumn{4}{c}{OOD} \\
\cmidrule(lr){3-6}\cmidrule(lr){7-10}
Perturbation & Strength & Vanilla & Gaussian & SAM & \textbf{SPrPO} & Vanilla & Gaussian & SAM & \textbf{SPrPO} \\
\midrule
Clean & BF16 & \ensuremath{89.05\,{\scriptstyle\pm\,1.09}} & \ensuremath{89.29\,{\scriptstyle\pm\,1.24}} & \ensuremath{90.48\,{\scriptstyle\pm\,2.06}} & \ensuremath{\mathbf{94.05}\,{\scriptstyle\pm\,0.82}} & \ensuremath{86.07\,{\scriptstyle\pm\,1.55}} & \ensuremath{88.81\,{\scriptstyle\pm\,1.97}} & \ensuremath{83.08\,{\scriptstyle\pm\,2.62}} & \ensuremath{\mathbf{89.55}\,{\scriptstyle\pm\,0.00}} \\
\midrule
Gaussian & $\sigma=0.02$ & \ensuremath{89.52\,{\scriptstyle\pm\,2.70}} & \ensuremath{90.24\,{\scriptstyle\pm\,0.82}} & \ensuremath{88.33\,{\scriptstyle\pm\,0.41}} & \ensuremath{\mathbf{93.33}\,{\scriptstyle\pm\,1.65}} & \ensuremath{87.31\,{\scriptstyle\pm\,0.75}} & \ensuremath{89.80\,{\scriptstyle\pm\,0.43}} & \ensuremath{83.58\,{\scriptstyle\pm\,1.49}} & \ensuremath{\mathbf{92.04}\,{\scriptstyle\pm\,1.14}} \\
 & $\sigma=0.05$ & \ensuremath{86.90\,{\scriptstyle\pm\,1.80}} & \ensuremath{90.48\,{\scriptstyle\pm\,1.49}} & \ensuremath{89.29\,{\scriptstyle\pm\,1.43}} & \ensuremath{\mathbf{92.14}\,{\scriptstyle\pm\,1.43}} & \ensuremath{87.56\,{\scriptstyle\pm\,1.55}} & \ensuremath{88.56\,{\scriptstyle\pm\,1.14}} & \ensuremath{84.08\,{\scriptstyle\pm\,2.83}} & \ensuremath{\mathbf{90.05}\,{\scriptstyle\pm\,0.43}} \\
 & $\sigma=0.10$ & \ensuremath{86.90\,{\scriptstyle\pm\,2.89}} & \ensuremath{85.95\,{\scriptstyle\pm\,1.49}} & \ensuremath{88.33\,{\scriptstyle\pm\,1.80}} & \ensuremath{\mathbf{90.48}\,{\scriptstyle\pm\,2.30}} & \ensuremath{82.59\,{\scriptstyle\pm\,3.02}} & \ensuremath{85.82\,{\scriptstyle\pm\,1.49}} & \ensuremath{83.83\,{\scriptstyle\pm\,0.43}} & \ensuremath{\mathbf{89.05}\,{\scriptstyle\pm\,2.28}} \\
 & $\sigma=0.15$ & \ensuremath{77.62\,{\scriptstyle\pm\,1.65}} & \ensuremath{82.86\,{\scriptstyle\pm\,2.14}} & \ensuremath{\mathbf{85.71}\,{\scriptstyle\pm\,3.11}} & \ensuremath{\mathbf{85.71}\,{\scriptstyle\pm\,2.14}} & \ensuremath{82.09\,{\scriptstyle\pm\,0.75}} & \ensuremath{80.10\,{\scriptstyle\pm\,1.88}} & \ensuremath{77.36\,{\scriptstyle\pm\,2.28}} & \ensuremath{\mathbf{86.57}\,{\scriptstyle\pm\,4.48}} \\
 & $\sigma=0.20$ & \ensuremath{65.48\,{\scriptstyle\pm\,2.89}} & \ensuremath{71.67\,{\scriptstyle\pm\,1.09}} & \ensuremath{67.14\,{\scriptstyle\pm\,3.11}} & \ensuremath{\mathbf{76.90}\,{\scriptstyle\pm\,1.65}} & \ensuremath{64.43\,{\scriptstyle\pm\,3.02}} & \ensuremath{72.89\,{\scriptstyle\pm\,1.55}} & \ensuremath{66.42\,{\scriptstyle\pm\,2.59}} & \ensuremath{\mathbf{74.88}\,{\scriptstyle\pm\,0.43}} \\
\midrule
Dropout & $p=0.01$ & \ensuremath{84.76\,{\scriptstyle\pm\,1.80}} & \ensuremath{87.86\,{\scriptstyle\pm\,1.89}} & \ensuremath{88.33\,{\scriptstyle\pm\,1.09}} & \ensuremath{\mathbf{89.29}\,{\scriptstyle\pm\,1.89}} & \ensuremath{81.09\,{\scriptstyle\pm\,3.11}} & \ensuremath{85.07\,{\scriptstyle\pm\,3.88}} & \ensuremath{81.84\,{\scriptstyle\pm\,3.11}} & \ensuremath{\mathbf{88.56}\,{\scriptstyle\pm\,1.55}} \\
 & $p=0.02$ & \ensuremath{79.05\,{\scriptstyle\pm\,0.41}} & \ensuremath{82.62\,{\scriptstyle\pm\,2.70}} & \ensuremath{83.57\,{\scriptstyle\pm\,0.71}} & \ensuremath{\mathbf{87.14}\,{\scriptstyle\pm\,0.71}} & \ensuremath{80.60\,{\scriptstyle\pm\,1.29}} & \ensuremath{81.09\,{\scriptstyle\pm\,4.11}} & \ensuremath{79.35\,{\scriptstyle\pm\,2.62}} & \ensuremath{\mathbf{85.07}\,{\scriptstyle\pm\,1.49}} \\
 & $p=0.03$ & \ensuremath{73.10\,{\scriptstyle\pm\,1.49}} & \ensuremath{75.48\,{\scriptstyle\pm\,5.07}} & \ensuremath{75.24\,{\scriptstyle\pm\,2.51}} & \ensuremath{\mathbf{80.24}\,{\scriptstyle\pm\,1.49}} & \ensuremath{69.90\,{\scriptstyle\pm\,1.14}} & \ensuremath{72.14\,{\scriptstyle\pm\,8.22}} & \ensuremath{73.38\,{\scriptstyle\pm\,4.97}} & \ensuremath{\mathbf{79.60}\,{\scriptstyle\pm\,2.83}} \\
 & $p=0.04$ & \ensuremath{63.81\,{\scriptstyle\pm\,2.70}} & \ensuremath{68.81\,{\scriptstyle\pm\,3.38}} & \ensuremath{62.86\,{\scriptstyle\pm\,1.43}} & \ensuremath{\mathbf{73.10}\,{\scriptstyle\pm\,4.06}} & \ensuremath{63.18\,{\scriptstyle\pm\,4.56}} & \ensuremath{62.69\,{\scriptstyle\pm\,0.75}} & \ensuremath{63.43\,{\scriptstyle\pm\,3.73}} & \ensuremath{\mathbf{73.63}\,{\scriptstyle\pm\,1.14}} \\
 & $p=0.05$ & \ensuremath{52.62\,{\scriptstyle\pm\,1.09}} & \ensuremath{58.33\,{\scriptstyle\pm\,3.67}} & \ensuremath{55.24\,{\scriptstyle\pm\,2.30}} & \ensuremath{\mathbf{65.00}\,{\scriptstyle\pm\,4.34}} & \ensuremath{56.72\,{\scriptstyle\pm\,3.95}} & \ensuremath{57.21\,{\scriptstyle\pm\,2.40}} & \ensuremath{55.97\,{\scriptstyle\pm\,2.24}} & \ensuremath{\mathbf{64.18}\,{\scriptstyle\pm\,3.95}} \\
\midrule
Structured pruning & $r=0.05$ & \ensuremath{86.67\,{\scriptstyle\pm\,0.82}} & \ensuremath{85.24\,{\scriptstyle\pm\,2.51}} & \ensuremath{87.86\,{\scriptstyle\pm\,0.71}} & \ensuremath{\mathbf{92.86}\,{\scriptstyle\pm\,0.71}} & \ensuremath{80.85\,{\scriptstyle\pm\,2.40}} & \ensuremath{81.59\,{\scriptstyle\pm\,4.37}} & \ensuremath{83.33\,{\scriptstyle\pm\,2.83}} & \ensuremath{\mathbf{85.32}\,{\scriptstyle\pm\,1.55}} \\
 & $r=0.10$ & \ensuremath{72.38\,{\scriptstyle\pm\,3.38}} & \ensuremath{77.86\,{\scriptstyle\pm\,0.71}} & \ensuremath{72.62\,{\scriptstyle\pm\,1.09}} & \ensuremath{\mathbf{83.57}\,{\scriptstyle\pm\,0.71}} & \ensuremath{72.39\,{\scriptstyle\pm\,3.73}} & \ensuremath{76.62\,{\scriptstyle\pm\,1.72}} & \ensuremath{73.38\,{\scriptstyle\pm\,1.14}} & \ensuremath{\mathbf{82.84}\,{\scriptstyle\pm\,1.97}} \\
 & $r=0.15$ & \ensuremath{56.90\,{\scriptstyle\pm\,3.67}} & \ensuremath{65.95\,{\scriptstyle\pm\,1.49}} & \ensuremath{48.81\,{\scriptstyle\pm\,2.51}} & \ensuremath{\mathbf{66.90}\,{\scriptstyle\pm\,3.22}} & \ensuremath{62.44\,{\scriptstyle\pm\,3.37}} & \ensuremath{69.90\,{\scriptstyle\pm\,2.40}} & \ensuremath{50.75\,{\scriptstyle\pm\,6.84}} & \ensuremath{\mathbf{74.13}\,{\scriptstyle\pm\,1.14}} \\
 & $r=0.20$ & \ensuremath{36.19\,{\scriptstyle\pm\,3.93}} & \ensuremath{21.43\,{\scriptstyle\pm\,2.47}} & \ensuremath{16.43\,{\scriptstyle\pm\,2.47}} & \ensuremath{\mathbf{40.71}\,{\scriptstyle\pm\,5.58}} & \ensuremath{47.51\,{\scriptstyle\pm\,6.77}} & \ensuremath{26.12\,{\scriptstyle\pm\,3.42}} & \ensuremath{27.86\,{\scriptstyle\pm\,0.86}} & \ensuremath{\mathbf{53.73}\,{\scriptstyle\pm\,10.10}} \\
\midrule
Unstructured pruning & $r=0.10$ & \ensuremath{88.33\,{\scriptstyle\pm\,1.09}} & \ensuremath{89.76\,{\scriptstyle\pm\,2.97}} & \ensuremath{90.48\,{\scriptstyle\pm\,2.89}} & \ensuremath{\mathbf{92.38}\,{\scriptstyle\pm\,0.82}} & \ensuremath{87.56\,{\scriptstyle\pm\,0.43}} & \ensuremath{87.81\,{\scriptstyle\pm\,3.02}} & \ensuremath{83.83\,{\scriptstyle\pm\,1.14}} & \ensuremath{\mathbf{91.29}\,{\scriptstyle\pm\,1.88}} \\
 & $r=0.30$ & \ensuremath{87.14\,{\scriptstyle\pm\,0.71}} & \ensuremath{87.38\,{\scriptstyle\pm\,1.49}} & \ensuremath{89.05\,{\scriptstyle\pm\,0.41}} & \ensuremath{\mathbf{92.86}\,{\scriptstyle\pm\,1.89}} & \ensuremath{87.56\,{\scriptstyle\pm\,0.43}} & \ensuremath{87.81\,{\scriptstyle\pm\,2.83}} & \ensuremath{83.33\,{\scriptstyle\pm\,1.14}} & \ensuremath{\mathbf{89.05}\,{\scriptstyle\pm\,0.86}} \\
 & $r=0.50$ & \ensuremath{84.29\,{\scriptstyle\pm\,2.58}} & \ensuremath{84.76\,{\scriptstyle\pm\,1.09}} & \ensuremath{84.29\,{\scriptstyle\pm\,2.58}} & \ensuremath{\mathbf{91.90}\,{\scriptstyle\pm\,2.18}} & \ensuremath{84.33\,{\scriptstyle\pm\,0.75}} & \ensuremath{81.84\,{\scriptstyle\pm\,0.86}} & \ensuremath{77.36\,{\scriptstyle\pm\,1.72}} & \ensuremath{\mathbf{88.81}\,{\scriptstyle\pm\,1.97}} \\
 & $r=0.70$ & \ensuremath{41.43\,{\scriptstyle\pm\,4.68}} & \ensuremath{31.43\,{\scriptstyle\pm\,1.24}} & \ensuremath{51.90\,{\scriptstyle\pm\,1.80}} & \ensuremath{\mathbf{53.81}\,{\scriptstyle\pm\,3.52}} & \ensuremath{40.80\,{\scriptstyle\pm\,4.80}} & \ensuremath{34.33\,{\scriptstyle\pm\,3.25}} & \ensuremath{50.00\,{\scriptstyle\pm\,3.73}} & \ensuremath{\mathbf{62.44}\,{\scriptstyle\pm\,2.62}} \\
\midrule
Quantization & INT8 & \ensuremath{89.29\,{\scriptstyle\pm\,1.24}} & \ensuremath{89.76\,{\scriptstyle\pm\,1.80}} & \ensuremath{90.24\,{\scriptstyle\pm\,1.65}} & \ensuremath{\mathbf{91.90}\,{\scriptstyle\pm\,0.82}} & \ensuremath{87.56\,{\scriptstyle\pm\,0.86}} & \ensuremath{87.81\,{\scriptstyle\pm\,1.14}} & \ensuremath{83.33\,{\scriptstyle\pm\,3.02}} & \ensuremath{\mathbf{90.55}\,{\scriptstyle\pm\,1.72}} \\
 & INT6 & \ensuremath{87.38\,{\scriptstyle\pm\,0.82}} & \ensuremath{88.81\,{\scriptstyle\pm\,2.06}} & \ensuremath{89.05\,{\scriptstyle\pm\,1.09}} & \ensuremath{\mathbf{91.67}\,{\scriptstyle\pm\,1.49}} & \ensuremath{86.57\,{\scriptstyle\pm\,1.49}} & \ensuremath{86.07\,{\scriptstyle\pm\,3.02}} & \ensuremath{83.33\,{\scriptstyle\pm\,0.43}} & \ensuremath{\mathbf{90.80}\,{\scriptstyle\pm\,1.55}} \\
 & INT4 & \ensuremath{47.62\,{\scriptstyle\pm\,2.06}} & \ensuremath{42.62\,{\scriptstyle\pm\,5.46}} & \ensuremath{\mathbf{57.86}\,{\scriptstyle\pm\,4.95}} & \ensuremath{49.76\,{\scriptstyle\pm\,6.64}} & \ensuremath{48.01\,{\scriptstyle\pm\,4.37}} & \ensuremath{35.57\,{\scriptstyle\pm\,2.62}} & \ensuremath{\mathbf{61.44}\,{\scriptstyle\pm\,2.28}} & \ensuremath{46.27\,{\scriptstyle\pm\,1.29}} \\
\bottomrule
\end{tabular}%
}
\end{table*}
\clearpage

\begin{table*}[!htbp]
\centering
\small
\setlength{\tabcolsep}{4pt}
\renewcommand{\arraystretch}{1.10}
\caption{WebShop: success rate (\%).}
\label{tab:more-webshop-all-success-rate}
\resizebox{\linewidth}{!}{%
\begin{tabular}{@{}llcccc@{}}
\toprule
Perturbation & Strength & Vanilla & Gaussian & SAM & \textbf{SPrPO} \\
\midrule
Clean & BF16 & \ensuremath{57.00\,{\scriptstyle\pm\,4.39}} & \ensuremath{50.60\,{\scriptstyle\pm\,3.99}} & \ensuremath{52.27\,{\scriptstyle\pm\,4.02}} & \ensuremath{\mathbf{60.00}\,{\scriptstyle\pm\,0.72}} \\
\midrule
Gaussian & $\sigma=0.02$ & \ensuremath{57.00\,{\scriptstyle\pm\,3.82}} & \ensuremath{51.07\,{\scriptstyle\pm\,3.35}} & \ensuremath{52.13\,{\scriptstyle\pm\,2.95}} & \ensuremath{\mathbf{59.73}\,{\scriptstyle\pm\,1.55}} \\
 & $\sigma=0.05$ & \ensuremath{53.47\,{\scriptstyle\pm\,4.88}} & \ensuremath{48.47\,{\scriptstyle\pm\,4.80}} & \ensuremath{49.00\,{\scriptstyle\pm\,2.31}} & \ensuremath{\mathbf{58.67}\,{\scriptstyle\pm\,1.29}} \\
 & $\sigma=0.10$ & \ensuremath{45.87\,{\scriptstyle\pm\,6.65}} & \ensuremath{41.53\,{\scriptstyle\pm\,5.35}} & \ensuremath{40.00\,{\scriptstyle\pm\,4.13}} & \ensuremath{\mathbf{53.07}\,{\scriptstyle\pm\,1.29}} \\
 & $\sigma=0.15$ & \ensuremath{32.73\,{\scriptstyle\pm\,2.95}} & \ensuremath{30.33\,{\scriptstyle\pm\,7.83}} & \ensuremath{27.27\,{\scriptstyle\pm\,3.69}} & \ensuremath{\mathbf{44.33}\,{\scriptstyle\pm\,0.50}} \\
 & $\sigma=0.20$ & \ensuremath{18.73\,{\scriptstyle\pm\,3.03}} & \ensuremath{17.87\,{\scriptstyle\pm\,5.12}} & \ensuremath{15.20\,{\scriptstyle\pm\,2.65}} & \ensuremath{\mathbf{30.13}\,{\scriptstyle\pm\,1.81}} \\
\midrule
Dropout & $p=0.01$ & \ensuremath{44.33\,{\scriptstyle\pm\,5.83}} & \ensuremath{40.67\,{\scriptstyle\pm\,6.30}} & \ensuremath{38.67\,{\scriptstyle\pm\,4.39}} & \ensuremath{\mathbf{54.87}\,{\scriptstyle\pm\,0.42}} \\
 & $p=0.02$ & \ensuremath{35.20\,{\scriptstyle\pm\,4.42}} & \ensuremath{31.80\,{\scriptstyle\pm\,6.93}} & \ensuremath{29.73\,{\scriptstyle\pm\,5.35}} & \ensuremath{\mathbf{46.13}\,{\scriptstyle\pm\,1.81}} \\
 & $p=0.03$ & \ensuremath{25.53\,{\scriptstyle\pm\,5.25}} & \ensuremath{24.07\,{\scriptstyle\pm\,6.20}} & \ensuremath{20.80\,{\scriptstyle\pm\,3.82}} & \ensuremath{\mathbf{38.40}\,{\scriptstyle\pm\,2.80}} \\
 & $p=0.04$ & \ensuremath{18.60\,{\scriptstyle\pm\,2.95}} & \ensuremath{16.27\,{\scriptstyle\pm\,4.91}} & \ensuremath{13.87\,{\scriptstyle\pm\,2.91}} & \ensuremath{\mathbf{29.47}\,{\scriptstyle\pm\,2.58}} \\
 & $p=0.05$ & \ensuremath{12.20\,{\scriptstyle\pm\,3.30}} & \ensuremath{11.33\,{\scriptstyle\pm\,2.60}} & \ensuremath{8.87\,{\scriptstyle\pm\,2.14}} & \ensuremath{\mathbf{20.87}\,{\scriptstyle\pm\,2.00}} \\
\midrule
Structured pruning & $r=0.05$ & \ensuremath{49.93\,{\scriptstyle\pm\,8.60}} & \ensuremath{47.80\,{\scriptstyle\pm\,3.17}} & \ensuremath{45.33\,{\scriptstyle\pm\,2.80}} & \ensuremath{\mathbf{55.80}\,{\scriptstyle\pm\,3.52}} \\
 & $r=0.10$ & \ensuremath{30.13\,{\scriptstyle\pm\,7.80}} & \ensuremath{27.40\,{\scriptstyle\pm\,10.51}} & \ensuremath{20.20\,{\scriptstyle\pm\,5.37}} & \ensuremath{\mathbf{43.87}\,{\scriptstyle\pm\,4.15}} \\
 & $r=0.15$ & \ensuremath{6.33\,{\scriptstyle\pm\,4.05}} & \ensuremath{18.67\,{\scriptstyle\pm\,7.20}} & \ensuremath{6.60\,{\scriptstyle\pm\,2.75}} & \ensuremath{\mathbf{23.27}\,{\scriptstyle\pm\,2.58}} \\
 & $r=0.20$ & \ensuremath{2.27\,{\scriptstyle\pm\,1.86}} & \ensuremath{0.27\,{\scriptstyle\pm\,0.12}} & \ensuremath{0.60\,{\scriptstyle\pm\,0.87}} & \ensuremath{\mathbf{12.40}\,{\scriptstyle\pm\,3.65}} \\
\midrule
Unstructured pruning & $r=0.10$ & \ensuremath{57.07\,{\scriptstyle\pm\,3.90}} & \ensuremath{50.00\,{\scriptstyle\pm\,3.30}} & \ensuremath{53.47\,{\scriptstyle\pm\,3.93}} & \ensuremath{\mathbf{60.20}\,{\scriptstyle\pm\,0.40}} \\
 & $r=0.20$ & \ensuremath{56.33\,{\scriptstyle\pm\,3.44}} & \ensuremath{47.67\,{\scriptstyle\pm\,2.50}} & \ensuremath{50.80\,{\scriptstyle\pm\,3.65}} & \ensuremath{\mathbf{60.00}\,{\scriptstyle\pm\,1.06}} \\
 & $r=0.30$ & \ensuremath{51.53\,{\scriptstyle\pm\,7.06}} & \ensuremath{46.13\,{\scriptstyle\pm\,6.52}} & \ensuremath{43.53\,{\scriptstyle\pm\,5.49}} & \ensuremath{\mathbf{59.53}\,{\scriptstyle\pm\,1.51}} \\
 & $r=0.40$ & \ensuremath{42.00\,{\scriptstyle\pm\,8.31}} & \ensuremath{34.33\,{\scriptstyle\pm\,9.00}} & \ensuremath{29.53\,{\scriptstyle\pm\,5.74}} & \ensuremath{\mathbf{50.20}\,{\scriptstyle\pm\,2.99}} \\
\midrule
Quantization & INT8 & \ensuremath{56.67\,{\scriptstyle\pm\,4.67}} & \ensuremath{50.47\,{\scriptstyle\pm\,3.88}} & \ensuremath{52.27\,{\scriptstyle\pm\,4.05}} & \ensuremath{\mathbf{60.13}\,{\scriptstyle\pm\,0.61}} \\
 & INT6 & \ensuremath{45.20\,{\scriptstyle\pm\,9.18}} & \ensuremath{39.80\,{\scriptstyle\pm\,9.90}} & \ensuremath{34.93\,{\scriptstyle\pm\,4.54}} & \ensuremath{\mathbf{55.67}\,{\scriptstyle\pm\,1.60}} \\
 & INT4 & \ensuremath{9.60\,{\scriptstyle\pm\,2.31}} & \ensuremath{\mathbf{9.87}\,{\scriptstyle\pm\,5.46}} & \ensuremath{2.20\,{\scriptstyle\pm\,0.35}} & \ensuremath{8.20\,{\scriptstyle\pm\,2.42}} \\
\bottomrule
\end{tabular}%
}
\end{table*}

\begingroup
\makeatletter
\setlength{\@fptop}{0pt}
\makeatother
\paragraph{Ablation and hyperparameter results.}
We report WebShop ablations and hyperparameter sweeps below. Vanilla ($\rho=0$) and full SPrPO ($\rho=0.2$, $\alpha=1$) reuse results from Table~\ref{tab:main-robustness}; other configurations use separate ablation evaluation batches.

\begin{table*}[htbp]
\centering
\small
\setlength{\tabcolsep}{4pt}
\renewcommand{\arraystretch}{1.10}
\caption{Hyperparameter study. Success rate (\%).}
\label{tab:more-hyperparameters-success}
\resizebox{\linewidth}{!}{%
\begin{tabular}{@{}lcccccc@{}}
\toprule
Value & Clean & \shortstack{Gaussian\\$\sigma=.20$} & \shortstack{Dropout\\$p=.05$} & \shortstack{Struct.\\$r=.20$} & \shortstack{Unstruct.\\$r=.40$} & INT6 \\
\midrule
\multicolumn{7}{l}{\textit{$\rho$ sweep ($\alpha=1$)}} \\
\midrule
0 & \ensuremath{57.00\,{\scriptstyle\pm\,4.39}} & \ensuremath{18.73\,{\scriptstyle\pm\,3.03}} & \ensuremath{12.20\,{\scriptstyle\pm\,3.30}} & \ensuremath{2.27\,{\scriptstyle\pm\,1.86}} & \ensuremath{42.00\,{\scriptstyle\pm\,8.31}} & \ensuremath{45.20\,{\scriptstyle\pm\,9.18}} \\
0.1 & \ensuremath{55.33\,{\scriptstyle\pm\,5.16}} & \ensuremath{17.67\,{\scriptstyle\pm\,5.52}} & \ensuremath{10.73\,{\scriptstyle\pm\,2.93}} & \ensuremath{4.27\,{\scriptstyle\pm\,4.45}} & \ensuremath{40.80\,{\scriptstyle\pm\,9.71}} & \ensuremath{39.53\,{\scriptstyle\pm\,9.98}} \\
0.2 & \ensuremath{\mathbf{60.00}\,{\scriptstyle\pm\,0.72}} & \ensuremath{\mathbf{30.13}\,{\scriptstyle\pm\,1.81}} & \ensuremath{\mathbf{20.87}\,{\scriptstyle\pm\,2.00}} & \ensuremath{\mathbf{12.40}\,{\scriptstyle\pm\,3.65}} & \ensuremath{\mathbf{50.20}\,{\scriptstyle\pm\,2.99}} & \ensuremath{\mathbf{55.67}\,{\scriptstyle\pm\,1.60}} \\
0.3 & \ensuremath{48.27\,{\scriptstyle\pm\,2.50}} & \ensuremath{25.93\,{\scriptstyle\pm\,2.02}} & \ensuremath{17.00\,{\scriptstyle\pm\,1.97}} & \ensuremath{7.60\,{\scriptstyle\pm\,2.62}} & \ensuremath{40.47\,{\scriptstyle\pm\,3.56}} & \ensuremath{39.27\,{\scriptstyle\pm\,4.75}} \\
\midrule
\multicolumn{7}{l}{\textit{$\alpha$ sweep ($\rho=0.2$)}} \\
\midrule
0 & \ensuremath{48.80\,{\scriptstyle\pm\,0.80}} & \ensuremath{27.07\,{\scriptstyle\pm\,3.72}} & \ensuremath{18.93\,{\scriptstyle\pm\,3.42}} & \ensuremath{11.07\,{\scriptstyle\pm\,6.92}} & \ensuremath{47.07\,{\scriptstyle\pm\,3.11}} & \ensuremath{48.27\,{\scriptstyle\pm\,3.06}} \\
0.5 & \ensuremath{52.87\,{\scriptstyle\pm\,0.61}} & \ensuremath{24.40\,{\scriptstyle\pm\,5.10}} & \ensuremath{16.00\,{\scriptstyle\pm\,5.39}} & \ensuremath{7.07\,{\scriptstyle\pm\,4.47}} & \ensuremath{38.00\,{\scriptstyle\pm\,8.01}} & \ensuremath{47.87\,{\scriptstyle\pm\,3.72}} \\
1 & \ensuremath{\mathbf{60.00}\,{\scriptstyle\pm\,0.72}} & \ensuremath{\mathbf{30.13}\,{\scriptstyle\pm\,1.81}} & \ensuremath{\mathbf{20.87}\,{\scriptstyle\pm\,2.00}} & \ensuremath{\mathbf{12.40}\,{\scriptstyle\pm\,3.65}} & \ensuremath{\mathbf{50.20}\,{\scriptstyle\pm\,2.99}} & \ensuremath{\mathbf{55.67}\,{\scriptstyle\pm\,1.60}} \\
2 & \ensuremath{57.40\,{\scriptstyle\pm\,2.09}} & \ensuremath{26.80\,{\scriptstyle\pm\,4.70}} & \ensuremath{19.87\,{\scriptstyle\pm\,4.36}} & \ensuremath{8.93\,{\scriptstyle\pm\,6.85}} & \ensuremath{46.60\,{\scriptstyle\pm\,4.85}} & \ensuremath{52.40\,{\scriptstyle\pm\,6.13}} \\
\bottomrule
\end{tabular}%
}
\end{table*}

\clearpage

\begin{table*}[!t]
\centering
\small
\setlength{\tabcolsep}{4pt}
\renewcommand{\arraystretch}{1.10}
\caption{Component ablations on WebShop. Success rate (\%).}
\label{tab:more-components}
\resizebox{\linewidth}{!}{%
\begin{tabular}{@{}lcccccc@{}}
\toprule
Training variant & Clean & \shortstack{Gaussian\\$\sigma=.20$} & \shortstack{Dropout\\$p=.05$} & \shortstack{Struct.\\$r=.20$} & \shortstack{Unstruct.\\$r=.40$} & INT6 \\
\midrule
w/o perturbation ($\rho=0$) & \ensuremath{57.00\,{\scriptstyle\pm\,4.39}} & \ensuremath{18.73\,{\scriptstyle\pm\,3.03}} & \ensuremath{12.20\,{\scriptstyle\pm\,3.30}} & \ensuremath{2.27\,{\scriptstyle\pm\,1.86}} & \ensuremath{42.00\,{\scriptstyle\pm\,8.31}} & \ensuremath{45.20\,{\scriptstyle\pm\,9.18}} \\
w/o sensitivity ($\alpha=0$) & \ensuremath{48.80\,{\scriptstyle\pm\,0.80}} & \ensuremath{27.07\,{\scriptstyle\pm\,3.72}} & \ensuremath{18.93\,{\scriptstyle\pm\,3.42}} & \ensuremath{11.07\,{\scriptstyle\pm\,6.92}} & \ensuremath{47.07\,{\scriptstyle\pm\,3.11}} & \ensuremath{48.27\,{\scriptstyle\pm\,3.06}} \\
w/o scheduling & \ensuremath{52.13\,{\scriptstyle\pm\,0.42}} & \ensuremath{26.40\,{\scriptstyle\pm\,4.70}} & \ensuremath{19.33\,{\scriptstyle\pm\,5.32}} & \ensuremath{3.60\,{\scriptstyle\pm\,2.88}} & \ensuremath{45.00\,{\scriptstyle\pm\,4.20}} & \ensuremath{49.73\,{\scriptstyle\pm\,2.42}} \\
\textbf{Full SPrPO} & \ensuremath{\mathbf{60.00}\,{\scriptstyle\pm\,0.72}} & \ensuremath{\mathbf{30.13}\,{\scriptstyle\pm\,1.81}} & \ensuremath{\mathbf{20.87}\,{\scriptstyle\pm\,2.00}} & \ensuremath{\mathbf{12.40}\,{\scriptstyle\pm\,3.65}} & \ensuremath{\mathbf{50.20}\,{\scriptstyle\pm\,2.99}} & \ensuremath{\mathbf{55.67}\,{\scriptstyle\pm\,1.60}} \\
\bottomrule
\end{tabular}%
}
\end{table*}
\clearpage
\endgroup

%% file: appendix/text_examples.tex
\section{Qualitative Examples under Policy Perturbations}
\label{app:text_examples}

In this section, we provide qualitative examples comparing Vanilla and SPrPO on ALFWorld and WebShop under different policy perturbations.

\begingroup
\providecommand{\TEHeaderColor}{6C655E}
\providecommand{\TEHeaderTextColor}{FFFFFF}
\definecolor{TEheader}{HTML}{\TEHeaderColor}
\definecolor{TEheadertext}{HTML}{\TEHeaderTextColor}
\definecolor{TEink}{HTML}{5E5852}
\definecolor{TEline}{HTML}{C9D1D9}
\definecolor{TEmuted}{HTML}{536170}
\definecolor{TEfail}{HTML}{9C4143}
\definecolor{TEsuccess}{HTML}{216D62}
\newcounter{TEcase}
\newcommand{\TEraw}[1]{{\ttfamily\detokenize{#1}}}
\newcommand{\TEgap}[1]{\par\smallskip{\color{TEmuted}\itshape #1}\par\smallskip}
\newcommand{\TEobs}[1]{\par{\color{TEmuted}\textit{Feedback:} ``#1''}\par}
\newcommand{\TEstep}[2]{\par\smallskip\textbf{Step #1}\par\nopagebreak\TEraw{#2}\par}
\newcommand{\TEnote}[1]{\par\smallskip{\color{TEmuted}#1}\par}
\newcommand{\TEfinding}[1]{%
  \begin{tcolorbox}[colback=TEink!3,colframe=TEline,boxrule=0.4pt,
    arc=1mm,left=6pt,right=6pt,top=5pt,bottom=5pt,before skip=8pt,after skip=0pt]
    \textbf{What this shows.} #1
  \end{tcolorbox}}
\newcommand{\TEcontext}[2]{%
  \begin{tcolorbox}[colback=TEink!3,colframe=TEline,boxrule=0.4pt,
    arc=1mm,left=6pt,right=6pt,top=5pt,bottom=5pt,before skip=7pt,after skip=7pt]
    \textbf{Shared decision context --- #1}\par #2
  \end{tcolorbox}}
\newenvironment{TEcasebox}[3]{%
  \refstepcounter{TEcase}\label{#1}%
  \begin{tcolorbox}[enhanced,breakable=false,colback=white,colframe=TEline,
    colbacktitle=TEheader,coltitle=TEheadertext,boxrule=0.6pt,arc=1.5mm,
    left=8pt,right=8pt,top=7pt,bottom=7pt,
    fonttitle=\bfseries,title={Case \theTEcase\quad #2},
    before skip=9pt,after skip=7pt]
    \fontsize{9}{11}\selectfont\raggedright
    \textcolor{TEmuted}{#3}\par\smallskip
}{\end{tcolorbox}}
\newenvironment{TEpanel}[3]{%
  \begin{minipage}[t]{0.485\linewidth}\vspace{0pt}%
  \begin{tcolorbox}[colback=#1!3,colframe=#1!40,colbacktitle=#1!12,
    coltitle=#1,boxrule=0.4pt,arc=1mm,left=6pt,right=6pt,top=5pt,bottom=6pt,
    fonttitle=\bfseries,title={#2\hfill #3},before skip=0pt,after skip=0pt]
  \fontsize{9}{11}\selectfont\raggedright
}{\end{tcolorbox}\end{minipage}}

\begin{TEcasebox}{case:format}{ALFWorld: malformed action tags}
  {\textbf{Perturbation:} Gaussian noise $\sigma=0.2$ \enspace|\enspace ID}
\textbf{Task.} put two soapbar in garbagecan.\par
\textbf{Without perturbation.} Both succeed: Vanilla 8 steps; SPrPO 9 steps.
\TEcontext{before Step 2}{%
Both first go to countertop 1, where the observation lists
``a soapbar 4, a soapbar 1''. Both unperturbed controls also receive
this identical complete input.}
\noindent
\begin{TEpanel}{TEfail}{Vanilla}{FAIL}
\TEstep{2}{<take soapbar 1 from countertop 1>}
\TEobs{Nothing happens.}
\TEnote{Required action tags are missing.}
\TEstep{3}{<action>take soapbar 1 from countertop 1</action>}
\TEobs{You pick up the soapbar 1 from the countertop 1.}
\TEstep{4}{<move soapbar 1 to garbagecan 1>}
\TEobs{Nothing happens.}
\TEgap{Step 5 omitted: goes to cabinet 1.}
\TEstep{6--7 (same output twice)}{<open cabinet 1>}
\TEstep{8}{<examine cabinet 1>}
\TEstep{9}{<go to garbagecan 1>}
\medskip\textbf{Outcome: failure.}\par
6 of the 9 recorded outputs have malformed tags.
\end{TEpanel}\hfill
\begin{TEpanel}{TEsuccess}{SPrPO}{SUCCESS}
\TEstep{2}{<action>take soapbar 4 from countertop 1</action>}
\TEobs{You pick up the soapbar 4 from the countertop 1.}
\TEgap{Step 3 omitted: goes to garbagecan 1.}
\TEstep{4}{<action>move soapbar 4 to garbagecan 1</action>}
\TEobs{You move the soapbar 4 to the garbagecan 1.}
\TEgap{Steps 5--6 omitted: visits toilet 1, then returns to countertop 1.}
\TEstep{7}{<action>take soapbar 1 from countertop 1</action>}
\TEobs{You pick up the soapbar 1 from the countertop 1.}
\TEgap{Step 8 omitted: goes to garbagecan 1.}
\TEstep{9}{<action>move soapbar 1 to garbagecan 1</action>}
\medskip\textbf{Outcome: success in 9 steps.}\par
All 9 outputs retain valid action tags; both soapbars reach the garbagecan.
\end{TEpanel}
\TEfinding{Vanilla briefly recovers at Step 3 but repeatedly loses the required
format. Its first failed command names an admissible action, accepted in a
parser-only check after restoring the tags. This identifies an interface failure,
without establishing that a tag repair would rescue the whole episode.
SPrPO preserves the format and completes the task.}
\end{TEcasebox}

\clearpage
\begin{TEcasebox}{case:subgoal}{ALFWorld: undoing subgoal progress}
  {\textbf{Perturbation:} Dropout $p=0.01$ \enspace|\enspace OOD}
\textbf{Task.} find two cd and put them in safe.\par
\textbf{Without perturbation.} Both succeed: Vanilla 18 steps; SPrPO 20 steps.
\TEcontext{before Step 3}{%
Both visit desk 1 and take cd 1 in Steps 1--2. Current observation:
``You pick up the cd 1 from the desk 1.''
The task still requires placing \emph{both} CDs in the safe.}
\noindent
\begin{TEpanel}{TEfail}{Vanilla}{FAIL}
\TEstep{3}{go to shelf 1}
\TEstep{4}{move cd 1 to shelf 1}
\TEobs{You move the cd 1 to the shelf 1.}
\TEgap{Steps 5--8 omitted: takes cd 2 from desk 2, reaches and opens the safe.}
\TEstep{9}{move cd 2 to safe 1}
\TEobs{You move the cd 2 to the safe 1.}
\TEgap{Steps 10--18 omitted: retrieves cd 1, leaves it on shelf 5, returns to the safe.}
\TEstep{19}{take cd 2 from safe 1}
\TEstep{20}{move cd 2 to safe 1}
\TEnote{Retrieves and replaces an already delivered CD.}
\TEgap{Steps 21--37 omitted: further relocations and repeated visits.}
\TEstep{38}{move cd 1 to shelf 4}
\TEobs{You move the cd 1 to the shelf 4.}
\TEgap{Steps 39--49 omitted: removes cd 2 again, leaves it on shelf 6, then retrieves it.}
\TEstep{50}{move cd 2 to safe 1}
\medskip\textbf{Outcome: failure at the 50-step limit.}\par
cd 1 remains on shelf 4; only cd 2 is in the safe.
\end{TEpanel}\hfill
\begin{TEpanel}{TEsuccess}{SPrPO}{SUCCESS}
\TEstep{3}{go to safe 1}
\TEstep{4}{open safe 1}
\TEstep{5}{move cd 1 to safe 1}
\TEobs{You move the cd 1 to the safe 1.}
\TEnote{First delivery is complete.}
\TEstep{6}{go to desk 2}
\TEstep{7}{take cd 2 from desk 2}
\TEobs{You pick up the cd 2 from the desk 2.}
\TEstep{8}{go to safe 1}
\TEnote{Observation before Step 9:}
``The safe 1 is open. In it, you see a cd 1.''
\TEstep{9}{move cd 2 to safe 1}
\medskip\textbf{Outcome: success in 9 steps.}\par
Both CDs are in the safe. The first delivery is preserved while the second is completed.
\end{TEpanel}
\TEfinding{The failure is a sequence of executable but unproductive actions:
Vanilla moves cd 1 among shelves and repeatedly removes cd 2 from the target
container. Temporary storage alone does not establish failure; the later
relocations and final state show that the two-object goal remains incomplete.
SPrPO completes the two deliveries in sequence without undoing the first.}
\end{TEcasebox}

\clearpage
\begin{TEcasebox}{case:pruning}{WebShop: invalid click targets after pruning}
  {\textbf{Perturbation:} Structured pruning, ratio $0.20$ \enspace|\enspace goal 309}
\textbf{Task.} Find me eco friendly, heavy duty, assembly required, easy clean
file cabinets with size: 23.6"w x 15.7"d x 29.3"h, and price lower than 200.00 dollars\par
\textbf{Without perturbation.} Both succeed in 4 steps (task score 1.0).
\TEcontext{before Step 5}{%
After different search queries (Step 1 omitted), both open product
\TEraw{B08MFGNXW3} at Step 2 and select
\TEraw{23.6"w x 15.7"d x 29.3"h} at Steps 3 and 4.
Thus, \emph{both have already repeated the size selection once}.
Page excerpts: ``DEVAISE 2-Drawer Wood Lateral File Cabinet with Lock for Office
Home, Black''; ``Price: \$169.99''; ``Buy Now''.}
\noindent
\begin{TEpanel}{TEfail}{Vanilla}{FAIL}
\TEstep{5}{click[23.6"w x 15.7"d x 29.3"h]}
\TEnote{Selects the same size a third time.}
\TEstep{6}{click[back to search]}
\TEstep{7}{search[eco friendly, heavy duty, assembly required file cabinets]}
\TEnote{A results page lists product IDs and navigation links.}
\TEstep{8}{click[eco friendly, heavy duty, assembly required file cabinets]}
\TEnote{The query text is not a clickable target. The results page does not change.}
\TEgap{Steps 9--10 omitted: returns to search and issues a shorter query.}
\TEstep{11--15 (same action five times)}{click[eco friendly, heavy duty file cabinets]}
\TEnote{Again clicks query text that is absent from the available targets.}
\medskip\textbf{Outcome: failure at the 15-step limit.}\par
No purchase; task score 0.0.
\end{TEpanel}\hfill
\begin{TEpanel}{TEsuccess}{SPrPO}{SUCCESS}
\TEstep{5}{click[Buy Now]}
\medskip\textbf{Outcome: success in 5 steps.}\par
Task score 1.0.
\TEnote{Purchases the already selected product with the requested size.
The episode ends here.}
\medskip\textbf{Decision contrast.}\par
At the identical Step-5 input, SPrPO proceeds to purchase. Vanilla first
repeats the option selection, then abandons the product and repeatedly clicks
search-query text.
\end{TEpanel}
\TEfinding{Vanilla reaches the product page and selects the requested size,
but fails to finish the transaction. Its later commands use a valid
\TEraw{click[...]} structure with targets that the page does not provide.
SPrPO also repeats the size selection once, but then buys.}
\end{TEcasebox}

\clearpage
\begin{TEcasebox}{case:quantization}{WebShop: option repetition without purchase}
  {\textbf{Perturbation:} INT6 weight quantization (RTN) \enspace|\enspace goal 295}
\textbf{Task.} Find me loose fit, day comfort, hand wash women's tops, tees \&
blouses with short sleeve, polyester spandex for teen girls with color: a6-black,
and size: xx-large, and price lower than 40.00 dollars\par
\textbf{Without perturbation.} Both succeed in 5 steps (task score 1.0).
\TEcontext{before Step 5}{%
After their search queries (Step 1 omitted), both execute the same action text
in Steps 2--4:\par
\TEraw{click[B09NPML43M]}\par
\TEraw{click[a6-black]}\par
\TEraw{click[xx-large]}\par
Both requested options have been selected. The product page still offers
``xx-large'' and ``Buy Now'', and lists ``Price: \$1.01 to \$1.74''.}
\noindent
\begin{TEpanel}{TEfail}{Vanilla}{FAIL}
\TEstep{5}{click[xx-large]}
\TEnote{Repeats the selected size instead of buying.}
\TEstep{6--15 (same action ten times)}{click[xx-large]}
\TEnote{The product-page observation remains unchanged throughout this loop.}
\medskip\textbf{Outcome: failure at the 15-step limit.}\par
12 consecutive size clicks in total (Steps 4--15); no purchase; task score 0.0.
\end{TEpanel}\hfill
\begin{TEpanel}{TEsuccess}{SPrPO}{SUCCESS}
\TEstep{5}{click[Buy Now]}
\medskip\textbf{Outcome: success in 5 steps.}\par
Task score 1.0.
\TEnote{Purchases after selecting the requested color and size.
The episode ends here.}
\medskip\textbf{Decision contrast.}\par
The same Step-5 input leads to another size click for Vanilla and the
terminal purchase action for SPrPO.
\end{TEpanel}
\TEfinding{Vanilla reaches the product and selects both requested options,
but then repeatedly clicks the already selected size until the step limit.
Its actions remain well formed and refer to an option on the page, yet never
complete the transaction. SPrPO proceeds to purchase from the same decision input.}
\end{TEcasebox}

\endgroup